\def\tr{yes}
\newtheorem{theorem}{Theorem}
\newtheorem{lemma}{Lemma}
\newtheorem{definition}{Definition}
\newcommand{\bigsqcapn}{\bigsqcap\nolimits}
\newcommand{\comment}[1]{}
\newcommand{\tup}[1]{\langle #1 \rangle}
\newcommand{\iinv}[2]{\llbracket #1, #2 \rrbracket}
\newcommand{\liinv}[2]{\left\llbracket #1, #2 \right\rrbracket}
\newcommand{\card}[1]{\# #1}
\newcommand{\powerset}[1]{2^{#1}}
\newcommand{\Naturals}{\ensuremath{\mathbb{N}}\xspace}
\newcommand{\PTime}{\textsc{P}\xspace}
\newcommand{\NP}{\textsc{NP}\xspace}
\newcommand{\ExpTime}{\textsc{ExpTime}\xspace}
\newcommand{\NExpTime}{\textsc{NExpTime}\xspace}
\newcommand{\coNExpTime}{\textsc{coNExpTime}\xspace}
\newcommand{\TwoExpTime}{\textsc{2ExpTime}\xspace}
\newcommand{\NTwoExpTime}{\textsc{N2ExpTime}\xspace}
\newcommand{\NC}{\ensuremath{\mathsf{N_C}}\xspace}
\newcommand{\NR}{\ensuremath{\mathsf{N_R}}\xspace}
\newcommand{\NI}{\ensuremath{\mathsf{N_I}}\xspace}
\newcommand{\Amc}{\ensuremath{\mathcal{A}}\xspace}
\newcommand{\Hmc}{\ensuremath{\mathcal{H}}\xspace}
\newcommand{\Imc}{\ensuremath{\mathcal{I}}\xspace}
\newcommand{\Jmc}{\ensuremath{\mathcal{J}}\xspace}
\newcommand{\Kmc}{\ensuremath{\mathcal{K}}\xspace}
\newcommand{\Lmc}{\ensuremath{\mathcal{L}}\xspace}
\newcommand{\Tmc}{\ensuremath{\mathcal{T}}\xspace}
\newcommand{\EL}{\ensuremath{\mathcal{EL}}\xspace}
\newcommand{\ELbot}{\ensuremath{\EL_\bot}\xspace}
\newcommand{\ALC}{\ensuremath{\mathcal{ALC}}\xspace}
\newcommand{\ALCI}{\ensuremath{\mathcal{ALCI}}\xspace}
\newcommand{\ALCF}{\ensuremath{\mathcal{ALCF}}\xspace}
\newcommand{\ALCQ}{\ensuremath{\mathcal{ALCQ}}\xspace}
\newcommand{\ALCIQ}{\ensuremath{\mathcal{ALCQI}}\xspace}
\newcommand{\ALCQI}{\ensuremath{\mathcal{ALCQI}}\xspace}
\newcommand{\SHIQ}{\ensuremath{\mathcal{SHIQ}}\xspace}
\newcommand{\HornSHIQ}{Horn-\SHIQ\xspace}
\newcommand{\ind}[1]{\textsf{ind}(#1)}
\newcommand{\sub}[1]{\textsf{sub}(#1)}
\newcommand{\sig}[1]{\textsf{sig}(#1)}
\newcommand{\size}[1]{\textsf{size}(#1)}
\newcommand{\AbductionProblem}{\mathfrak{A}}
\newcommand{\blank}{\cancel{b}}
\newcommand{\type}[1]{{\textnormal{tp}(#1)}}
\newcommand{\typeI}[2]{\textnormal{tp}(#1)_{#2}}
\newcommand{\types}[1]{T_{#1}}
\newcommand{\sucCand}[3]{\textnormal{S}_{#1}^{#3}(#2)}
\newcommand{\Bit}{\textsf{Bit}}
\newcommand{\NBit}{\textsf{NBit}}
\newcommand{\Flip}{\textsf{Flip}}
\newcommand{\Init}{\textsf{Init}}
\newcommand{\Error}{\textsf{Error}}
\newcommand{\Goal}{\textsf{Goal}}
\title{
Signature-Based
Abduction with Fresh Individuals and Complex Concepts for 
Description Logics
\ifdefined\tr
(Extended Version) 
}
\author{
    Patrick Koopmann
    \affiliations
    Institute for Theoretical Computer Science, Technische Universit\"at Dresden
    \emails
    patrick.koopmann@tu-dresden.de
}
\newcommand{\customLabel}[2]{%
   \protected@write \@auxout {}{\string \newlabel {#1}{{#2}{\thepage}{#2}{#1}{}} }%
   \hypertarget{#1}{}
}
\begin{document}

\maketitle

\todo[inline]{
Decisions:

Observation always $\Phi$
}

\todo[inline]{
Todo later:


- use $\mathfrak{R}$ for role assignments 

- mathsf for all DL names




}

\begin{abstract}
  Given a knowledge base and an observation as a set of facts, 
ABox abduction aims at computing a \emph{hypothesis} that, when added to the 
knowledge base, is sufficient to entail the observation. In signature-based 
ABox abduction, the hypothesis is further required to use only names from a 
given set. This form of abduction has applications such as diagnosis, KB 
repair, or explaining missing entailments. It is possible that hypotheses for 
a given observation only exist if we admit the use of fresh individuals
and/or complex concepts built from the given signature, something most 
approaches %
so far do not support or only support
with restrictions. In this paper, we investigate the computational complexity 
of this form of abduction---allowing either fresh individuals, complex concepts, 
or both---for various description logics, and give size bounds on the hypotheses 
if they exist.

\end{abstract}

\section{Introduction}

Description logics (DLs) are a powerful formalism to describe knowledge bases 
(KBs) containing both general domain knowledge from a DL ontology and a set of 
facts (the ABox). Using a DL reasoner, we can then infer information that is 
implicit in the data, and can be logically deduced based on the 
ontology \cite{DL_TEXTBOOK}. Sometimes it is useful to not only reason about 
what logically follows from a DL KB, but to reason also about what does 
\emph{not} follow. 
In abduction, we are given a KB as background knowledge, in combination with a 
set of facts (the observation) that cannot be deduced from the background 
knowledge. We are then looking for the missing piece in the background knowledge 
(the hypothesis) that is needed to make the observation logically 
entailed \cite{FOUNDATIONS_DL_ABDUCTION}. This form of reasoning has many 
applications: 1) it 
can be used to \emph{explain} why something cannot be 
deduced \cite{DL_LITE_ABDUCTION}, 2) it can be used for \emph{diagnosis} tasks, 
giving the hypothesis as possible explanation for an unexpected
observation \cite{ABoxAbductionMedicalDiagnosis}, and 3) it can be used in 
\emph{KB repair} to give hints on how to fix 
missing entailments \cite{WeiKleinerDragisicLambrix2014}.

\comment{
Description logics (DLs) are a family of languages that can be used to formalise domain knowledge 
in an ontology or a knowledge base (KB). 
DL reasoning systems allow to perform a variety of deductive reasoning tasks on those ontologies in 
order to infer a variety of logical consequences from an ontology or a KB. It is generally agreed 
that merely presenting the results of reasoning to the end-user is often not sufficient, but 
additionally, an \emph{explanation} of the result to the end-user is needed to improve trust in the 
system, better understand the reasoning, and to identify modelling errors in the ontology. While 
logical consequences of an ontology can be explained by means of 
\emph{justifications} \cite{JUSTIFICATIONS_HORRIDGE_PHD} or 
formal proofs \cite{PROOFS}, often it 
is also required to explain why something does \emph{not} follow from an ontology. Approaches to 
explain such non-consequences are by providing a counter example by means of a logical 
interpretation \cite{EXPLAIN_USING_MODEL}, or by using 
\emph{abduction} \cite{ABDUCTION}. Abduction is 
the task of computing, given that a statement $\Psi$ does not follow from a knowledge base $\Kmc$, a 
\emph{missing piece} that, when added to $\Kmc$, would re-establish the entailment. In symbols, 
given $\Kmc\not\models\Psi$, we are looking for a set $\Hmc$, called \emph{hypothesis} of 
statements s.t. $\Kmc\cup\Hmc\models\Psi$.
\todo{Story might be more convincing if we consider diagnosis as first task?}

In addition to explaining the absence of answers, abductive reasoning as a variety of other use 
cases: \emph{Diagnosis}: Given an observation $\Psi$, such as a list of 
symptoms, and a set $\Kmc$ of background knowledge, for example of medical nature, $\Hmc$ can be 
seen as a possible explanation for why $\Psi$ was 
caused. This use of ABox abduction for medical diagnosis has for instance 
been performed in \cite{ABoxAbductionMedicalDiagnosis}.
\emph{Ontology 
repair}: rather than just understanding why an intended entailment $\Psi$ does not follow from our 
KB $\Kmc$, the hypothesis $\Hmc$ even tells us which axioms could be used to fix 
this \cite{WeiKleinerDragisicLambrix2014}.
}
\newcommand{\Unstable}{\textsf{U}}
\newcommand{\GypsumFormation}{\textsf{E}}
\newcommand{\borders}{\textsf{b}}
\newcommand{\protectiveBarrier}{\textsf{b}}
\newcommand{\ab}{\textsf{ab}}

As a simplified application example from the geology domain, assume we have 
observed that in an area near a canal, holes appeared in the street as a result 
of subsidence due to an unstable ground. A possible explanation could involve 
the presence of a formation of so-called \emph{evaporite} below the street, 
which dissolves when in contact with water \cite{EVAPORITE}. Our background 
knowledge consists of 
a geology ontology together with data about the area. Among others, 
it contains the following abbreviated axioms:
 \begin{align*}
  &1.\ \text{EvaFor} \sqcap \exists \text{bord}.(\text{Wat}
  \sqcap\neg\exists\text{lin}.\text{WatPro}) \sqsubseteq 
  \exists \text{aff}.\text{Dis}\\
  &2.\ \text{EvaFor} \sqcap \exists \text{aff}.\text{Dis} 
  \sqsubseteq \forall \text{abov}.\text{Unst}\\
 &3.\ (\text{Wat}\sqcup\text{Str}) \sqcap \text{EvaFo}\sqsubseteq\bot \qquad
 4.\ \text{Wat}(\textit{can}) \qquad
 5.\ \text{Str}(\textit{str})
  \end{align*}
which state that 1.~an \textbf{Eva}porite \textbf{For}mation %
which \textbf{bo}rders to a 
\textbf{Wat}erway 
without \textbf{Wat}er-\textbf{Pro}of \textbf{lin}ing
will be \textbf{aff}ected by \textbf{Dis}solution; 2. All ground 
\textbf{abov}e 
an evaporite formation affected by dissolution is \textbf{Uns}table; 
3. waterways and \textbf{Str}eets are not evaporite formations; 
4. $\textit{can}$ is a waterway; 5. $\textit{str}$ is a 
street. 
Our observation would be that the street is unstable: 
$\text{Unst}(a_1)$, and a hypothesis based on our background knowledge would be
\begin{gather*}
 \Hmc = \{\ \text{EvaFor}(e),\ \text{abov}(e,\textit{str}),\ 
 \text{bord}(e,\textit{can}),\ 
 \forall\text{lin}.\bot(\textit{can}) \ \}
\end{gather*}
stating that there is an evaporite formation $e$ below the street that 
borders with the canal, and that the canal has no lining. A team of geologists 
can then verify the hypothesis by analysing the canal and the ground below the 
street. We highlight two aspects of this hypothesis: 1) it refers to a 
previously 
unknown individual, the evaporite formation, and 2) it uses a complex 
(composed) DL concept ($\forall\text{lin}.\bot$). We are interested 
in hypotheses like this for \emph{signature based abduction}, where we are 
additionally given a 
signature $\Sigma$ of \emph{abducibles}---a vocabulary of names to be used 
within the hypothesis \cite{OUR_KB_ABDUCTION}. The aim of $\Sigma$ is to 
restrict to hypotheses that have explanatory character. In 
the present example, we would exclude $\textit{aff}$ and $\textit{Dis}$ 
from~$\Sigma$, as the dissolution alone would be a too shallow explanation, and 
$\textit{Wat}$, because we already know the waterways in the area.
Furthermore, we are looking 
at \emph{ABox abduction}, in which %
observations and hypotheses are ABoxes, 
in contrast to TBox 
abduction \cite{du2017practical,WeiKleinerDragisicLambrix2014}, KB abduction
\cite{OUR_KB_ABDUCTION,FOUNDATIONS_DL_ABDUCTION} or concept 
abduction \cite{bienvenu2008complexity}.

While there are practical approaches to ABox abduction without 
signature-restriction \cite{KlarmanABox2011,HallandBritzABox2012,%
PukancovaHomola2017}, works on signature-based ABox abduction often 
restrict hypotheses to flat ABoxes with a given set of 
individuals \cite{EX_RULES_ABDUCTION,DuQiShenPan2012}---which essentially means 
that statements in a hypothesis can be picked from a finite set---or 
they restrict to \emph{rewritable} DLs which have limited 
expressivity \cite{DuWangShen2014,DL_LITE_ABDUCTION}.
As with DLs, we usually have the open-world semantics, in which 
not all individuals are known, and DLs offer much more expressivity, it is a 
natural next step to look also at abduction allowing for 
fresh individuals and complex concepts in the result. This changes the nature 
of the abduction problem drastically as there 
is now an unbounded set of axioms that may occur in a hypothesis. Problems 
such as ``\emph{Does axiom 
$\alpha$ belong to some/every/an optimal 
solution?}'' \cite{DL_LITE_ABDUCTION,EX_RULES_ABDUCTION} %
become
less helpful while new questions become interesting, such as whether we can 
give bounds on the number of individuals in a hypothesis, or on the 
overall size of the hypothesis.

Without understanding the theoretical properties yet, practical methods for 
signature-based abduction that admit expressive DL concepts in the hypothesis 
are presented in \cite{OUR_KB_ABDUCTION,WARREN_ABOX_ABDUCTION}.
The authors there consider hypotheses that we would call
\emph{complete} in the sense that they cover all hypotheses at the 
same time. To make this possible, solutions are represented in a very 
expressive DL using non-classical operators 
such as fixpoints and axiom disjunction. In this setting, ABox abduction can be 
reduced to uniform interpolation, which however may produce solutions that are 
triple exponentially large 
\cite{FORGETTING-ZHAO,FOUNDATIONS_UI}. A natural question is 
whether this blow-up is really necessary, or whether we can obtain smaller or 
simpler hypotheses if we drop the requirement of completeness and look for 
hypotheses in a classical DL
that is sufficient to entail the 
observation. 

\comment{
It is relatively common to restrict the signature of the hypotheses by a set of 
abducibles, leading to a problem called \emph{signature-based 
abduction} \cite{OUR_KB_ABDUCTION}. Moreover, we are focussing on \emph{ABox 
abduction}, where both the observation and the hypothesis are collections of 
ABox statements, which contrasts it from \emph{TBox abduction} and \emph{KB 
abduction} \cite{FOUNDATIONS_DL_ABDUCTION} investigated for example 
in \cite{OUR_KB_ABDUCTION}. There is a variety of research that focuses on 
signature-based ABox abduction.

\todo{Describe relevant related work where }

\begin{itemize}
 \item \cite{EX_RULES_ABDUCTION}: Existential Rules: explain negative query 
answers by picking elements from a set of facts (KR)
 \item \cite{DuQiShenPan2012}: practical method for \HornSHIQ where individual 
names must occur in the input (J. Sem. Web.), 
 \item \cite{DuWangShen2014} our variant, practical but only rewritable DL 
languages (AAAI)
 \item Practicall ABox abduction without signature restriction: 
\cite{KlarmanABox2011} (JAI), \cite{HallandBritzABox2012}, (SAICSIT)
\cite{PukancovaHomola2017} (DL)
 \item TBox abduction: 
\cite{WeiKleinerDragisicLambrix2014} (\EL,AAAI) ,\cite{du2017practical} 
(AAAI)
 \item \cite{bienvenu2008complexity}: flat concept abduction for \EL (KR)
\end{itemize}

Those works assume that also the set of individuals is fixed, and only produce 
\emph{flat} ABox solutions in which no complex DL concepts can be used. None of 
the examples provided here could be explained in this setting. A notable 
exception is \cite{DL_LITE_ABDUCTION}, which considers the light-weight DL 
DL-Lite that does not allow for nested concepts. This changes the nature of the 
abduction problem drastically as every solution is picked from a finite set of 
axioms. This motivates decision problems of the form ``\emph{Does axiom 
$\alpha$ belong to some/every/an optimal solution?}'', which do not make sense 
in our context where the set of axioms in a solution is not fixed (we may choose 
an arbitrarily complex concept or an arbitrary set of individual names). In 
contrast, our setting motivates new theoretical questions such as determining 
size bounds on hypotheses.

In 
contrast, we are focusing on \emph{theoretical properties} of abduction problems where the aim is 
to compute \emph{some} abductive solution in the signature, which is using \emph{the same DL} as 
the input. The hope is that this way, more user-friendly solutions may be produced which are still 
sufficient to explain the given observation. 
}
Unfortunately, our results indicate that this is not the case: 
if we only allow for fresh individuals but not for complex concepts, 
hypotheses may require exponentially many assertions
for DLs between $\ELbot$ and $\ALCI$, 
while for $\ALCF$ there does not even exist a general bound. If in addition, 
we allow for complex concepts, we are able to explain more observations, but 
the explanations may become \emph{triple exponentially large} in comparison to 
the input. Motivated by this, we also consider a variant of the 
abduction problem in which we are additionally given a bound on the size of the
hypothesis.
To summarize, our contributions are the following.
\begin{enumerate}
 \item we investigate signature-based ABox abduction for DLs ranging from $\EL$ to $\ALCIQ$ where 
hypotheses may use \emph{fresh individuals}, \emph{complex concepts} or \emph{both},
 \item we give tight bounds on the size of hypotheses if they exist, 
 \item we analyse the computational complexity of deciding whether a hypothesis 
 exists, and
 \item we analyse the complexity of deciding whether a hypothesis of bounded 
size exists.
\end{enumerate}

\ifdefined\tr
    Proof details are provided in the appendix. 
\else
    Proof details are provided in the technical report~\cite{TR}.
\fi

\section{Description Logics and ABox Abduction}

We recall the DLs relevant to this paper~\cite{DL_TEXTBOOK} and provide the 
formal definition of the abduction problem  we consider in this paper.

Let $\NC$, $\NR$ and $\NI$ be three pair-wise disjoint sets of respectively 
\emph{concept}, \emph{role} and \emph{individual names}. A \emph{role} $R$ is 
either a role name $r$ or an \emph{inverse role} $r^-$, where $r\in\NR$. 
\emph{$\EL$ concepts} are built according to the following syntax rule, where 
$A\in\NC$ and $R\in\NR$:
\[
 C ::= \top\mid A\mid C \sqcap C \mid \exists R.C 
\]
More expressive DLs allow for the following additional concepts, 
where $n\in\Naturals$, and in brackets, we give the name of the corresponding 
DL:
\begin{align*}
       \bot \ (\ELbot) \ \
       \neg C \ (\ALC)\ \
       {\leqslant}1R.\top (\ALCF)\ \
       {\leqslant}nR.C (\ALCQ)
\end{align*}
In 
each case, all previous constructs are allowed in the DL as well. Using the 
letter $\Imc$ in the DL name we express that in the above, $R$ may also be an 
inverse role. 
For 
example, $({\geq}n r^-.C)$ is an $\ALCIQ$ concept but not an 
$\ALCQ$-concept, and $\exists r.\bot$ is an $\ELbot$ concept. 
Additional 
operators are introduced as abbreviations: $C\sqcup D=\neg(\neg C\sqcap\neg 
D)$ and $\forall R.C=\neg\exists R.\neg C$.

A \emph{KB} is a set of \emph{axioms}, that is, \emph{concept inclusions} (CIs) 
$C\sqsubseteq D$, \emph{concept assertions} $C(a)$ and \emph{role assertions} 
$r(a,b)$, where $C,D$ are concepts, $a,b\in\NI$ and $r\in\NR$. If a KB contains 
only concept and role assertions, it is called \emph{ABox}, and if every concept 
assertion is of the form $A(a)$, where $A\in\NC$, \emph{flat ABox}. Given a 
concept/axiom/KB/ABox $E$, we denote by $\sub{E}$ the set of (sub-)concepts 
occurring 
in~$E$, by $\sig{E}$ the set of concept and role names occuring in $E$, and by 
$\ind{E}$ the set of individual names in $E$. By $\size{E}$, we denote the 
number of symbols required to write $E$ down, where operators, as well 
as concept, role and individual names count as one, numbers are encoded in 
binary and the introduced abbreviations can be used.

The semantics of DLs is defined based on \emph{interpretations}, which are 
tuples $\Imc=\tup{\Delta^\Imc,\cdot^\Imc}$ of a set $\Delta^\Imc$ of 
\emph{domain 
elements} and an \emph{interpretation function} $\cdot^\Imc$ which maps every 
$a\in\NI$ to some $a^\Imc\in\Delta^\Imc$, every $A\in\NC$ to some
$A^\Imc\subseteq\Delta^\Imc$, every $r\in\NR$ to some
$r^\Imc\subseteq\Delta^\Imc\times\Delta^\Imc$, satisfies 
$(r^-)^\Imc=(r^\Imc)^-$, and is extended to concepts as follows, where 
$\card{S}$ denotes the cardinality of the set $S$:
\begin{gather*}
 \bot^\Imc=\emptyset\quad 
 (C\sqcap D)^\Imc=C^\Imc\cap D^\Imc \quad
 (\neg C)^\Imc=\Delta^\Imc\setminus C^\Imc\\
 (\exists R.C)^\Imc=\{d\in\Delta^\Imc\mid \tup{d,e}\in R^\Imc,\ e\in 
C^\Imc\}\\
({\leqslant}nR.C)^\Imc=\{d\in\Delta^\Imc\mid \#\{\tup{d,e}\in R^\Imc\mid e\in 
C^\Imc\} \leq n \}
\end{gather*}
$\Imc$ \emph{satisfies} an axiom $\alpha$, in symbols $\Imc\models\alpha$, if 
$\alpha=C\sqsubseteq D$ and $C^\Imc\subseteq D^\Imc$; $\alpha=C(a)$ and 
$a^\Imc\in C^\Imc$; or $\alpha=r(a,b)$ and $\tup{a^\Imc,b^\Imc}\in r^\Imc$. If 
$\Imc$ satisfies all axioms in a KB $\Kmc$, we write $\Imc\models\Kmc$ and call 
$\Imc$ a \emph{model of $\Kmc$}. A KB \emph{entails} an axiom/KB~$E$, in 
symbols 
$\Kmc\models E$, if $\Imc\models E$ for every model~$\Imc$ of $\Kmc$. If $\Kmc$ 
has no model, it is \emph{inconsistent} and we write $\Kmc\models\bot$.

We can now define the main reasoning problem we are concerned with in this 
paper.
We consider \emph{signature-based ABox abduction problems}, which for 
convenience, we 
just call abduction problems from here on.
\begin{definition}
 Let $\Lmc$ be a DL. An \emph{$\Lmc$ abduction problem} is given by a triple 
$\AbductionProblem=\tup{\Kmc,\Phi,\Sigma}$, with an $\Lmc$ KB $\Kmc$ of 
\emph{background knowledge}, an 
$\Lmc$ ABox $\Phi$ as \emph{observation}, and a signature 
$\Sigma\subseteq\NC\cup\NR$
of \emph{abducibles}; and asks whether there exists a \emph{hypothesis for 
$\AbductionProblem$}, i.e. an $\Lmc$ ABox $\Hmc$ satisfying 
 \customLabel{itm:consistency}{\textbf{A1}}
 \customLabel{itm:entailment}{\textbf{A2}}
 \customLabel{itm:signature}{\textbf{A3}}
\begin{align*}
 \textbf{A1}\ \ \Kmc\cup\Hmc\not\models\bot, \ \ \ \ 
 \textbf{A2}\ \ \Kmc\cup\Hmc\models\Phi,\text{ and} \ \ \  
 \textbf{A3}\ \ \sig{\Hmc}\subseteq\Sigma.
\end{align*}%
If we require $\Hmc$ additionally to be flat, we speak of a \emph{flat 
abduction problem}. 
\end{definition}
Condition~\ref{itm:consistency} is required to avoid trivial solutions, since 
everything follows from a basic inconsistency. Condition~\ref{itm:entailment} 
ensures the hypothesis is indeed effective, and Condition~\ref{itm:signature} 
is what makes this a \emph{signature-based} abduction problem.

In addition, different minimality criteria can be put on the hypothesis, such 
as \emph{size-}, \emph{subset-}, \emph{semantic 
minimality}~\cite{DL_LITE_ABDUCTION}, 
or \emph{completeness}~\cite{OUR_KB_ABDUCTION}. In this paper, we consider 
size minimality, for which the corresponding decision problem is
the following.
\begin{definition}\label{def:abduction}
 A \emph{size-restricted (flat) $\Lmc$ abduction problem} is a tuple 
$\AbductionProblem=\tup{\Kmc,\Phi,\Sigma,n}$, where 
$\AbductionProblem'=\tup{\Kmc,\Phi,\Sigma}$ is a (flat)
$\Lmc$ abduction problem and $n$ is a number encoded in binary. A 
\emph{hypothesis for $\AbductionProblem$} is an $\Lmc$-ABox $\Hmc$ which is a 
hypothesis for~$\AbductionProblem'$ and additionally satisfies 
\mbox{$\size{\Hmc}\leq n$}.
\end{definition}

\section{Flat ABox Abduction}

We first consider flat ABox abduction: the size of hypotheses, and the 
complexity of deciding their existence.
This problem is similar to that of \emph{query 
emptiness}~\cite{QUERY_EMPTINESS} which asks whether for a given set $\Tmc$ of 
CIs, Boolean query $q$ and signature $\Sigma$, there exists a flat ABox $\Amc$ 
with 
$\sig{\Amc}\subseteq\Sigma$, s.t. $\Tmc\cup\Amc$ entails that query.
Query emptiness for 
\emph{instance queries} is essentially 
the special case of flat ABox abduction where the observation is of the form 
$A(a)$ and the background knowledge contains only CIs. This immediately gives 
us lower bounds for the decision problem of flat ABox abduction. The other 
results in this section do not follow, but can be shown similarly as 
in~\cite{QUERY_EMPTINESS}.
Similar to query emptiness, flat ABox abduction only becomes interesting if 
the DL is powerful enough to create inconsistencies.
Otherwise, we can construct a trivial hypothesis candidate as 
\begin{align*}
    \Hmc=&\{A(a)\mid A\in(\Sigma\cap\NC),a\in\ind{\Kmc\cup\Phi}\}\\
    \cup&\{r(a,b)\mid r\in(\Sigma\cap\NR),a,b\in\ind{\Kmc\cup\Phi}\}.
\end{align*}
Clearly, $\Hmc$ satisfies~\ref{itm:consistency} and~\ref{itm:signature}. If it 
does not satisfy~\ref{itm:entailment}, then neither would any other ABox. 
Consequently, if there is a solution to the abduction problem, then $\Hmc$ must 
be such a solution. This means that for \EL, 
flat ABox abduction can always be 
performed in polynomial time.
On the other hand, already for $\ELbot$, solutions may require exponentially 
many fresh individual names.
\begin{theorem}\label{the:exp-lower-bound-flat}
 There exists a family $\tup{\Kmc_n,\Phi,\Sigma}_{n>0}$ of flat 
$\ELbot$~abduction problems
s.t. every hypothesis is of size exponential in the size of $\Kmc_n$.
\end{theorem}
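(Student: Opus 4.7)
The plan is to exploit the disjointness constraints available in $\ELbot$ to force any hypothesis to explicitly witness each of $2^n$ mutually exclusive type combinations. First, I would take $\Sigma = \{r, A_1, \bar{A}_1, \ldots, A_n, \bar{A}_n\}$ and let $\Kmc_n$ consist of the $n$ disjointness axioms $A_i \sqcap \bar{A}_i \sqsubseteq \bot$, so that $|\Kmc_n|$ is linear in $n$. For the observation, I would take $\Phi = \{(\exists r.T_w)(a_0) \mid w \in \{0,1\}^n\}$, where $T_w$ is the $\ELbot$ conjunction containing $A_i$ whenever $w_i = 1$ and $\bar{A}_i$ whenever $w_i = 0$; the $T_w$'s are then pairwise inconsistent with $\Kmc_n$.

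Next, I would argue the lower bound as follows. Since $\Hmc$ must be flat and $\Kmc_n$ contains no axioms that derive new concept memberships or required successors, entailment of $(\exists r.T_w)(a_0)$ from $\Kmc_n \cup \Hmc$ forces $\Hmc$ to contain an explicit witness $b_w$ with $r(a_0, b_w)$ together with all concept-name assertions dictated by $w$. For distinct $w \neq w'$ that disagree at some position $i$, identifying $b_w$ with $b_{w'}$ would put both $A_i$ and $\bar{A}_i$ on the same individual in $\Hmc$, contradicting~\ref{itm:consistency} via the disjointness axiom. Hence the $2^n$ witnesses must be pairwise distinct individuals, and so $|\Hmc| \geq 2^n$, which is exponential in $|\Kmc_n|$.

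The main obstacle I anticipate is formalising the claim that $(\exists r.T_w)(a_0)$ cannot be entailed ``indirectly''---via some cleverer interaction between $\Kmc_n$ and $\Hmc$---without an explicit named witness. This should follow from standard canonical-model properties of $\ELbot$ when the TBox consists only of disjointness axioms: the canonical model of $\Kmc_n \cup \Hmc$ is essentially $\Hmc$ viewed as an interpretation, modulo the disjointness check, so an existential restriction can be satisfied only through an explicit ABox edge. A secondary caveat is that $\Phi$ itself grows exponentially with $n$; this is compatible with the theorem statement, which asks only that $|\Hmc|$ be exponential in $|\Kmc_n|$, but it does mean that the construction does not separate hypothesis size from observation size.
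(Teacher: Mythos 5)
Your construction is internally sound---with only disjointness axioms in $\Kmc_n$, the minimal model of $\Kmc_n\cup\Hmc$ for a flat $\Hmc$ is essentially $\Hmc$ itself, so each $(\exists r.T_w)(a_0)$ does need an explicit named witness, and consistency under $A_i\sqcap\bar{A}_i\sqsubseteq\bot$ does force the $2^n$ witnesses to be pairwise distinct. The problem is the caveat you flag at the end: it is not a secondary caveat but the crux. Your observation $\Phi$ already contains $2^n$ assertions, so the hypothesis you force is only \emph{linear} in the size of the abduction problem $\tup{\Kmc_n,\Phi,\Sigma}$. The theorem is stated for a family $\tup{\Kmc_n,\Phi,\Sigma}_{n>0}$ in which only $\Kmc_n$ carries the index---$\Phi$ and $\Sigma$ are fixed---and its role in the paper is to match the upper bound of Theorem~\ref{the:flat-size-upper}, which is exponential in the size of the \emph{whole} problem $\AbductionProblem$. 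A family whose input is itself exponentially large exhibits no blow-up of the hypothesis relative to the input and therefore does not establish the intended (or, given the indexing, the literal) statement.

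The missing idea is that the exponentially many mutually exclusive configurations must be \emph{generated} by a polynomial-size knowledge base rather than enumerated in the observation. The paper does this with a fixed observation $\Phi=\{A(a)\}$, a fixed signature $\Sigma=\{B,r\}$, and a $\Kmc_n$ of size polynomial in $n$ that implements a binary counter over concept names $X_1,\overline{X}_1,\ldots,X_n,\overline{X}_n$: instances of $B$ initialise the counter to $0$, CIs of the form $\exists r.(\ldots)\sqsubseteq X_i$ (resp.\ $\overline{X}_i$) increment it along $r$-predecessors, $X_i\sqcap\overline{X}_i\sqsubseteq\bot$ makes distinct counter values mutually exclusive, and $X_1\sqcap\ldots\sqcap X_n\sqsubseteq A$ fires only when the counter reaches its maximum. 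The only way a flat $\Sigma$-hypothesis can entail $A(a)$ is to supply an $r$-chain from $a$ to a $B$-instance traversing all $2^n$ counter values, and the disjointness axioms force all elements of that chain to be distinct, yielding $2^n-1$ role assertions from a polynomial-size input. If you want to salvage your approach, you would need to replace the explicit enumeration of the $T_w$ in $\Phi$ by some such recursive mechanism inside $\Kmc_n$.
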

\begin{proof}
 We set $\Phi=A(a)$ and $\Sigma=\{B,r\}$, and let $\Kmc_n$ use concept names 
$X_1$, 
$\overline{X}_1$, $\ldots$, $X_n$, $\overline{X}_n$ to encode an exponential 
counter.  $\Kmc_n$ uses this to ensure that an $r$-chain of distinct $2^n$ 
elements from $a$ to an instance of $B$ entails $A(a)$.
 \begin{align*}
  B & \sqsubseteq \overline{X}_1\sqcap\ldots\sqcap\overline{X}_n \\
  \exists r.(\overline{X}_i\sqcap X_{i-1}\sqcap\ldots X_1) &\sqsubseteq X_i 
  \quad \text{ for }i\in\iinv{1}{n}\\
  \exists r.(X_i\sqcap X_{i-1}\sqcap\ldots\sqcap X_1) &\sqsubseteq \overline{X}_i 
  \quad \text{ for }i\in\iinv{1}{n}\\
  \exists r.(\overline{X_i}\sqcap\overline{X}_j) &\sqsubseteq \overline{X}_i 
  \quad \text{ for }i,j\in\iinv{1}{n}, j<i \\
  \exists r.(X_i\sqcap\overline{X}_j) &\sqsubseteq X_i 
  \quad \text{ for }i,j\in\iinv{1}{n}, j<i \\
   X_i\sqcap \overline{X_i}&\sqsubseteq\bot\quad\ \  
   \text{ for }i\in\iinv{1}{n}\\
  X_1\sqcap\ldots\sqcap X_n &\sqsubseteq A %
 \end{align*}
 The only way to produce this chain of $2^n$ elements is using $2^{n}-1$ 
role assertions, which establishes our lower bound.
\end{proof}
This bound remains tight if we add expressivity up to $\ALCI$, while we lose 
any bound on the size once we additionally allow concepts of the form 
${\leqslant}1r.\top$.  
\begin{restatable}{theorem}{TheFlatSizeUpper}\label{the:flat-size-upper}
 If there exists a hypothesis for a flat $\Lmc$ abduction problem 
$\AbductionProblem$,  then there exists one of size 
\begin{enumerate}
 \item polynomial in the size of $\AbductionProblem$ if $\Lmc=\EL$, 
 \item exponential in the size of $\AbductionProblem$ if $\Lmc=\ALCI$, and
 \item if $\Lmc=\ALCF$, no general upper bound based on $\AbductionProblem$ can 
be given.
\end{enumerate}
\end{restatable}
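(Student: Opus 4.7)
The proof splits by DL. Part (1), the $\EL$ case, follows from the construction preceding the theorem. The flat $\Sigma$-hypothesis $\Hmc$ that asserts every $\Sigma$-concept name at every ABox individual and every $\Sigma$-role name between every pair of ABox individuals has polynomial size in $\size{\AbductionProblem}$, and always satisfies \ref{itm:consistency} and \ref{itm:signature}. To see that it satisfies \ref{itm:entailment} whenever some flat hypothesis does, I take any competing flat hypothesis $\Hmc'$ and any model $\Imc$ of $\Kmc \cup \Hmc$, and extend $\Imc$ to interpret the fresh individuals of $\Hmc'$ as arbitrary named individuals; since $\Hmc$ contains \emph{every} $\Sigma$-assertion on named individuals, the result is a model of $\Kmc \cup \Hmc'$ and hence satisfies $\Phi$, so $\Imc$ does too. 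The absence of $\bot$ and negation in $\EL$ is what ensures consistency of the trivial $\Hmc$ here.

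For Part (2), the plan is to exploit the exponential tree-model property of $\ALCI$. Given any hypothesis $\Hmc$, I would fix a tree-shaped model $\Imc$ of $\Kmc \cup \Hmc \cup \Phi$ of size $2^{O(\size{\AbductionProblem})}$, and extract a new flat hypothesis $\Hmc^*$ by introducing a fresh individual for each domain element of $\Imc$ that is not the interpretation of a named individual, and recording every $\Sigma$-assertion witnessed by $\Imc$ between these individuals. Then $\size{\Hmc^*} \leq 2^{O(\size{\AbductionProblem})}$, and the remaining task is to show $\Kmc \cup \Hmc^* \models \Phi$. I plan to carry this out via a bisimulation/homomorphism argument: choosing $\Imc$ to be a canonical (or $\Kmc$-saturated) model, one shows that every model of $\Kmc \cup \Hmc^*$ admits a $\Sigma$-preserving map from $\Imc$, which forces $\Phi$ to hold. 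The delicate point will be handling the inverse roles of $\ALCI$ so that both successor and predecessor types are adequately captured by the extracted $\Sigma$-assertions.

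For Part (3), the plan is to construct a family $\{\AbductionProblem_n\}_{n \geq 0}$ of flat $\ALCF$ abduction problems with $\size{\AbductionProblem_n}$ growing only polynomially in $\log n$ (via binary encoding of $n$) while every hypothesis for $\AbductionProblem_n$ requires at least $n$ distinct individuals. The construction will use a functional role $r$ together with TBox axioms forcing any hypothesis for $\Phi_n$ to provide a concrete $r$-chain of length $n$ from a named individual, with concept labels that pairwise distinguish the chain elements so that merging is blocked. The main obstacle is establishing the lower bound, namely showing that no alternative hypothesis of smaller size suffices: this requires, for each candidate small hypothesis, constructing a model of $\Kmc_n \cup \Hmc$ that violates $\Phi_n$. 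Functionality is the key ingredient here, since it rules out the chain-compression tricks available in DLs with inverse roles and unrestricted successors.
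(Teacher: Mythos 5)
Your part (1) matches the paper's argument and is fine. The other two parts have genuine gaps.

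For part (2), your construction (extract the $\Sigma$-diagram of a small filtrated model of $\Kmc\cup\Hmc_0$) would in fact yield a workable hypothesis, since it contains a homomorphic image of $\Hmc_0$ that fixes the named individuals; but your justification of \ref{itm:entailment} does not go through. First, $\ALCI$ is not Horn, so there is no canonical or $\Kmc$-saturated model to anchor the argument on. Second, and more importantly, a ``$\Sigma$-preserving map from $\Imc$ into an arbitrary model $\Jmc$ of $\Kmc\cup\Hmc^*$'' only transfers positive existential information, whereas the observation $\Phi$ may contain assertions $C(a)$ with arbitrary $\ALCI$ concepts $C$ (negation, universal restrictions); such $C$ are not preserved under homomorphisms, so the existence of the map does not force $\Jmc\models\Phi$. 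The paper argues the contrapositive instead: given a countermodel $\Jmc$ of $\Kmc\cup\Hmc$ refuting some $C(a)\in\Phi$, one \emph{duplicates} elements of $\Jmc$ along the quotient map $h:\ind{\Hmc_0}\rightarrow\ind{\Hmc}$ to obtain a countermodel of $\Kmc\cup\Hmc_0$, contradicting that $\Hmc_0$ was a hypothesis; the crucial point is that duplication is type-preserving in $\ALCI$ (and fails to be in $\ALCF$, which is exactly why the bound breaks there). You would need to replace your bisimulation step by this duplication argument.

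For part (3), your plan cannot establish the stated claim. A family $\{\AbductionProblem_n\}$ with $\size{\AbductionProblem_n}$ polynomial in $\log n$ whose hypotheses need $n$ individuals only shows an \emph{exponential} lower bound on hypothesis size---which already holds for $\ELbot$ by Theorem~\ref{the:exp-lower-bound-flat} and is therefore not what distinguishes $\ALCF$. The theorem asserts that \emph{no} upper bound as a function of $\size{\AbductionProblem}$ exists; any explicit family of instances can only witness a fixed growth rate. The paper instead derives the non-existence of a bound from undecidability: if there were such a bound, one could decide flat $\ALCF$ abduction (and hence instance-query emptiness for $\ALCF$) by enumerating all candidate hypotheses up to the bound, contradicting the undecidability result of~\cite{QUERY_EMPTINESS}. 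Some argument of this non-constructive kind is unavoidable here.
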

\begin{proof}[Proof sketch]
We already established the bound for $\Lmc=\EL$. For $\ALCI$, we assume there exists some 
hypothesis 
$\Hmc_0$, based on which we build one of bounded size. For this, we pick any 
model 
$\Imc$ of $\Hmc_0\cup\Kmc$, which allows us to identify individual names $a\in\ind{\Hmc_0}$ 
using at most exponentially many \emph{types} $\type{a}=\typeI{a^\Imc}{\Imc}$, 
where 
\[
    \typeI{d}{\Imc}=\{C\in\sub{\Kmc\cup\Phi}\mid d\in C^\Imc\}%
    .
\] 
We associate to every type $\type{a}$ an individual name 
$b_{\type{a}}$, and define 
$ 
h:\ind{\Hmc_0}\rightarrow\NI
$
by $h(a)=a$ if \mbox{$a\in\ind{\Kmc\cup\Phi}$} and $h(a)=b_{\type{a}}$ 
otherwise. The hypothesis $\Hmc$ is then: %
\begin{align*}
 \{A(h(a))\mid A(a)\in\Hmc_0\}\cup\{r(h(a),h(b))\mid r(a,b)\in\Hmc_0\}.
\end{align*}
Based on $\Imc$ and $h$, one can construct a model for $\Kmc\cup\Hmc$, so that $\Hmc$ 
satisfies~\ref{itm:consistency}. Because $\Hmc_0$ satisfies~\ref{itm:signature}, by 
construction, so does $\Hmc$. 
Finally, using the fact that $h$ is a homomorphism from $\Kmc\cup\Hmc_0$ into 
$\Kmc\cup\Hmc$ s.t. for every $a\in\ind{\Kmc\cup\Phi}$ 
$h(a)=a$, we can show that $\Kmc\cup\Hmc\models\Phi$, and 
thus~\ref{itm:entailment}.

For $\Lmc=\ALCF$, we note that if there was a bound on the size of hypotheses, 
we could decide the instance query emptiness problem for $\ALCF$ by iterating 
over all candidates, contradicting that this problem is undecidable for 
$\ALCF$~\cite{QUERY_EMPTINESS}.
\end{proof}

The proof of Theorem~\ref{the:flat-size-upper} indicates how types can be used 
to perform abduction, which is used in the following theorem.

\begin{restatable}{theorem}{TheFlatCompute}\label{the:compute-flat}
 Flat $\Lmc$ ABox abduction is
 \begin{itemize}
  \item \PTime-complete for $\Lmc=\EL$,
  \item \ExpTime-complete for $\Lmc=\ELbot$,
  \item \coNExpTime-complete for $\Lmc=\ALCI$,
  \item undecidable for $\Lmc=\ALCF$.
 \end{itemize}
\end{restatable}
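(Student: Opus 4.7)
The plan is to treat the four cases separately; the lower bounds all flow from the reduction noted at the start of the section (instance query emptiness for $\Lmc$ is a restricted flat $\Lmc$ abduction problem), importing respectively \PTime-, \ExpTime-, \coNExpTime-hardness, and undecidability from the query emptiness complexities~\cite{QUERY_EMPTINESS} for the four DLs.

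For the $\EL$ upper bound, the discussion preceding Theorem~\ref{the:exp-lower-bound-flat} already does the work: the trivial maximal hypothesis $\Hmc$ automatically satisfies~\ref{itm:consistency} and~\ref{itm:signature}, so one only needs to check~\ref{itm:entailment}, which is a polynomial-time task in $\EL$. For $\ELbot$, the trivial $\Hmc$ may break~\ref{itm:consistency}, so I would instead work with a canonical \emph{type-based hypothesis} $\Hmc^\star$ formed by taking one fresh individual $b_t$ per type $t$ over $\sub{\Kmc\cup\Phi}$ that is jointly realisable with $\Kmc$, augmented with all $\Sigma$-role assertions between these $b_t$ and the named individuals compatible with the chosen types. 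A homomorphism argument analogous to that in the proof sketch of Theorem~\ref{the:flat-size-upper} should show that a hypothesis exists iff $\Hmc^\star$ is one; since $\Hmc^\star$ has size at most exponential and $\ELbot$ entailment is in \ExpTime, the overall check fits in~\ExpTime.

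For $\Lmc=\ALCI$ the same $\Hmc^\star$ can be built, but directly verifying~\ref{itm:entailment} over $\Kmc\cup\Hmc^\star$ would cost \TwoExpTime. To reach \coNExpTime, I would decide the complement---non-existence of a hypothesis---in \NExpTime by guessing a refutation certificate consisting of a type profile (from exponentially many candidates) together with a tree-shaped model of~$\Kmc$ realising that profile in which $\Phi$ still fails. The type-analysis in the proof of Theorem~\ref{the:flat-size-upper} restricts the profiles to exponential size, and \ExpTime $\ALCI$ instance checking verifies the certificate. For $\Lmc=\ALCF$ the upper-bound direction is vacuous: Theorem~\ref{the:flat-size-upper}(3) together with the undecidability of instance query emptiness rule out any decision procedure, since any such procedure would, via enumeration, yield a size bound.

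The main obstacle is the \coNExpTime upper bound for $\ALCI$. A direct ``guess-a-hypothesis-and-verify'' approach would only give \NTwoExpTime because hypotheses can themselves be exponentially large. The argument has to be inverted to certify \emph{non-existence} compactly, and the refutation structure has to simultaneously rule out every admissible $\Sigma$-hypothesis compatible with the guessed type profile---not merely a single concrete hypothesis.
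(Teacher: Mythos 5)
Your overall strategy matches the paper's: lower bounds via instance query emptiness, the trivial maximal hypothesis for $\EL$, type-based candidates plus the homomorphism argument from Theorem~\ref{the:flat-size-upper} for $\ELbot$, and an inverted (co-nondeterministic) procedure for $\ALCI$. However, there is a genuine gap at the pivot of your argument: the claim that there is a \emph{single} canonical type-based hypothesis $\Hmc^\star$ such that a hypothesis exists iff $\Hmc^\star$ is one. No such single candidate exists once $\bot$ is available, because each named individual in $\ind{\Kmc\cup\Phi}$ must be assigned a type, different assignments yield incomparable candidates, and their union can be inconsistent with $\Kmc$: take $\Kmc=\{A\sqcap B\sqsubseteq\bot,\ A\sqsubseteq D,\ B\sqsubseteq D\}$, $\Phi=\{D(a)\}$, $\Sigma=\{A,B\}$; both $\{A(a)\}$ and $\{B(a)\}$ are hypotheses, but no single consistent $\Sigma$-ABox subsumes both. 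The paper therefore parametrises the candidate by a \emph{selector function} $s:\ind{\Kmc\cup\Phi}\rightarrow T_{\Kmc\cup\Phi}$, proves that a hypothesis exists iff $\Hmc_s$ is one for \emph{some} of the exponentially many compatible $s$, and for $\ELbot$ iterates deterministically over all $s$, checking entailment for each exponentially large $\Hmc_s$ in time polynomial in its size.

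This omission propagates into your $\ALCI$ certificate. Non-existence is the statement ``for \emph{every} compatible selector function $s$ there is a model of $\Kmc\cup\Hmc_s$ refuting $\Phi$'' --- a $\forall\exists$ condition. A certificate that guesses \emph{one} type profile together with \emph{one} countermodel only refutes the candidate for that profile and would wrongly accept instances where a hypothesis exists under a different profile; your closing remark that the certificate must rule out every hypothesis ``compatible with the guessed type profile'' still leaves all other profiles unaddressed. The paper's fix is to iterate deterministically over all exponentially many $s$ and, for each, guess a type assignment $s_e:\ind{\Hmc_s}\rightarrow T_{\Kmc\cup\Phi}$ witnessing a countermodel (checking that $s_e$ is compatible with $\Kmc\cup\Hmc_s$ and that $C\notin s_e(a)$ for some $C(a)\in\Phi$, after first checking the role assertions of $\Phi$ syntactically); interleaving exponential iteration with per-candidate polynomial guessing still fits in \NExpTime, giving \coNExpTime overall. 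The remaining cases ($\EL$, $\ALCF$, and all lower bounds) are fine as you describe them.
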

\begin{proof}[Proof sketch]
 For $\EL$, we already described a method to compute and verify hypotheses in \PTime. For the other 
DLs, we 
use type elimination to compute in exponential time the set $T_{\Kmc\cup\Phi}$ 
of types $t\subseteq\sub{\Kmc\cup\Phi}$ that can occur in models of $\Kmc$. For 
a type $t$ and a set $T$ of types, we denote by
$\sucCand{T}{t}{r}\subseteq T$ the types an $r$-succesor of a 
$t$-individual can have.
We iterate over all assignments 
$s:\ind{\Kmc\cup\Phi}\rightarrow T_{\Kmc\cup\Psi}$ for which we set
\begin{align*}
 \Hmc_s =& \{A(a)\mid a\in I, A\in s(a)\cap\Sigma\}\\
        &\cup \{r(a,b)\mid a,b\in I, r\in\Sigma, 
s(b)\in\sucCand{T_{\Kmc\cup\Phi}}{s(a)}{r}\},
\end{align*}
where $I$ contains an additional individual name for every type and $s$ is 
extended to $I$.
If there exists a hypothesis, then the hypothesis constructed in the 
proof for Theorem~\ref{the:flat-size-upper} is contained in some such $\Hmc_s$ 
s.t. $\Kmc\cup\Hmc_s\not\models\bot$, so that we can directly 
check~\ref{itm:consistency}--\ref{itm:signature} from Def.~\ref{def:abduction} 
on the generated $\Hmc_s$.
\ref{itm:consistency} 
can be verified syntactically using the types. For $\ELbot$, 
\ref{itm:entailment} is 
verified in time polynomial in the size of $\Hmc_s$ and thus in \ExpTime. For 
$\ALCI$, we guess a model of $\Kmc\cup\Hmc_s$ that does not entail $\Phi$ 
by assigning a type to every individual and possibly adding a new 
element per type. If this fails, $\Kmc\cup\Hmc_s\models\Phi$.
\end{proof}

\section{Abduction with Complex Concepts}

As illustrated in the introduction, abduction may only be successful if we also admit 
complex concepts in the hypothesis. Determining such hypotheses turns out to be 
more challenging than for flat hypotheses, and we cannot find a correspondence 
to a known problem as for flat abduction. Indeed, one might assume such a 
relation to uniform interpolation: given a KB $\Kmc$ and a 
signature $\Sigma$, the uniform interpolant of $\Kmc$ for $\Sigma$ is a
$\Sigma$ ontology that captures all entailments of $\Kmc$ within 
$\Sigma$~\cite{ABOX-FORGETTING}. By negating the observation, this can be used 
to perform \emph{complete} 
abduction~\cite{OUR_KB_ABDUCTION,WARREN_ABOX_ABDUCTION}, that is, to compute a 
hypothesis that would be entailed by any other hypothesis. However, if we are 
interested just in computing any hypothesis rather than a complete one,
this correspondence falls short, as uniform interpolants have stronger 
requirements than hypotheses, and the reasons for non-existence are 
different: namely, capturing \emph{all} entailments of $\Kmc$ in 
$\Sigma$ in the uniform interpolant, using only names from $\Sigma$, 
may require infinitely many axioms in case of cyclic axioms. In contrast, for abduction, 
non-existence is always due to Condition~\ref{itm:consistency}.

We consider abduction for $\ALC$, $\EL$, and $\ELbot$, starting with the 
latter. 
In $\EL$ and $\ELbot$, complex concepts do 
not bring much benefit compared to fresh individuals:
an \ELbot concept can only 
state the existence of role successors, which we can also do in flat ABoxes.
In fact, for $\ELbot$, if we 
allow complex concepts \emph{instead} of fresh individuals , hypotheses even 
get more complex.

\begin{theorem}\label{the:el-complex}
 There exists a family of \ELbot abduction problems for which every 
hypothesis without fresh individuals is at least of double exponential size. 
If there 
exists such a hypothesis, there always exists one of at most double 
exponential size, whose existence can be decided in exponential time.
\end{theorem}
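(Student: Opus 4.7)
The theorem has three parts, which I would address in turn.

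For the \emph{lower bound}, I would generalize the counter construction of Theorem~\ref{the:exp-lower-bound-flat} so that the observation is entailed only if the hypothesis describes a full binary tree of depth $2^n$, not merely a chain of length $2^n$. In addition to the counter concept names $X_i, \overline{X}_i$, I would introduce two concept names $Y_0, Y_1$ with $Y_0 \sqcap Y_1 \sqsubseteq \bot$, together with modified increment axioms requiring each intended tree node to have two distinct $r$-successors (one in $Y_0$, one in $Y_1$) whose counter values are synchronised with the parent's. Since no fresh individuals are allowed, the entire tree must be described within a single concept $C$ in an assertion $C(a)$; and since the tree has $2^{2^n}$ nodes, $|C|$ must be at least that.

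For the \emph{upper bound}, given any hypothesis $\Hmc$ and a model $\Imc \models \Kmc \cup \Hmc$, I would construct a hypothesis $\Hmc'$ of at most double-exponential size. For each $a \in \ind{\Kmc\cup\Phi}$, let $t_a = \typeI{a^\Imc}{\Imc}$. For each type $t \in T_{\Kmc\cup\Phi}$, I recursively define a $\Sigma$-realizer
\[
  C_t = \bigsqcapn_{A \in t \cap \Sigma \cap \NC} A \,\sqcap\,
        \bigsqcapn_{\exists r.D \in t,\, r \in \Sigma} \exists r.\, C_{t'}
\]
where $t' \in \sucCand{T_{\Kmc\cup\Phi}}{t}{r}$ is chosen to contain $D$, halting the recursion whenever a type repeats along a path. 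Since there are at most $2^{|\sub{\Kmc\cup\Phi}|}$ types, the depth is at most exponential; since each type has at most polynomially many $\exists$-subconcepts, the branching is polynomial; hence $|C_t|$ is at most double exponential. Setting $\Hmc' = \{C_{t_a}(a) \mid a \in \ind{\Kmc\cup\Phi}\} \cup \{r(a,b) \in \Hmc\}$, a homomorphism-style argument analogous to the proof of Theorem~\ref{the:flat-size-upper} shows $\Kmc \cup \Hmc' \models \Phi$.

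For the \emph{\ExpTime{} decision procedure}, I would adapt the type-elimination technique of Theorem~\ref{the:compute-flat}. After computing $T_{\Kmc\cup\Phi}$ in \ExpTime{}, I identify by a fixpoint over the type graph the subset of types that admit a $\Sigma$-realizer. Then I iterate over the at most exponentially many assignments $s: \ind{\Kmc\cup\Phi} \to T_{\Kmc\cup\Phi}$ to realizable types and check \ref{itm:consistency} and \ref{itm:entailment} symbolically, i.e.\ at the type level, without ever materialising the double-exponential realizer concepts. Correctness follows from the upper bound argument: whenever a hypothesis exists, some assignment $s$ arises from the type assignment $a \mapsto t_a$ induced by a model of $\Kmc \cup \Hmc$. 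The main obstacle I expect is in the lower bound: one must design $\Kmc_n$ and $\Sigma$ so that no combination of $\Sigma$-concept names can be used to ``shortcut'' the intended tree, thereby forcing the hypothesis concept to spell out each of the $2^{2^n}$ nodes explicitly.
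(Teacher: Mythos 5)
Your lower-bound construction is essentially the paper's: the paper forces binary branching by duplicating every counting axiom over two roles $r$ and $s$ (signature $\{B,r,s\}$), whereas you force it with a single role and two disjoint marker concepts. Both variants make the hypothesis concept spell out a binary tree of depth $2^n$, and the caveat you raise at the end about ruling out shortcuts is precisely what the duplicated counter axioms take care of.

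The genuine gap is in your upper bound, and it propagates into your decision procedure. You build the realizer $C_t$ from the abstract type graph, choosing for each $\exists r.D\in t$ \emph{some} $t'\in\sucCand{T_{\Kmc\cup\Phi}}{t}{r}$ with $D\in t'$. Nothing guarantees that $C_{t'}$ actually forces $D$ under $\Kmc$. Take $\Kmc=\{A_1\sqsubseteq D,\ \exists r.D\sqsubseteq\Goal\}$, $\Phi=\{\Goal(a)\}$, $\Sigma=\{r,A_1,A_2\}$: then $(\exists r.A_1)(a)$ is a hypothesis and $t_a$ contains $\exists r.D$, but the successor candidate $t'$ with $D,A_2\in t'$ and $A_1\notin t'$ is a legal choice containing $D$; it yields $C_{t'}=A_2$ and $\Hmc'=\{(\exists r.A_2)(a)\}$, which does not entail $\Goal(a)$. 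Restricting to types realized in the fixed model $\Imc$ does not repair this, since a model may satisfy $D$ at some successor ``gratuitously'', and your ``halt when a type repeats'' rule additionally truncates branches whose existential obligations still need witnesses. What is missing is either a recursively defined notion of which types are \emph{enforceable} by a $\Sigma$-concept (which your decision procedure presupposes but never defines), or---as the paper does---deriving the small hypothesis from the syntactic unfolding of the given $\Hmc$: flatten $\Hmc$ into a forest-shaped flat ABox, shorten paths by \emph{splicing out} the segment between two equal-typed nodes (reattaching the subtree, not truncating) and merging equal-typed siblings, then roll the forest back up; condition \ref{itm:entailment} is preserved by pulling any countermodel of the compressed ABox back to a countermodel of the original. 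Your \ExpTime procedure inherits the same unproved realizability claim; the paper instead materialises, for each type assignment to the named individuals, a concrete exponential-size layered flat ABox that is acyclic among fresh individuals (hence can be rolled up) and runs the polynomial-time $\ELbot$ entailment check on it directly.
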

\begin{proof}[Proof sketch]
 The family of abduction problems is obtained similarly as in the proof 
for Theorem~\ref{the:exp-lower-bound-flat}, only that we now use two roles $r$ 
and $s$
To get 
the corresponding upper bound, we first flatten an 
existing hypothesis $\Hmc_0$ and again simplify the ABox based on the types 
in some model, however this time making sure the resulting flat ABox can 
be translated back into a complex one without fresh individuals.
 
The same care has to be taken when we modify the method 
used for Theorem~\ref{the:compute-flat}. 
Specifically, we have to make sure that the hypothesis $\Hmc_s$ that we 
generate for a given mapping $s:\ind{\Kmc\cup\Phi}\rightarrow T_{\Kmc\cup\Phi}$
does not contain cycles 
between fresh individuals, so that it can be translated into a hypothesis 
without fresh individuals.  Our fresh individuals are now of the form $b_{a,t,i}$, 
where 
$a\in\ind{\Kmc\cup\Phi}$, $t\in T=T_{\Kmc\cup\Phi}$, and 
$i\in\iinv{1}{2^{\lvert 
T\rvert}}$. 
Set $b_{a,s(a),0}=a$. $\Hmc_s$ is then defined as:
\let\qed\relax
\begin{align*}
 \Hmc_s& = \big\{A(b_{a,t,k})\mid a\in\ind{\AbductionProblem}, 
                             t\in T, 
                             k\in\iinv{0}{2^{\lvert T\rvert}},
                             \phantom{\big\}}\quad
                             \\ 
&\hphantom{ = \big\{A(b_{a,t,k})\mid %
                             .}
                             A\in (T\cap\Sigma)\big\}
      \\
  &\cup\ \big\{r(b_{a,t_1,k}, b_{a,t_2,k+1}) \mid 
                             a\in\ind{\Kmc\cup\Phi},
                             t_1\in T, 
                             r\in\Sigma,
                             \phantom{\big\}}
                             \\
    &\hphantom{\cup\ \{r(b_{a,t_1,k}, b_{a,t_2,k+1}) \mid ..}
                             t_2\in\sucCand{T}{t_1}{r},
                             k\in\iinv{0}{2^{\lvert T\rvert}-1}\big\}\\
  &\cup\ \big\{r(a,b)\mid s(b)\in\sucCand{T}{s(a)}{r}\big\}
  \hphantom{aaaaaaaaaaaaaaaaa..}\qedsymbol
\end{align*}
\end{proof}

\comment{
If we move to more expressive DLs, the situation becomes more challenging. 
While flat abduction relates to the problem of query emptiness, abduction with 
complex concepts in a way relates to uniform 
interpolation (UI)~\cite{ABOX-FORGETTING,FOUNDATIONS_UI}: here, we are 
given a KB $\Kmc$ and a signature $\Sigma$, and we want to compute a KB 
$\Kmc_\Sigma$ that is fully in the signature and preserves all entailments in 
$\Sigma$. The relation to UI is exploited 
in~\cite{OUR_KB_ABDUCTION,WARREN_ABOX_ABDUCTION} to compute \emph{complete 
hypotheses}, that is, hypotheses that cover all possible hypotheses in the 
signature. The computation of uniform interpolants has a very high worst-case 
complexity, and solutions may not exist or require a size triple-exponential in 
the input~\cite{FOUNDATIONS_UI}. 
Similarly to in abduction, a uniform interpolation 
problem may not have a solution. However, the reason is very different: For 
$\Sigma=\{A,r\}$ and $\Kmc=\{A\sqsubseteq B$, $B\sqsubseteq\exists r.B\}$, a 
uniform interpolant would have to capture subsumers of $A$ with unbounded 
nesting depth, which is impossible. In contrast, abductive hypotheses never have 
to capture ``unbounded'' elements, and the reason for non-existence lies in the 
consistency requirement. Another difference is that uniform interpolants are 
unique modulo equivalence, and as such can be seen as ``optimal'' in terms of 
logical entailment. In abduction, there are many possible solutions, and we are 
not looking at any optimality criteria yet. One might thus hope that, without 
optimality requirement, abduction should be easier than uniform interpolation. 
Unfortunately, it turns out this is not the case. 
}
The hardness result requires again $\bot$: for \EL, we can always use a flat 
solution as in the last section.
In contrast, with more expressivity, the problem becomes even harder,
even if we 
do admit fresh individuals. The reason is that for concepts of the form 
$\forall r.C$, fresh individuals cannot come to the rescue anymore, and 
disjunctions may become necessary.
The following theorem is shown by a modification of 
a construction in~\cite{CONSERVATIVE_EXTENSIONS}.

 \begin{restatable}{theorem}{TheLowerComplexALC}
  	There is a family of $\ALC$ abduction problems for which 
    the smallest (non-flat) ABox hypotheses are triple exponential in size.
 \end{restatable}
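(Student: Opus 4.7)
The plan is to adapt the triple-exponential construction from~\cite{CONSERVATIVE_EXTENSIONS} that shows the smallest witnessing concept for non-conservativeness of an $\ALC$ TBox extension can be of size $2^{2^{2^n}}$. That construction encodes accepting computations of an exponentially space-bounded alternating Turing machine into $\ALC$ axioms in such a way that distinguishing every configuration in a reachable configuration tree requires a $\forall$-labeled concept whose description is triple-exponentially branching. I would repackage this as an abduction problem $\tup{\Kmc_n,\Phi,\Sigma}$ where $\Kmc_n$ contains the machine and counter encoding (using an auxiliary signature not in $\Sigma$), $\Phi$ is a single assertion $\Goal(a)$ stating that $a$ witnesses acceptance, and $\Sigma$ contains only a small set of role and concept names sufficient to describe the configuration tree but not the auxiliary bookkeeping vocabulary of $\Kmc_n$.

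First, I would exhibit an explicit hypothesis of triple-exponential size: a single assertion $C(a)$ where $C$ is built by recursive nesting of $\forall$ and $\sqcap$ that spells out the full accepting tree of configurations, showing that the family is actually solvable. Next, I would prove the lower bound by a pumping/compression argument analogous to~\cite{CONSERVATIVE_EXTENSIONS}: assume a hypothesis $\Hmc$ of size below the target bound; then, because $\Sigma$ hides the bookkeeping predicates, any $\ALC$-assertion in $\Hmc$ can only constrain the tree by enumerating $\forall r.\ldots$ branches up to its own branching size, so some tape cell or configuration pair distinguished in every accepting computation is collapsed. From this collapse I would construct a model of $\Kmc_n\cup\Hmc$ where $\Phi$ fails, contradicting condition~\ref{itm:entailment}.

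A crucial step, which does not appear in the conservative-extensions setting, is to handle the ABox character of hypotheses: fresh individual names and role assertions might in principle compress a $\forall$-nest into a shared subtree. I would rule this out by a tree-unravelling of any model of $\Kmc_n\cup\Hmc$: the bookkeeping in $\Kmc_n$ forces the reachable tree from $a$ to be a genuine tree of double-exponentially many configurations, so collapsing two fresh-individual successors into one is always witnessed by a contradiction-free sub-model where $\Phi$ again fails. Thus any role-based sharing still requires triple-exponentially many $\Sigma$-assertions to simulate the required branchings.

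The main obstacle is precisely this ABox-compression step: the classical construction controls only the size of a single concept, whereas in our setting the hypothesis may spread across many assertions and individuals. Getting the signature $\Sigma$ right—small enough to block any sharing trick through auxiliary vocabulary, yet large enough so that a hypothesis exists at all—is the delicate part, and I expect the bulk of the proof effort to go into formalising the unravelling lemma that reduces general ABox hypotheses back to the concept-size lower bound of~\cite{CONSERVATIVE_EXTENSIONS}.
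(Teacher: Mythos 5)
Your proposal follows essentially the same route as the paper's proof: the paper likewise repurposes the triple-exponential witness-concept lower bound of~\cite{CONSERVATIVE_EXTENSIONS} (realised there as a $2^n$-bit counter encoded by chains of $2^n$ $\Bit$/$\neg\Bit$-labelled elements rather than an ATM simulation, but the same source construction), with $\Sigma=\{r,s,\Bit,B\}$, observation $\Goal(a)$, hidden bookkeeping vocabulary, and an explicit hypothesis $C(a)$ built by nesting $\forall r$ and $\forall s$ down the double-exponentially long counter chain. The ABox-compression issue you flag is resolved exactly as you anticipate, though by a more direct device than an unravelling lemma: $\Kmc$ contains $\top\sqsubseteq\exists r.\top\sqcap\exists s.\top$, and since role assertions can never exclude additional anonymous successors, the required ``every $r$/$s$-path reaches $B$ with a correct counter'' constraint can only be enforced through universal restrictions inside a single concept assertion, so sharing via fresh individuals cannot shrink the hypothesis.
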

 \comment{
 \begin{proof}
  The idea is, similar as for the bound in Theorem~\ref{the:el-complex}, to  
  enforce the hypothesis to represent a binary tree with the concept name $B$ on 
its leafs. However, this time, instead of using the $n$-bit counter implemented 
as in Theorem~\ref{the:exp-lower-bound-flat}, we use a trick 
from~\cite{CONSERVATIVE_EXTENSIONS} to implement a $2^n$-bit counter. Here, a 
single counter value is encoded by a chain of $2^n$ elements satisfying either 
$\Bit$ or $\neg\Bit$, depending on whether the corresponding bit has the value 
$1$ or $0$. The counting then produces a chain of $2^{2^n}\cdot 2^n$ elements, 
where each consecutive $2^n$ elements represent the next number in the sequence.

The hypothesis for our problem will be of the form $C(a)$, where $C$ only uses 
universal restriction, conjunction, and negation on concept names, to express 
that every path of $r$- and $s$-successors must lead to an instance of $B$, 
and reading each path backwards from the $B$-instance to $a$, it must 
encode contain a sequence of $2^{2^n}$ increasing counter values. The 
signature is $\Sigma=\{r,s,\Bit,B\}$ and 
the observation is $\Goal(a)$. To disallow solutions such as $(\forall 
r.\bot\sqcap\forall s.\bot)(a)$, we add $\top\sqsubseteq\exists 
r.\top\sqcap\exists s.\top$ to~$\Kmc$. A concept $B'$ is used to mark elements 
from which 
every path of $s$- and $r$-successors leads to an instance of $B$:
\[
 \forall r.(B\sqcup B')\sqcap\forall s.(B\sqcup B')\sqsubseteq B'
\]
$\Bit$ is used for our counter. To identify bit positions in the current 
counter value, we use an 
$n$-bit-counter as for Theorem~\ref{the:exp-lower-bound-flat} 
which is initialised at instances of $B$. The concept $\Flip$ marks bits 
that should flip for the next counter value:
\[
 \bigsqcap_{i=1}^{n}\overline{X_i}\sqcup
 \Big(\bigsqcup\limits_{i=1}^n X_i\sqcap\forall s.(\Flip\sqcap 
\Bit)\sqcap\forall r.(\Flip\sqcap \Bit)\Big)\sqsubseteq\Flip
\]
Finally, we use a concept name $\Error$ to identify counting errors: 
$\Error$ can only be satisfied on paths on which, before reaching an instance of 
$B$, some bit that should have been flipped was not flipped. To identify those, 
we use another binary counter which is initialised 
at the wrong bit, and a concept name $\NBit$ to store the value of $\Bit$ for 
the next $2^n+1$ successors. $\Init$ marks the last $2^n$ elements before 
$B$ that encode the counter value $0$---the counting error thus must occur 
before we reach any element satisfying $\Init$.
\begin{align*}
 \Error&\sqsubseteq 
 \neg \Init\sqcap \left(\exists r.(\Error\sqcup E_0)\sqcup\exists 
s.(\Error\sqcup E_0)\right) \\
 E_0&\sqsubseteq \bigsqcap_{i=1}^n\overline{Y_i}\sqcap E 
\sqcap (\NBit\leftrightarrow \Bit) \\
 E\sqcap \NBit&\sqsubseteq\exists r.(E\sqcap \NBit)\sqcup\exists s.(E\sqcap 
\NBit) \\
 E\sqcap\neg \NBit&\sqsubseteq\exists r.(E\sqcap\neg \NBit)\sqcup\exists 
r.(E\sqcap \neg\NBit) \\
 \bigsqcap_{i=1}^n Y_i\sqcap E&\sqsubseteq 
 \exists r.E_f\sqcup \exists s.E_f \\
 E_f\sqcap\Flip &\sqsubseteq (\Bit\leftrightarrow \NBit) \quad 
 E_f\sqcap \neg \Flip \sqsubseteq (\Bit\leftrightarrow \neg \NBit)
\end{align*}
$B'$ and $\Error$ are now used together in the follwing CI:
\[B'\sqsubseteq 
\Error\sqcup 
\Goal.\] In 
order to entail $\Goal(a)$, the hypothesis has to entail $B'(a)$ by making sure 
every path of $s$ and $r$-successors leads to an instance of $B$, and 
furthermore entail $\neg\Error(a)$ by making sure on none of those paths a 
counting error can occur before reaching $B$. This requires a concept of 
triple exponential size.
 \end{proof}       
}
 We can show that this bound is tight.
 \begin{theorem}\label{the:upper-alc-complex}
  Let $\AbductionProblem$ be an $\ALC$ abduction problem. Then, there exists a 
hypothesis for $\AbductionProblem$ iff there exists a hypothesis 
of triple exponential size. 
 \end{theorem}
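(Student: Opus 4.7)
The plan is to extend the type-based constructions used for Theorems~\ref{the:flat-size-upper} and~\ref{the:el-complex} to $\ALC$ with complex concepts. Given any hypothesis $\Hmc_0$ (potentially huge, possibly using many fresh individuals), fix a model $\Imc\models\Kmc\cup\Hmc_0$ and distill from it a bounded-size hypothesis $\Hmc$. Since $\ALC$ concepts can themselves express successor structure, no fresh individuals will be needed: $\Hmc$ will take the form $\{C_a(a)\mid a\in\ind{\Kmc\cup\Phi}\}$, where each $C_a$ is an $\ALC$ concept over $\Sigma$ that encodes the $\Sigma$-restricted tree unfolding of $a^\Imc$ in $\Imc$ up to a carefully chosen depth~$k$.

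The concepts $C_{d,k}$ are defined by recursion on $k$: the base case $C_{d,0}$ is the conjunction of all $\Sigma$-concept names satisfied by $d$ in $\Imc$ together with the negations of those not satisfied. The inductive step adds, for every $r\in\Sigma\cap\NR$, an existential part $\exists r.C_{e,k-1}$ for each $r$-successor $e$ of $d$ in $\Imc$ together with a universal part $\forall r.\bigsqcup_{e} C_{e,k-1}$ ranging over the same successors, so that $C_{d,k}$ characterises $d$ up to $\Sigma$-bisimulation of depth~$k$ and $d\in C_{d,k}^\Imc$ holds by construction. We then take $\Hmc=\{C_{a^\Imc,k}(a)\mid a\in\ind{\Kmc\cup\Phi}\}$.

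Verification of the three conditions from Definition~\ref{def:abduction} is easy for \ref{itm:consistency} (since $\Imc\models\Kmc\cup\Hmc$ by construction) and \ref{itm:signature} (by inspection). The substance lies in \ref{itm:entailment}: we must pick the depth $k$ large enough that for any model $\Jmc\models\Kmc\cup\Hmc$ and any named $a$, the $\Sigma$-bisimilarity of $a^\Jmc$ to $a^\Imc$ up to depth $k$ forces $\Jmc\models\Phi$ through $\Kmc$. The correct value of $k$ is dictated by how deeply reasoning in $\Kmc$ must propagate $\Sigma$-information to affect concepts in $\sub{\Kmc\cup\Phi}$; bounding this by the number of distinct $\Sigma$-type trees that can arise in models of $\Kmc$ over the polynomially-sized subconcept set of $\Kmc\cup\Phi$ shows that a doubly exponential depth suffices.

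The size bound then follows from unrolling the recursive definition: at each of the (at most) doubly exponentially many depth levels, the concept branches over exponentially many distinct successor patterns per role, which iterates to a triple-exponential bound on $|C_{a^\Imc,k}|$, matching the lower bound. The main obstacle will be precisely the correctness of \ref{itm:entailment} and the choice of $k$: one has to prove, via the standard bisimulation-preservation theorem for $\ALC$ combined with a pumping argument over the finitely many $\Sigma$-types that can occur in models of $\Kmc$, that any mismatch between $\Jmc$ and $\Imc$ beyond depth~$k$ can be repaired to a model of $\Kmc\cup\Hmc$ without changing the truth of any subconcept of $\Kmc\cup\Phi$ at the named individuals, so that $\Phi$ must already have been forced above depth~$k$.
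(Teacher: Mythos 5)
Your overall strategy---replace $\Hmc_0$ by a bounded-depth, bounded-branching tree-shaped description of the neighbourhoods of the named individuals, and justify the depth cut by a pumping argument over repeated type information---is the right shape, and your size accounting (doubly exponential depth and branching yielding triple exponential concepts) matches the actual bound. However, there is a genuine gap at exactly the step you flag as the main obstacle, and it is not merely a matter of working out details: building $\Hmc$ from a \emph{single} model $\Imc$ of $\Kmc\cup\Hmc_0$ does not suffice. The characteristic concept $C_{a^\Imc,k}$ pins down only the $\Sigma$-reduct of a model $\Jmc$ of $\Kmc\cup\Hmc$ near $a$ up to depth $k$; it says nothing about which $\sub{\Kmc\cup\Phi}$-types (which involve non-$\Sigma$ symbols constrained by $\Kmc$) $\Jmc$ realizes there, and nothing at all about $\Jmc$ below depth $k$. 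To conclude $\Jmc\models\Phi$ you must exhibit \emph{some} model of $\Kmc\cup\Hmc_0$ with the same $\sub{\Kmc\cup\Phi}$-types at the named individuals as $\Jmc$; since $\Hmc_0$ may have modal depth far exceeding $k$, this requires grafting deep subtrees onto the depth-$k$ frontier of $\Jmc$, and the frontier elements of $\Jmc$ agree with those of $\Imc$ only on $\Sigma$-concept names, not on their full types. A pumping argument over the single types realized in the one fixed $\Imc$ cannot repair this, because the type that $\Jmc$ chooses to realize at a frontier position need not be the one $\Imc$ realizes there, yet it might still be realizable by some \emph{other} model of $\Kmc\cup\Hmc_0$---and it is precisely those alternative realizations that $\Hmc$ must continue to control. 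This is why the argument for this theorem abstracts $\Kmc\cup\Hmc_0$ by labelling each position with the \emph{set} of types realizable across all models of $\Kmc\cup\Hmc_0$ (Lemma~\ref{lem:abox-to-int-abs}), performs the pumping by merging positions with equal label \emph{sets} (Lemma~\ref{lem:int-abs-size}, which is where the bound $\ell=\lvert T_{\Kmc\cup\Phi}\rvert\cdot 2^{\lvert T_{\Kmc\cup\Phi}\rvert}$ and hence the extra exponential comes from), and only then rolls the abstraction up into concepts (Lemma~\ref{lem:int-abs-to-abox}). Your proposal collapses this set-of-types bookkeeping to a single type per position, which is exactly the simplification that works for Theorems~\ref{the:flat-size-upper} and~\ref{the:el-complex} but fails here.

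Two smaller points. First, your $\Hmc$ consists only of concept assertions, so it cannot entail role assertions in $\Phi$ (e.g.\ $\Phi=\{r(a,b)\}$ with $r\in\Sigma$ and $\Kmc=\emptyset$ has the hypothesis $\{r(a,b)\}$, but $\{(\exists r.\top)(a)\}$ does not entail it); you must retain role assertions between named individuals, as the paper's construction does for internal nodes. Second, \ref{itm:consistency} and \ref{itm:signature} are indeed immediate in your setup, so the entire burden is on \ref{itm:entailment}, where the gap above lies.
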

 To show this theorem, we use a technique similar as for 
Theorem~\ref{the:flat-size-upper}: 
we  
take an arbitrary hypothesis, and transform it into one of triple 
exponential size. However, this time, a construction based on a single model is 
not sufficient, and we have to take into account an appropriate abstraction of 
\emph{several} models of $\Kmc\cup\Hmc_0$. %
We thus proceed as follows:
\begin{enumerate}
 \item we abstract the KB $\Kmc\cup\Hmc_0$ into a model abstraction,
 \item we reduce the size of this abstraction, 
 \item based on which we construct a hypothesis $\Hmc$ of 
 triple exponential  
       size.
\end{enumerate}

In the model abstraction, elements are represented as nodes $v\in V$ that are 
labeled with a set $\lambda(v)$ of types with the intuitive meaning ``this 
element may have one of the types in $\lambda(v)$''. Role relations are 
represented using tuples $\tup{v_1,t,r,v_2}$ which are read as: if the node 
$v_1$ has type $t$, then it has an $r$-successor corresponding to $v_2$. 
Roughly, from a 
model abstraction, we can obtain a model using the following inductive 
procedure: 1) start with the nodes that represent individuals, 2) assign to 
each node a type from its label set, 3) if for those types, the node requires 
successor nodes, add those and continue in 2). To allow for unbounded paths in models for 
finite acyclic model abstractions,
we further have ``open'' nodes whose role successors are only restricted by the TBox.

\newcommand{\IntAbs}{\mathfrak{I}}
\newcommand{\Rf}{\mathfrak{R}}

\begin{definition}\label{def:interpretation-abstraction}
An \emph{interpretation abstraction} for $\tup{\Kmc,\Phi,\Sigma}$ is a 
tuple $\IntAbs=\tup{V, \lambda, s, \Rf,F}$, where 
\begin{itemize}
    \item $V$ is a set of \emph{nodes}, 
    \item $\lambda:V\rightarrow\powerset{\types{\Kmc\cup\Phi}}$ maps each node 
to 
a set of types, 
    \item partial function $s:\NI\nrightarrow V$ assigns individuals to nodes
    \item 
$\Rf\subseteq \big(V\times\types{\Kmc\cup\Phi}\times(\Sigma\cap\NR)\times 
V\big)$ 
is the 
\emph{role assignment}, 
\item and $F\subseteq V$ is the set of \emph{open nodes}. 
\end{itemize}
$\IntAbs$ \emph{abstracts an interpretation $\Imc$} if there is a subset 
$\Delta'\subseteq\Delta^\Imc$ and a 
function $h:\Delta'\rightarrow V$ s.t. 
for every $d\in\Delta'$ 
and $r\in(\Sigma\cap\NR)$:
 \begin{enumerate}[label=\textbf{I\arabic*}]
  \item\label{i:individuals} for all $a\in\NI$, if $s(a)$ is defined, then
$s(a)=h(a^\Imc)$ 
  \item\label{i:type} $\typeI{d}{\Imc}\in \lambda(h(d))$,
  \item\label{i:roles} if $h(d)\not\in F$, then for every $e\in\Delta^\Imc$, 
$\tup{d,e}\in 
r^\Imc$ iff $e\in\Delta'$ and $\tup{h(d),\typeI{d}{\Imc},r,h(e)}\in\Rf$.
\end{enumerate}
\end{definition}

We need some additional requirements to make sure an interpretation 
abstraction can be represented as an \ALC ABox $\Hmc$ s.t. 
$\sig{\Hmc}\subseteq\Sigma$. 
We call a node $v\in V$ for which there exists $a\in\NI$ with $s(a)=v$ 
\emph{internal node}, and otherwise \emph{outgoing node}. If $v=s(a)$ for 
$a\in\ind{\Kmc\cup\Phi}$, we call $v$ \emph{named node}. 
A 
\emph{path in $\tup{V,\lambda,s,\Rf,F}$} is a sequence 
\[
    \pi=v_0,t_0,r_0,v_1,t_1,r_1,\ldots t_{n-1}r_{n-1}v_n
\]
s.t. for each 
$i\in\iinv{0}{n-1}$, $\tup{v_i,t_i,r_i,t_{i+1}}\in \Rf$. $\pi$ is cyclic if 
it contains a node twice, and its \emph{length} is its number of nodes.

\begin{definition}
$\IntAbs=\tup{V,\lambda,s,\Rf,F}$ is called \emph{$\ALC$-conform} if 
\begin{enumerate}[label=\textbf{D\arabic*},series=conformSig]   
 \item\label{d:acyclic} there is no cyclic path between outgoing nodes  ,
 \item\label{d:unique-roles} for every internal node $v$, 
if $\tup{v,t,r,v'}\in \Rf$, then $\tup{v,t',r,v'}\in \Rf$ for every 
    $t'\in\lambda(v')$, and
 \item\label{d:universal} for every 
$\tup{v_1,t,r,v_2}\in \Rf$, where $v_2$ is internal, there exists 
$\tup{v_1,t,r,v_2'}\in \Rf$ s.t. $v_2'$ is outgoing. 
\end{enumerate}
We say that $\IntAbs$ is \emph{$\Sigma$-complete} 
if
\begin{enumerate}[resume*=conformSig]
    \item\label{d:signature-nodes} for every $v\in V$, and 
        $t_1\in\lambda(v)$, $\lambda(v)$ contains every type 
        $t_2\in\types{\Kmc\cup\Phi}$ s.t. 
        $\lambda(t_1)\cap\Sigma=\lambda(t_2)\cap\Sigma$, and 
    \item\label{d:signature-edges} for every $\tup{v_1,t,r,v_2}\in \Rf$ and 
$t'\in\types{\Kmc\cup\Phi}$ s.t. $t\cap\Sigma=t'\cap\Sigma$, also
$\tup{v_1,t',r,v_2}\in \Rf$.
\end{enumerate}
\end{definition}

\ref{d:acyclic} ensures that we can represent the outgoing paths from a 
node $s(a)$ in a single assertion $C(a)$. \ref{d:unique-roles} 
expresses that the relations between internal nodes is independent of the type, 
which allows to represent them using role assertions. %
\ref{d:universal} is needed to capture allowed paths using universal role 
restrictions.
~\ref{d:signature-nodes} 
and~\ref{d:signature-edges} ensure
that $\IntAbs$ can be captured using only names in $\Sigma$.

\comment{
\begin{restatable}{lemma}{LemABoxToIntAbs}\label{lem:abox-to-int-abs}
 For every $\ALC$ ABox $\Amc$, there exists an $\ALC$-conform, 
$\sig{\Amc}$-complete interpretation abstraction $\IntAbs$ s.t. for every model 
$\Imc$ of $\Kmc$, $\IntAbs$ abstracts $\Imc$ iff $\Imc\models\Amc$.
\end{restatable}
}
From here on, we fix an abduction problem 
$\AbductionProblem=\tup{\Kmc,\Phi,\Sigma}$.
We say that $\IntAbs$ \emph{explains $\Phi$} 
iff some model of $\Kmc$ is abstracted by $\IntAbs$ and every model of $\Kmc$ 
that is abstracted by $\IntAbs$ entails $\Phi$. If $\IntAbs$ is $\ALC$-conform, 
$\Sigma$-complete, and explains $\Phi$, we call it a \emph{hypothesis 
abstraction}.

\begin{restatable}{lemma}{LemABoxToIntAbs}\label{lem:abox-to-int-abs}
 Every hypothesis $\Hmc$ for $\AbductionProblem$ can be translated into a 
hypothesis 
abstraction.
\end{restatable}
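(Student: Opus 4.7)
The plan is to build the hypothesis abstraction $\IntAbs=\tup{V,\lambda,s,\Rf,F}$ by unraveling each complex concept assertion in $\Hmc$ into a tree of outgoing nodes attached to the named node of its individual, labelling every node with the set of types realised at that position by some model of $\Kmc\cup\Hmc$, and then closing $\lambda$ and $\Rf$ under $\Sigma$-agreement.

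Concretely, for every $a\in\ind{\Kmc\cup\Hmc\cup\Phi}$ I would introduce one internal named node $v_a=s(a)$. For each assertion $C(a)\in\Hmc$ I recursively unravel the subconcept tree of $C$, attaching an outgoing child for every subformula of the form $\exists R.D$ or $\forall R.D$, and declaring the leaves of this tree to be open nodes in $F$. The role assignment $\Rf$ then contains a tuple $\tup{v_a,t,r,v_b}$ for each $r(a,b)\in\Hmc$ and every admissible $t$, the analogous tuples along each unraveling branch, and, for every edge into an internal node, a parallel edge to a fresh outgoing copy; this last step secures \ref{d:universal}. Finally I let $\lambda(v)$ consist of all types $t\in\types{\Kmc\cup\Phi}$ realised at the position of $v$ by some model of $\Kmc\cup\Hmc$, and close $\lambda$ and $\Rf$ under $\Sigma$-agreement to satisfy \ref{d:signature-nodes} and \ref{d:signature-edges}.

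The $\ALC$-conformity conditions then fall out of the construction: the unraveling is a forest over the internal roots, so there are no cycles among outgoing nodes (\ref{d:acyclic}); role edges between internal nodes come only from $\Hmc$-assertions added uniformly over admissible types (\ref{d:unique-roles}); and the parallel outgoing copies provide \ref{d:universal}. Consistency of $\Kmc\cup\Hmc$ yields a model $\Imc_0$ abstracted by $\IntAbs$, via the map sending each individual to its named node and extended along the unraveling by Skolem witnesses whose types land in the appropriate $\lambda$-sets, so at least one model of $\Kmc$ is abstracted.

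The main obstacle is the other direction of ``explains $\Phi$'': every model $\Imc\models\Kmc$ abstracted by $\IntAbs$ must satisfy $\Hmc$, and hence $\Phi$. I would prove $\Imc\models C(a)$ for every assertion in $\Hmc$ by structural induction along the unraveling of $C$, using \ref{i:type} to convert $\lambda$-labels into concept memberships and \ref{i:roles} to trace role successors at non-open positions, so that existentials in $C$ obtain witnesses in $\Imc$ and universals in $C$ are uniformly constrained via the parallel-copy edges. Role assertions are handled directly by \ref{i:roles}, and then $\Imc\models\Phi$ follows from $\Kmc\cup\Hmc\models\Phi$. The subtle point requiring care throughout is that $\types{\Kmc\cup\Phi}$ indexes only subconcepts of $\Kmc\cup\Phi$, while $\Hmc$ may mention $\Sigma$-names outside this set; the induction therefore has to carry satisfaction of those names through the unraveling structure rather than reading them off the types.
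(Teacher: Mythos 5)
Your overall architecture matches the paper's: one internal node per individual, a forest of outgoing nodes unravelled beneath each named node with leaves declared open, labels $\lambda(v)$ collecting the types realised at that position across models of $\Kmc\cup\Hmc$, and a final closure of $\lambda$ and $\Rf$ under $\Sigma$-agreement. However, there are two genuine gaps. First, you drive the unravelling by the \emph{syntactic subformulas} of the concepts in $\Hmc$, whereas the paper drives it by the existential restrictions $\exists r.C$ occurring in the \emph{types} in $\lambda(v)$ (which range over $\sub{\Kmc\cup\Phi}$), up to depth equal to the nesting depth of $\Hmc$, opening the frontier. This difference matters because of the ``only if'' half of \ref{i:roles}: at a non-open node, \emph{every} $r$-successor ($r\in\Sigma\cap\NR$) of a corresponding domain element must be covered by an $\Rf$-tuple. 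If the TBox forces successors over a $\Sigma$-role that is not mentioned in the relevant concept of $\Hmc$ (e.g.\ $\top\sqsubseteq\exists r.\top$ with $r\in\Sigma$, as in the paper's own triple-exponential lower-bound construction), your abstraction has no child to receive them, so \emph{no} model of $\Kmc$ is abstracted and the ``some model is abstracted'' half of ``explains $\Phi$'' fails. A related wrinkle: attaching a child for every $\forall R.D$ subformula introduces $\Rf$-edges that the back-translation of Lemma~\ref{lem:int-abs-to-abox} turns into existential restrictions, which may strengthen the hypothesis spuriously.

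Second, for the hard direction --- every model of $\Kmc$ abstracted by the $\Sigma$-closed $\IntAbs$ satisfies $\Hmc$ and hence $\Phi$ --- you correctly identify the obstacle (subconcepts of $\Hmc$ need not belong to $\sub{\Kmc\cup\Phi}$, so they cannot be read off the types via \ref{i:type}) but you do not supply the idea that overcomes it. ``Carrying satisfaction of those names through the unravelling'' is not a proof plan: the labels simply contain no information about concepts outside $\sub{\Kmc\cup\Phi}$, and a given successor may map to a child created for a different subformula, so a direct induction on $C\in\sub{\Hmc}$ over the labels does not go through. The paper's actual argument is indirect: given an abstracted model $\Imc$, it selects for each $d$ a type $t'\in\lambda_1(h(d))$ with $t'\cap\Sigma=\typeI{d}{\Imc}\cap\Sigma$, extracts from the construction a concrete reference model $\Imc_1$ of $\Kmc\cup\Hmc$ realising exactly these types at the corresponding positions, and proves by a depth-bounded structural induction (a $\Sigma$-bisimulation argument) that $d\in C^\Imc$ iff $d^{\Imc_1}\in C^{\Imc_1}$ for all $C\in\sub{\Hmc}$; since $\Imc_1\models\Hmc$, so does $\Imc$. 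Without this detour through a reference model in $\AllModels$, the claim that $\Sigma$-closure preserves ``explains $\Phi$'' remains unproved.
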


Now that we can translate hypotheses into an alternative representation, the 
next step is to decrease their size while making sure they still explain the 
observation. For this, we can now use similar techniques as in 
Theorems~\ref{the:flat-size-upper} and~\ref{the:el-complex}, where we now 
identify nodes $v\in V$ based on the their label $\lambda(v)$ instead of on a 
single type.

\begin{restatable}{lemma}{LemIntAbsSize}\label{lem:int-abs-size}
 Let $\IntAbs$ be a hypothesis abstraction. 
Then, $\IntAbs$ can be transformed into a hypothesis abstraction 
$\IntAbs'=\tup{V',\lambda',\Rf',F'}$ where, for $\ell=\lvert 
T_{\Kmc\cup\Phi}\rvert\cdot 2^{\lvert 
T_{\Kmc\cup\Phi}\rvert}$, i) $V'$ contains at most $\ell$ 
internal nodes, ii) every $v\in V'$ has at most $\ell$ successors in $\Rf'$, and
iii)~every path of outgoing nodes contains at most $\ell$ nodes.%
\comment{
Then, there exists a $\Sigma$-complete, $\ALC$-conform interpretation 
abstraction 
$\IntAbs'=\tup{V',\lambda',\Rf',F'}$ that explains $\Phi$ and for which

\begin{enumerate}
    \item $V'$ has at most $2^{\lvert\types{\Kmc\cup\Phi}\rvert}$ internal 
nodes,
    \item every node has at most $2^{\lvert\types{\Kmc\cup\Phi}\rvert}$ 
successors,
    \item every acyclic path that does not go through a named node has a length 
bounded by $\lvert\ind{\Kmc\cup\Phi}\rvert\times 
2^{\lvert\types{\Kmc\cup\Phi}\rvert}$, and
    \item if $\IntAbs$ explains $\Phi$, then so does $\IntAbs'$.
\end{enumerate}
}%
\end{restatable}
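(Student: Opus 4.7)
The plan is to shrink $\IntAbs=\tup{V,\lambda,s,\Rf,F}$ by three successive prunings, each exploiting that labels $\lambda(v)\subseteq\types{\Kmc\cup\Phi}$ take at most $2^{|\types{\Kmc\cup\Phi}|}$ different values. The main idea in every step is to identify nodes by their entire label set---rather than by a single type, as in Theorems~\ref{the:flat-size-upper} and~\ref{the:el-complex}---and to collapse duplicates while keeping enough structure to maintain $\ALC$-conformity and $\Sigma$-completeness.

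First, I cluster the internal non-named nodes by the equivalence $v\sim v'$ iff $\lambda(v)=\lambda(v')$, retain one representative per class, and redirect $s$ and $\Rf$ through the quotient. The named nodes are kept untouched, since~\ref{i:individuals} forces one per individual in $\ind{\Kmc\cup\Phi}$; together with the at most $2^{|\types{\Kmc\cup\Phi}|}$ representatives, this keeps the total number of internal nodes within~$\ell$. Second, I reduce the out-degree: for every triple $(v,t,r)$ I keep at most one outgoing edge per distinct target label, preferring outgoing targets over internal ones so that~\ref{d:universal} is preserved. The number of edges out of any node is then at most $|\types{\Kmc\cup\Phi}|\cdot 2^{|\types{\Kmc\cup\Phi}|}=\ell$. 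Third, I shorten outgoing paths: by~\ref{d:acyclic} such paths are acyclic, so whenever two outgoing nodes $v_i,v_j$ on a common path satisfy $\lambda(v_i)=\lambda(v_j)$, I reroute the incoming edges of $v_j$ to $v_i$ and garbage-collect unreachable nodes; iterating, every outgoing path has at most $2^{|\types{\Kmc\cup\Phi}|}\leq\ell$ nodes.

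The main obstacle is proving that the pruned $\IntAbs'$ is still a hypothesis abstraction, and in particular still explains $\Phi$. Existence of a model of $\Kmc$ abstracted by $\IntAbs'$ follows by taking any model $\Imc$ abstracted by $\IntAbs$ (which exists since $\IntAbs$ explains $\Phi$) and reassigning elements previously mapped to merged or deleted nodes to their surviving representatives; conditions~\ref{i:individuals}--\ref{i:roles} transfer because representatives share labels with the nodes they replace and inherit their outgoing edges. The harder direction is to show that every model $\Jmc$ of $\Kmc$ abstracted by $\IntAbs'$ via some $h'$ still entails $\Phi$. I plan to unfold $\Jmc$ into a model $\Imc$ abstracted by the original $\IntAbs$ by mapping each $d$ with $h'(d)=v'$ to a pre-image of $v'$ in $V$ whose label contains $\typeI{d}{\Jmc}$, duplicating $d$ and its successors wherever an edge or a path segment was collapsed in phases~2 and~3, so that~\ref{i:roles} holds w.r.t.\ the richer original $\Rf$. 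The $\Sigma$-completeness conditions~\ref{d:signature-nodes} and~\ref{d:signature-edges} guarantee that for every $\Sigma$-type and $\Sigma$-edge witnessed in $\Jmc$ a matching configuration exists in $\IntAbs$, so the unfolding never gets stuck. Because the unfolding preserves the types from $\types{\Kmc\cup\Phi}$, it yields a model $\Imc$ that is element-wise equivalent to $\Jmc$ on $\sub{\Kmc\cup\Phi}$, so $\Imc\models\Phi$ (from $\IntAbs$ explaining $\Phi$) transfers to $\Jmc\models\Phi$. Preservation of $\ALC$-conformity and $\Sigma$-completeness through each pruning is a routine check: acyclicity is explicit from phase~3, while~\ref{d:unique-roles},~\ref{d:universal},~\ref{d:signature-nodes}, and~\ref{d:signature-edges} survive because all chosen representatives agree with the originals on labels and on $\Sigma$-closures.
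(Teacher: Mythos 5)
Your overall architecture matches the paper's: remove unreachable nodes, merge internal nodes with equal labels, prune parallel edges with equal target labels, and shorten outgoing paths by a pigeonhole argument, in each case verifying that the reduced abstraction still abstracts some model of $\Kmc$ and that every abstracted model of $\Kmc$ still entails $\Phi$ by transferring models back and forth. However, your path-shortening phase has a genuine gap. The role assignment $\Rf$ is type-indexed, and Condition~\ref{i:roles} ties the successors of an element $d$ to its \emph{specific} type $\typeI{d}{\Imc}$, not merely to the label set of $h(d)$. Along a path $v_0,t_0,r_0,v_1,\ldots$ each node carries an active type $t_i\in\lambda(v_i)$, and when you cut out the segment between two repeating nodes you must splice the predecessor $d_{i-1}$ (of type $t_{i-1}$) onto an element of the \emph{same} type as the one it previously pointed to: otherwise the existential and universal restrictions in $t_{i-1}$ may no longer be witnessed, the type of $d_{i-1}$ changes, and this cascades up the path to the named nodes, destroying either modelhood of $\Kmc$ or entailment of $\Phi$. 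The paper therefore applies the pigeonhole to \emph{pairs} $\tup{t_i,\lambda(v_i)}$ of active type and label set, which is exactly why the bound is $\ell=\lvert T_{\Kmc\cup\Phi}\rvert\cdot 2^{\lvert T_{\Kmc\cup\Phi}\rvert}$ rather than the $2^{\lvert T_{\Kmc\cup\Phi}\rvert}$ you obtain; matching on $\lambda(v_i)=\lambda(v_j)$ alone is not sufficient. The same omission resurfaces in your ``unfolding'' argument for the harder direction: reinserting a collapsed segment is only type-preserving if the endpoint of the reinserted segment has the same type as the element it replaces, which is again the condition $t_i=t_j$ that you dropped.

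A second, more mechanical problem: you reroute the \emph{incoming} edges of the later node $v_j$ to the earlier node $v_i$. Since $v_i$ is an ancestor of $v_j$ on the path, this creates the cycle $v_i\rightarrow\cdots\rightarrow v_{j-1}\rightarrow v_i$ among outgoing nodes and violates~\ref{d:acyclic}. The correct operation (as in the paper) is the opposite one: redirect the edge \emph{into} $v_i$ so that it points to $v_j$, thereby skipping the intermediate segment and strictly shortening the path while preserving acyclicity. The remaining phases (merging internal nodes with equal labels, pruning parallel edges with equal target labels, and your care for~\ref{d:universal} when choosing representatives) are essentially the paper's argument and are fine.
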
%
\comment{
\begin{proof}[Proof sketch.]
 The crucial observation is that we can eliminate any repetitions of 
 labels that occur in a path, provided that the path does not go through an 
named node. Specifically, if such a path goes through nodes $v_1,v_2\in T$ 
s.t. $\lambda(v_1)=\lambda(v_2)$, we can remove the nodes between $v_1$ and 
$v_2$ and connect $v_1$ directly to the successors of $v_2$.
This way, we can bound the length of paths between individual 
nodes, as well as of outgoing paths, to double exponential length. Similarly, 
we can merge pairs of successors of the same node that have the same label.
\end{proof}
}%
The final ingredient to establish Theorem~\ref{the:upper-alc-complex} 
is the following lemma.
\begin{restatable}{lemma}{LemIntAbsToABox}\label{lem:int-abs-to-abox}
 For every $\ALC$-conform, $\Sigma$-complete interpretation abstraction 
$\IntAbs=\tup{V,\lambda,s,\Rf,F}$, there exists an ABox $\Hmc$ s.t. i) the 
models of $\Kmc\cup\Hmc$ are 
exactly the models of $\Kmc$ accepted by $\IntAbs$, ii)
$\lvert\ind{\Hmc}\rvert\leq\lvert V\rvert$, and iii) for every 
$a\in\ind{\Hmc}$, 
$\Hmc$ contains one assertion $C(a)$, with $\size{C}$ 
exponentially bounded by the path lengths between outgoing nodes in~$\IntAbs$.
\end{restatable}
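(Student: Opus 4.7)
The plan is to construct $\Hmc$ by unfolding the outgoing subtrees of $\IntAbs$ into complex concept assertions, while retaining the internal role structure via explicit role assertions. I would introduce one individual $a_v$ per internal node $v$ (reusing $a$ itself if $v=s(a)$ for some $a\in\ind{\Kmc\cup\Phi}$, which yields property~(ii)), and for every $\tup{v_1,t,r,v_2}\in\Rf$ with both $v_1,v_2$ internal, add $r(a_{v_1},a_{v_2})$ to $\Hmc$; by~\ref{d:unique-roles} this is independent of the choice of $t$. Outgoing subtrees, whose roots carry no individual, must instead be folded entirely into a single concept attached to the nearest internal ancestor, since $\ALC$ provides no nominals to refer to $a_{v_2}$ from inside a concept.

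For each node $v$, recursively define $D_v = \bigsqcup_{t\in\lambda(v)} \psi(v,t)$, with $\psi(v,t)=\tau_\Sigma(t)$ if $v\in F$, and otherwise
\[
\psi(v,t) = \tau_\Sigma(t) \sqcap \bigsqcap_{r\in\Sigma\cap\NR}\Big(\forall r.\bigsqcup_{\tup{v,t,r,v'}\in\Rf} D_{v'} \,\sqcap\, \bigsqcap_{\substack{v' \text{ outgoing} \\ \tup{v,t,r,v'}\in\Rf}} \exists r.D_{v'}\Big),
\]
where $\tau_\Sigma(t)=\bigsqcap_{A\in t\cap\Sigma}A\sqcap\bigsqcap_{A\in(\Sigma\cap\NC)\setminus t}\neg A$ is the $\Sigma$-atomic description of $t$. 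Existentials are supplied only for outgoing successors, since internal ones are enforced by the role assertions; internal successors nevertheless appear in the universal disjunct so that the asserted $a_{v_2}$ remains admissible as an $r$-successor. Termination of the recursion follows from~\ref{d:acyclic}, which makes the subgraph of outgoing nodes acyclic; each step contributes a factor bounded by $\lvert T_{\Kmc\cup\Phi}\rvert\cdot\lvert\Sigma\rvert$ times the local branching, so iterating along an outgoing path of length $\ell$ yields a concept singly exponential in $\ell$, giving~(iii). The ABox $\Hmc$ is then the union of the role assertions above and a single assertion $D_v(a_v)$ per internal $v$, confirming the ``one concept assertion'' stipulation.

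For~(i), the forward direction---every $\Imc\models\Kmc\cup\Hmc$ is abstracted by $\IntAbs$---is obtained by building $\Delta'$ and $h$ inductively from the named individuals outward: the $\Sigma$-disjunct satisfied by $a_v^\Imc$ in $D_v$ lifts to a full type in $\lambda(v)$ via~\ref{d:signature-nodes}, and~\ref{d:signature-edges} together with~\ref{d:universal} let us route every $r$-successor in $\Imc$ to an appropriate node of $\Rf$, extending $\Delta'$ by the fresh witnesses of each $\exists r.D_{v'}$. The backward direction---every model of $\Kmc$ abstracted by $\IntAbs$ satisfies $\Hmc$---is where the main work lies. Here one uses~\ref{i:type} to pick the disjunct of $D_v$ matching $\typeI{a_v^\Imc}{\Imc}$ and then~\ref{i:roles} to verify each $\forall r$ and $\exists r$. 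The delicate case is an $r$-successor whose $h$-image is an internal node $v_2$: the universal contains $D_{v_2}$ whose outer disjunction ranges over $\lambda(v_2)$, so~\ref{i:type} must be invoked recursively at $v_2$, and~\ref{d:universal} is essential for ensuring that the outgoing alternative witnessing the $\forall$-constraint cannot disagree with the type already realised at the internal successor forced through the role assertion. Handling this interaction cleanly, rather than the size accounting, is the step I expect to require the most care.
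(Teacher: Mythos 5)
Your construction follows the same overall plan as the paper's proof: one individual per internal node, role assertions between internal nodes (justified by \ref{d:unique-roles}), existential restrictions only for outgoing successors, and a universal restriction whose disjunction ranges over \emph{all} successors so that the asserted internal successors remain admissible. However, there is a genuine gap in the way you define the concepts. Your recursion
\[
\psi(v,t) \;=\; \tau_\Sigma(t)\sqcap\bigsqcap_{r}\Big(\forall r.\bigsqcup_{\tup{v,t,r,v'}\in\Rf} D_{v'}\;\sqcap\;\cdots\Big)
\]
places the \emph{full} concept $D_{v'}$ of every successor, including internal ones, under the universal restriction, and you justify termination solely by \ref{d:acyclic}. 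But \ref{d:acyclic} only forbids cycles among \emph{outgoing} nodes; internal nodes may lie on cycles (indeed this is the whole reason they are treated separately). If $v_1$ and $v_2$ are internal with $\tup{v_1,t,r,v_2},\tup{v_2,t',r',v_1}\in\Rf$, then $D_{v_1}$ mentions $D_{v_2}$ under a $\forall r$, which in turn mentions $D_{v_1}$, so your definition is not well-founded and no finite concept results.

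The paper resolves exactly this issue by working with a depth-indexed family $C_v^{(i)}$: the existential conjuncts use the full concepts $C_{v'}$ of outgoing successors (terminating by \ref{d:acyclic}), while the universal conjuncts use $C_{v'}^{(i-1)}$ with a strictly decreasing index, bottoming out at $C_{v'}^{(0)}$, which is just the $\Sigma$-atomic type description. This truncation is harmless for correctness because an internal successor $a_{v_2}$ does not need to be fully described inside the universal restriction --- its atomic type suffices to make it admissible, and the full constraint on it is imposed by its own assertion $C_{v_2}(a_{v_2})$ together with the role assertions. Your remark that ``the asserted $a_{v_2}$ remains admissible as an $r$-successor'' captures the right intuition, but implementing it requires this depth bound (or an equivalent device) rather than the unrestricted recursion you wrote; the rest of your argument (size accounting via path lengths, and the two directions of (i) using \ref{i:type}, \ref{i:roles}, \ref{d:universal} and \ref{d:signature-nodes}) matches the paper's proof.
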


Unfortunately, interpretation abstractions cannot be as easily constructed by 
a 
deterministic procedure as we did for Theorems~\ref{the:compute-flat} 
and~\ref{the:el-complex},
\comment{
The reason is that we have to non-deterministically 
select a type for each node in order to verify whether an interpretation 
abstraction satisfies any model of $\Kmc$ - this cannot be decided by a simple 
propagation strategy regarding the internal nodes, and consequently, it is not 
clear which role assignments between internal nodes we are allowed to put.
}
as there can to be non-trivial interactions between connected 
internal nodes, and we only have a double exponential upper bound on their 
number. To decide \ALC abduction, we can however guess an interpretation 
abstraction within 
the bounds of Lemma~\ref{lem:int-abs-size}, and then guess assignments of types 
to nodes to obtain its models. We thus obtain the following theorem.

\comment{
We were not yet able to obtain tight bounds for the decision problem, and 
currently believe that the problem could indeed be $\NTwoExpTime$-hard or more. 
Using Lemma~\ref{lem:int-abs-size}, we can however obtain the 
following upper bound.
}
\begin{restatable}{theorem}{TheDecideComplexALCAbduction}
$\ALC$ ABox abduction is in $\NTwoExpTime^\NP$. 
\end{restatable}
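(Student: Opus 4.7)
The plan is to combine the preceding lemmas to reduce the decision problem to a ``guess-and-verify'' procedure over a bounded space of interpretation abstractions. By Lemmas~\ref{lem:abox-to-int-abs} and~\ref{lem:int-abs-size}, a hypothesis for $\AbductionProblem$ exists iff there is a hypothesis abstraction $\IntAbs=\tup{V,\lambda,s,\Rf,F}$ with at most $\ell=\lvert\types{\Kmc\cup\Phi}\rvert\cdot 2^{\lvert\types{\Kmc\cup\Phi}\rvert}$ internal nodes, at most $\ell$ successors per node in $\Rf$, and outgoing paths of length at most $\ell$. Since $\lvert\types{\Kmc\cup\Phi}\rvert$ is at most exponential in the size of the input, the encoding of $\IntAbs$ is at most double exponential, so such a structure can be guessed within $\NTwoExpTime$.

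After guessing $\IntAbs$, the algorithm verifies that it is $\ALC$-conform (conditions~\ref{d:acyclic}--\ref{d:universal}) and $\Sigma$-complete (\ref{d:signature-nodes}--\ref{d:signature-edges}); these are local checks running in time polynomial in $\lvert\IntAbs\rvert$ and hence within the available budget. The remaining requirement is that $\IntAbs$ \emph{explains}~$\Phi$, which splits into two parts: (a) some model of $\Kmc$ is abstracted by $\IntAbs$, and (b) every such model satisfies~$\Phi$. The central observation is that, with $\IntAbs$ fixed, a model of $\Kmc$ abstracted by $\IntAbs$ is essentially determined by a type assignment $\tau\colon V\rightarrow\types{\Kmc\cup\Phi}$ with $\tau(v)\in\lambda(v)$, together with $\Kmc$-consistent extensions attached below the open nodes in~$F$. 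Given such a $\tau$, one checks in time polynomial in $\lvert\IntAbs\rvert$ that the chosen types are mutually consistent with~$\Rf$, that existential and universal restrictions are realisable (using the precomputed set $\types{\Kmc\cup\Phi}$), and that the assertions of~$\Kmc$ about named individuals are respected.

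Consequently, (a) is a positive $\NP$ problem over $\IntAbs$, while the negation of (b) asks for a type assignment $\tau$ that induces a model of $\Kmc$ but falsifies some $C(a)\in\Phi$ or $r(a,b)\in\Phi$, which is again an $\NP$ problem. Both can therefore be resolved with a single query each to the $\NP$ oracle; since the queries have size at most double exponential in the input, they are legitimate for an $\NTwoExpTime^{\NP}$ machine, yielding the claimed upper bound. The main obstacle will be correctly handling the open nodes in~$F$: one must argue that the $\Kmc$-consistent subtrees hanging below them depend only on the realisability of the assigned type in $\types{\Kmc\cup\Phi}$, and that they never contain individuals relevant to~$\Phi$ (all of $\Phi$'s individuals map into $V$ via~$s$), so the entailment check in~(b) reduces cleanly to the types and $\Rf$-edges at named and internal nodes without having to reason about an unboundedly deep continuation of the model.
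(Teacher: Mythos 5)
Your proposal follows essentially the same route as the paper: guess a hypothesis abstraction within the bounds of Lemma~\ref{lem:int-abs-size} (the correspondence to hypotheses being supplied by Lemmas~\ref{lem:abox-to-int-abs} and~\ref{lem:int-abs-to-abox}), then use the $\NP$ oracle once to certify that some model of $\Kmc$ is abstracted and once to refute the existence of an abstracted countermodel to $\Phi$, each via a guessed type assignment checked locally. The one point to tighten is your claim that the guessed structure is automatically double exponential: the bounds of Lemma~\ref{lem:int-abs-size} (branching at most $\ell$, outgoing paths of length at most $\ell$) would naively permit on the order of $\ell^{\ell}$ outgoing nodes, i.e.\ a triple-exponential tree, and the paper closes this by observing that outgoing nodes with the same label can be reused, capping the number of distinct outgoing nodes at $2^{\lvert\types{\Kmc\cup\Phi}\rvert}$.
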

\comment{
\begin{proof}[Proof sketch]
 We guess an $\ALC$-conform, $\Sigma$-complete interpretation 
abstraction $\IntAbs$ within the size bounds of~\ref{lem:int-abs-size}. Because 
we can reuse outgoing nodes to which there is the same path from an individual 
node, it suffices to guess one such interpretation abstraction that is of at 
most double exponential size.
We then verify it 
abstracts some model of $\Kmc$ by guessing for each node $v$ a type 
$t\in\lambda(v)$ and checking whether those types can be implemented by a model 
of $\Kmc$. This check is possible in polynomial time by just considering a 
node, its assigne type, and its successors. 
Finally, we verify that it explains the observation by making 
another such guess to verify whether there exists a model $\Imc$ of $\Kmc$ s.t. 
$\Imc\not\models\Phi$. 
\end{proof}
}

\section{Size-Restricted Abduction}
Because hypotheses can become very large, a natural requirement is to compute  
hypotheses of minimal or bounded size. 
We here obtain the following complexities.

\begin{theorem} 
 Size restricted $\Lmc$ ABox abduction is
\begin{itemize}
  \item $\NP$-complete for $\Lmc=\EL$,
  \item $\NExpTime$-complete for $\Lmc=\ELbot$,
  \item $\NExpTime^\NP$-complete for the flat variant and 
$\Lmc\in\{\ALC,\ALCI\}$, and
  \item in $\TwoExpTime$ for $\Lmc=\ALCIQ$.
 \end{itemize}
\end{theorem}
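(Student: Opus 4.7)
The plan is to handle the four cases by combining the size and complexity results already established with standard guess-and-check strategies, and to establish matching lower bounds by adapting the constructions behind Theorems~1 and~3.

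For the upper bounds I would proceed case by case. For $\Lmc=\EL$, the trivial flat hypothesis described at the start of Section~3 has polynomial size, so any instance either admits a hypothesis of polynomial size or none at all; a candidate of size at most $n$ can be guessed and verified in polynomial time using \EL reasoning, giving \NP membership. For $\Lmc=\ELbot$, Theorem~2 bounds hypothesis size by an exponential in the input, so I guess a flat ABox of exponential size (clipped at $n$) and verify \ref{itm:consistency} and \ref{itm:entailment} by \ELbot reasoning in time polynomial in the guessed hypothesis, giving \NExpTime. For the flat variant with $\Lmc\in\{\ALC,\ALCI\}$, I would follow the $\Hmc_s$-construction from the proof sketch of Theorem~3: the type set $T_{\Kmc\cup\Phi}$ is of exponential size and computable in \ExpTime, so I guess an assignment $s:I\to T_{\Kmc\cup\Phi}$ with $I$ of exponential cardinality; the resulting $\Hmc_s$ is flat and of at most exponential size, \ref{itm:consistency} can be verified syntactically from $s$ and the types, and $\Kmc\cup\Hmc_s\models\Phi$ is decidable by a single \NP-oracle call that guesses a countermodel with at most one element per type, exactly as in the sketch of Theorem~3; the size condition $\size{\Hmc_s}\leq n$ is directly checkable, yielding $\NExpTime^\NP$. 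For $\Lmc=\ALCIQ$, the binary encoding of $n$ bounds $\size{\Hmc}$ by at most an exponential, so $|\Kmc\cup\Hmc|$ is exponential; because $\ALCIQ$ consistency and instance checking are in \ExpTime in the KB size (with binary number coding), the verification step runs in \TwoExpTime, and the nondeterministic guess of $\Hmc$ fits within the same bound.

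For the lower bounds I would tighten the existing constructions to exploit the size bound $n$. For $\EL$, a direct reduction from \textsc{SAT} via a polynomial budget gives \NP-hardness. For $\ELbot$, the $2^n$-counter of Theorem~1 can be extended to encode an exponentially-bounded nondeterministic Turing machine computation, with $n$ tuned to force the hypothesis to describe an accepting run, giving \NExpTime-hardness. For flat $\ALC$/$\ALCI$, I would reduce from a canonical $\NExpTime^\NP$-complete problem such as succinctly encoded $\exists\forall$-\textsc{SAT}, letting the guessed flat hypothesis play the role of the existential exponential witness while the TBox forces $\Phi$ to be entailed only when every exponentially-described universal counter-assignment is refuted, thereby invoking the \NP oracle layer.

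The main obstacle is the tight $\NExpTime^\NP$ lower bound. The challenge is to construct a single reduction that simultaneously relies on the exponential size of a \emph{flat} hypothesis (for the outer \NExpTime layer) and on the coNP-hardness of $\ALC$/$\ALCI$ entailment (for the \NP oracle), while respecting flatness and the signature restriction. I expect to combine the exponential counter from Theorem~1 with the type-based coNP machinery behind Theorem~3, choosing $\Sigma$ so that the only non-trivial choice in any hypothesis is an exponential bit-string, and shaping the TBox so that $\Phi$ is entailed exactly when that bit-string encodes a valid $\exists\forall$ witness.
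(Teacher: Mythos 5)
Your overall architecture (guess-and-check upper bounds exploiting the size bounds of Theorems~\ref{the:exp-lower-bound-flat}--\ref{the:flat-size-upper}, plus reductions tuned to the budget $n$ for hardness) matches the paper, but there is a genuine flaw in your $\NExpTime^\NP$ membership argument. You propose to guess a selector function $s$, build the canonical $\Hmc_s$ from the proof of Theorem~\ref{the:compute-flat}, and then check $\size{\Hmc_s}\leq n$. That algorithm is not complete for the size-restricted problem: $\Hmc_s$ is by construction the \emph{maximal} hypothesis compatible with $s$ --- it contains one fresh individual $b_t$ for every type $t\in T_{\Kmc\cup\Phi}$ and every role/concept assertion licensed by the types --- so its size is always essentially the full exponential, and the test $\size{\Hmc_s}\leq n$ fails even when a small hypothesis (a strict subset of some $\Hmc_s$) exists. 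The completeness claim behind Theorem~\ref{the:compute-flat} only says that \emph{some} $\Hmc_s$ works whenever \emph{any} flat hypothesis exists; it says nothing about minimal ones being of that form. The fix, which is what the paper does, is to guess an arbitrary flat ABox $\Hmc$ of size at most $k$ (exponential, since $k$ is in binary) containing all role assertions of $\Phi$, and reserve the type machinery for \emph{verification}: a single \NP-oracle call on the exponential-size input $\tup{T_{\Kmc\cup\Phi},\Hmc}$ that guesses a selector function $s_e:\ind{\Hmc}\rightarrow T_{\Kmc\cup\Phi}$ witnessing a countermodel. Your oracle step is right; your guess is of the wrong object.

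On the lower bounds, your plans point in the right direction but leave the hardest part undone, as you yourself note. The paper's \NExpTime-hardness for $\ELbot$ goes via the exponential tiling problem, where the key device is a hidden concept name $B\notin\Sigma$ that forces every grid coordinate to be realized, combined with a budget $k$ computed so that each coordinate can be realized by \emph{at most} one individual --- the ``tuning'' you gesture at needs exactly this two-sided squeeze. For the $\NExpTime^\NP$ case the paper does not reduce from succinct $\exists\forall$-SAT directly; it first defines a bespoke $\NExpTime^\NP$-tiling problem (find an $(H_1,V_1)$-tiling none of whose rows starts an $(H_2,V_2)$-tiling), proves it hard via oracle Turing machines, and then encodes it with a cube-shaped flat hypothesis over three roles, where the non-base layers carry a disjunctively specified ``hidden'' tile concept $T^*\sqsubseteq\bigsqcup_t A_t^*$ so that the models of the hypothesis range over all candidate inner tilings --- this is how the \NP-oracle layer is realized semantically. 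Two smaller points: for $\ELbot$ the theorem does not restrict to flat hypotheses, so your guess must range over arbitrary $\ELbot$ ABoxes of size $\leq n$; and for $\ALCIQ$ a nondeterministic exponential guess followed by a \TwoExpTime{} check should be determinized by enumerating the double-exponentially many candidates, which is how the paper stays inside deterministic \TwoExpTime.
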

The upper bounds are based on guess-and-check algorithms. For \EL, we exploit 
the fact that, by Theorem~\ref{the:flat-size-upper}, we can always find a 
solution of polynomial size. For $\ELbot$, we note that the 
size of the hypothesis is exponentially bounded by the number of bits used for 
the size bound $k$. 
The $\NExpTime^\NP$-upper bound can be obtained by a refinement of the procedure 
used in the proof for Theorem~\ref{the:compute-flat}.
For the double exponential upper bound, we iterate over the double exponentially 
many possible KBs within the size and signature bounds---independent on whether 
we are interested in flat or complex solutions---and then check for entailment 
in time exponential in the size of the current solution. The lower bounds are 
provided by the following lemmas.
 
\begin{restatable}{lemma}{LemELMinimal}
  Size-restricted $\EL$ abduction is \NP-hard.
\end{restatable}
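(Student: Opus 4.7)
The plan is to reduce from the NP-complete Vertex Cover problem. Given an instance $\tup{G = (V, E), k}$, I construct an $\EL$ abduction problem $\tup{\Kmc, \Phi, \Sigma, n}$ using one concept name $A_v$ per vertex $v \in V$, one concept name $E_e$ per edge $e \in E$, and a single individual $a$. I set $\Kmc = \{A_u \sqsubseteq E_e, A_v \sqsubseteq E_e \mid e = \{u,v\} \in E\}$, $\Phi = \{E_e(a) \mid e \in E\}$, $\Sigma = \{A_v \mid v \in V\}$, and choose $n$ large enough to accommodate $k$ concept assertions of the form $A_v(a)$ but no more. The whole construction is clearly polynomial in the size of the input.

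For the forward direction, from a vertex cover $C \subseteq V$ of size at most $k$ I form the hypothesis $\Hmc_C = \{A_v(a) \mid v \in C\}$. Condition~\ref{itm:consistency} is vacuous for $\EL$ since every $\EL$ KB is satisfiable; \ref{itm:signature} holds by construction; and for every edge $e = \{u,v\} \in E$, at least one of $u, v$ lies in $C$, so an appropriate CI $A_u \sqsubseteq E_e$ or $A_v \sqsubseteq E_e$ derives $E_e(a)$, establishing~\ref{itm:entailment}. The size bound is also met.

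For the reverse direction, I need to extract a vertex cover of size at most $k$ from any hypothesis $\Hmc$ with $\size{\Hmc} \leq n$. Since $\Sigma$ contains no role names, \ref{itm:signature} rules out both role assertions and concepts with existential restrictions, so every assertion in $\Hmc$ must be of the form $C(x)$ where $C$ is an $\EL$ concept built from $\top$, $\sqcap$, and the $A_v$ names. Assertions about individuals $x \neq a$ contribute nothing to entailing any $E_e(a)$, as the only way to derive $E_e(a)$ in $\Kmc \cup \Hmc$ uses a CI $A_v \sqsubseteq E_e$ applied to an assertion about $a$. I can therefore normalise $\Hmc$ by splitting every conjunctive assertion about $a$ into atomic $A_v(a)$'s and discarding everything else; this does not increase the symbol count, so the resulting flat hypothesis $\Hmc'$ satisfies $\size{\Hmc'} \leq n$ and still entails $\Phi$. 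The set $C = \{v \mid A_v(a) \in \Hmc'\}$ is then a vertex cover of $G$, and counting symbols against the bound yields $\lvert C \rvert \leq k$.

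The main obstacle is justifying the normalisation step cleanly, that is, verifying that complex $\EL$ concepts and fresh individuals offer no symbol-saving trick. The observation that makes this manageable is the deliberate choice to keep role names out of $\Sigma$: this eliminates $\exists$-restrictions and role assertions entirely, reducing every admissible hypothesis to a Boolean-style combination of $A_v$-atoms about various individuals, for which the size bookkeeping is elementary and the extraction of a vertex cover from $\Hmc'$ is immediate.
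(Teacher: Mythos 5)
Your reduction is correct, but it takes a genuinely different route from the paper, which reduces from CNF-SAT rather than Vertex Cover. In the paper's construction, clauses and variables are individuals connected by role assertions $r(c_i,p_j)$ and $s(c_i,p_j)$ in $\Kmc$, the abducibles are $\Sigma=\{\textsf{True},\textsf{False}\}$, and the observation contains both $P(p_i)$ for every variable and $C(c_i)$ for every clause; the assertions $P(p_i)$ force at least one truth value per variable, and the budget $2m$ then forces \emph{exactly} one, so that the hypothesis encodes a total assignment. Your reduction avoids this ``exactly one'' machinery entirely by exploiting the monotonicity of Vertex Cover: any superset of a cover is still a cover, so you only need the budget to cap the \emph{number} of selected vertices, not to force a choice structure. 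This makes your instance role-free, which is what lets you dispose of complex concepts and fresh individuals so cheaply --- with no role names in $\Sigma$, every admissible assertion is a conjunction of $A_v$-atoms, and splitting a conjunction $(A_{v_1}\sqcap\ldots\sqcap A_{v_j})(a)$ into $j$ atomic assertions is size-neutral ($2j$ symbols either way under the paper's size measure), so the normalisation argument goes through. The paper's proof has to make the analogous point only implicitly (its sketch ignores complex concepts), and its reliance on roles in $\Kmc$ makes the instance slightly richer; your version is arguably more elementary and more self-contained on exactly the point (handling of non-flat hypotheses) where the paper's sketch is thinnest. Both constructions are polynomial and both establish \NP-hardness of size-restricted \EL abduction, so either suffices.
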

\begin{proof}[Proof sketch]
  We reduce the \NP-complete problem CNF-SAT to deciding whether a given 
  signature-based problem has a hypothesis of size at most $k$. Let 
$\phi=c_1\wedge\ldots\wedge c_n$ be a CNF formula over propositional variables 
$p_1,\ldots,p_m$. 
$\Kmc$ contains the following axioms:
  \begin{align*}
    \textsf{True}\sqsubseteq P &\quad
    \textsf{False}\sqsubseteq P \quad
    \exists r.\textsf{True}\sqsubseteq C \quad
    \exists s.\textsf{False}\sqsubseteq C \\
    r(c_i,p_j) &\text{ for every }i\in\iinv{1}{n}, j\in\iinv{1}{m}, \text{ if 
$p_j\in c_j$}\\
    s(c_i,p_j) &\text{ for every }i\in\iinv{1}{n}, j\in\iinv{1}{m}, \text{ if 
$\neg p_j \in c_j$}
  \end{align*}
  $\Phi$ contains $P(p_i)$ for every $i\in\iinv{1}{m}$ and $C(c_i)$ for every 
$i\in\iinv{1}{n}$. 
Finally, $\Sigma=\{\textsf{True},\textsf{False}\}$. %
$\tup{\Kmc,\Phi,\Sigma}$ has 
a hypothesis of size at most $2m$ iff $\phi$ is satisfiable. 
\end{proof}

\begin{restatable}{lemma}{LemELBotMinimal}\label{lem:elbot-minimal}
   Size restricted $\ELbot$ abduction is $\NExpTime$-hard.
\end{restatable}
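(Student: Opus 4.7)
The plan is to reduce from the $\NExpTime$-complete \emph{bounded tiling problem}: given tile types $T$, compatibility relations $H,V\subseteq T\times T$, designated tiles $t_0,t_f\in T$, and $n$ in unary, decide whether the $2^n\times 2^n$ grid can be tiled respecting $H,V$ with $t_0$ at the top-left and $t_f$ at the bottom-right corner. The reduction produces an abduction problem whose hypotheses of size at most $k=c\cdot 2^{2n}$ (for a small constant $c$, encoded in $O(n)$ bits in binary) are in essential correspondence with valid tilings.

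First I would build $\Kmc$ by extending the $n$-bit counter of Theorem~\ref{the:exp-lower-bound-flat} to a pair of $n$-bit counters $X_i,\overline{X_i}$ and $Y_j,\overline{Y_j}$ for column and row, keeping the clash axioms $X_i\sqcap\overline{X_i}\sqsubseteq\bot$ and $Y_j\sqcap\overline{Y_j}\sqsubseteq\bot$. Using two grid roles $h,v$, the column counter is decremented backwards along $h$ and preserved backwards along $v$, and dually for the row counter, by the same style of propagation rules as in Theorem~\ref{the:exp-lower-bound-flat}. The counter is anchored at the final-tile concept via $T_{t_f}\sqsubseteq X_1\sqcap\ldots\sqcap X_n\sqcap Y_1\sqcap\ldots\sqcap Y_n$, and the initial-corner condition is $T_{t_0}(a_{0,0})\in\Kmc$. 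With signature $\Sigma=\{h,v,T_1,\ldots,T_{|T|}\}$, tile compatibility is imposed locally by $\ELbot$ clash axioms $T_i\sqcap\exists h.T_j\sqsubseteq\bot$ for $(i,j)\notin H$ (analogously for $v$ and $V$), and $T_i\sqcap T_j\sqsubseteq\bot$ for $i\neq j$ enforces one tile per cell. To force completeness, a goal concept $\Goal$ is seeded via $T_{t_f}\sqsubseteq\Goal$ and propagated backwards through the grid by a polynomial family of rules of the form $\overline{X_i}\sqcap\overline{Y_j}\sqcap\exists h.\Goal\sqcap\exists v.\Goal\sqsubseteq\Goal$, with boundary cases when the column or row counter is saturated. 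The observation is $\Phi=\{\Goal(a_{0,0})\}$.

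For correctness, a valid tiling yields the natural hypothesis of size at most $k$: one individual per cell with its tile assertion and grid edges to its neighbours. It is consistent with $\Kmc$ and entails $\Goal(a_{0,0})$ by backwards propagation across the completed grid. Conversely, any consistent hypothesis of size at most $k$ entailing $\Goal(a_{0,0})$ must, by the interaction of the counter-clash axioms, the tile-uniqueness axioms, and the $\Goal$-propagation rules demanding both required successors at each interior position, realise the full $2^n\times 2^n$ grid with $t_0$ at the top-left and $t_f$ at the bottom-right, from which a valid tiling is read off.

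The main obstacle is preventing small hypotheses from ``cheating'' by collapsing distinct grid cells into one individual, inserting shortcut edges, or omitting parts of the grid while still entailing $\Goal(a_{0,0})$. Identifying elements with conflicting counter values is blocked by the counter-clash axioms; multi-tiled cells are blocked by the uniqueness axioms; and any missing required neighbour at an interior cell stops the backwards $\Goal$-propagation from ever reaching $a_{0,0}$. Combined with the tight size bound---which permits exactly the natural grid encoding---these constraints force the hypothesis to be essentially the natural encoding of a valid tiling, establishing $\NExpTime$-hardness.
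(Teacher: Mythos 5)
Your reduction is correct and follows essentially the same route as the paper's: an encoding of the exponential ($2^n\times 2^n$) tiling problem with two $n$-bit coordinate counters in the style of Theorem~\ref{the:exp-lower-bound-flat}, local $\bot$-clashes for the adjacency and uniqueness constraints, a propagated marker concept that forces every coordinate to be realised before the observation is entailed, and a size bound tuned so that each coordinate is realised by exactly one individual (ruling out tree-shaped hypotheses). The only differences are cosmetic — you propagate the goal backwards from the final corner rather than propagating a hidden concept $B$ forward from a $\Start$ individual, and you leave the exact value of $k$ as a constant times $2^{2n}$ where the paper computes it explicitly — so this matches the paper's argument.
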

\newcommand{\Start}{\ensuremath{\mathsf{Start}}\xspace}
\newcommand{\End}{\ensuremath{\mathsf{End}}\xspace}
\begin{proof}[Proof sketch]
  The hardness follows from a reduction of the $\NExpTime$-complete 
  \emph{exponential tiling problem}, which is given by a tuple $\tup{T,T_I,t_e, 
V,H,n}$ of a set $T$ of tile types, a sequence $T_I\in T^*$ of initial tiles, a 
final tile $t_e$, vertical and horizontal tiling conditions $V,H\subseteq 
T\times T$, and a number $n$ in unary encoding. A solution to this problem is 
then a \emph{tiling}, as a function 
$f:\iinv{1}{2^n}\times\iinv{1}{2^n}\rightarrow T$ assigning tiles to 
coordinates, s.t. the first tiles are as in $T_I$, $f(2^n,2^n)=t_e$, and 
that obeys the vertical and horizontal tiling conditions~\cite{TILINGS}.

  In the reduction, concept names \Start and \End 
respectively mark the initial and the final 
tile. We implement two binary counters $X$ and $Y$ as for 
Theorem~\ref{the:exp-lower-bound-flat} which are decremented over the roles $x$ 
and $y$, and encode 
the coordinates of the tiles. Each tile type $t\in T$ is represented by a 
concept name $A_t$. We enforce the horizontal tiling conditions using CIs 
\[                                                      
    \exists x.A_t\sqcap A_{t'}\sqsubseteq\bot
    \qquad \text{ for each }\tup{t,t'}\in (T\times T)\setminus H
\]
and correspondingly for the vertical 
conditions. The (hidden) concept name $B\not\in\Sigma$ is used to ensure that the hypothesis 
contains at least one individual per coordinate. This name is initialised by 
the 
individual satisfying \Start, and then propagated in $x$ and $y$ 
direction, provided that a tiling type is associated.
The observation to be explained is $\End(a)$, where $\End$ occurs in the following 
CI:
\[
  \bigsqcap_{i=1}^n X_i\sqcap
  \bigsqcap_{i=1}^n Y_i\sqcap
  B\sqcap
  A_{t_e}
   \sqsubseteq\End 
  \]
and the abducibles are
\[
 \Sigma=\{\Start,x,y\}\cup\{A_t\mid t\in T\}.
\]
Without the size restriction, a valid hypothesis corresponds to a binary tree 
with tile types associated to each node, and tiling conditions ensured along 
the $x$- and $y$-successors. To make sure it forms a $2^n\times 2^n$ grid, we 
choose the size $k$ appropriately in a way that every coordinate can be used at 
most once. Valid hypotheses of size $k$ then correspond to solutions to the 
tiling problem.
\end{proof}

To present the proof idea for the $\NExpTime^\NP$-hardness result more 
concisely, we introduce a new tiling problem.
 
\begin{definition}
 A \emph{$\NExpTime^\NP$-tiling problem} is given by a tuple 
$\tup{T,T_I,t_e,H_1,V_1,H_2,V_2,n}$, where $\tup{T,T_I,t_e,H_1,V_1,n}$ is an 
exponential tiling problem, $H_2,V_2\subseteq T\times T$ are additional 
tiling conditions, and for which we want to decide the existence of 
a valid tiling 
$f:\iinv{1}{2^n}\times\iinv{1}{2^n}\rightarrow T$ for the tiling problem 
$\tup{T,T_I,t_e,V_1,H_1,n}$, 
 s.t. for no $i\in\iinv{1}{2^n}$, there exists a valid tiling for the 
tiling problem $\tup{T,f(i), t_e,V_2,H_2, n}$, where $f(i)$ denotes 
the $i$th row of the tiling $f$.
\end{definition}
In other words, we have to find a tiling using conditions 
$H_1$ and $V_1$, while avoiding any rows that can be first row of any tiling for conditions $H_2$ 
and $V_2$. 
\begin{restatable}{lemma}{LemNewTiling}\label{t:new-tiling}
 The $\NExpTime^\NP$-tiling problem is $\NExpTime^\NP$-hard.
\end{restatable}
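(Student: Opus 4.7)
The plan is to reduce from the canonical $\NExpTime^\NP$-complete problem, succinct $\Sigma_2$-SAT with exponential-sized quantifier blocks: given a polynomial-size Boolean circuit $C$ whose inputs are split into two blocks $X, Y$ each of length $2^k$, decide whether $\exists X\, \forall Y : C(X, Y) = 1$. Given such an instance, I would construct a $\NExpTime^\NP$-tiling instance with parameter $n = k + O(1)$ (in unary), chosen so that each of the $2^n$ cells per row can hold an encoding of $X$ alongside auxiliary counter and marker information. The tile set $T$ would consist of counter tiles implementing $2^n$-bit row and column counters (as in the proof of Theorem~\ref{the:exp-lower-bound-flat}), bit-value tiles for the $X$- and $Y$-blocks, and a small number of workspace tiles used to simulate the evaluation of $C$.

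First, I would design $(H_1, V_1)$ so that any valid main tiling of the $2^n \times 2^n$ grid enforces three properties: (i) the row and column counters increment correctly; (ii) a dedicated band of $2^k$ cells in each row carries an assignment $X$ to the $X$-variables; and (iii) vertical rules in $V_1$ force the $X$-bit at each column to coincide in vertically adjacent cells, so that every row $f(i)$ of a valid main tiling encodes the same assignment $X$. The initial tiles $T_I$ would fix only counter markers, so that $X$ itself is freely guessed by the tiling.

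I would then design $(H_2, V_2)$ so that a secondary tiling whose first row is $f(i)$ is valid iff there exists $Y$ with $C(X, Y) = 0$. To this end, $V_2$-rules propagate $X$ downward from $f(i)$ into the interior of the secondary grid; a dedicated band of $2^k$ cells in the second row freely guesses $Y$; and the remaining rows simulate the gate-by-gate evaluation of $C$ on $(X, Y)$, with the final tile $t_e$ admissible at position $(2^n, 2^n)$ only when the simulation concludes with output $0$. Since $C$ has polynomial size while the secondary grid has doubly exponential area in $k$, there is ample room for the simulation.

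Correctness then says that the constructed $\NExpTime^\NP$-tiling instance has a solution iff there is an $X$ making the main tiling valid and, for every row $i$, no secondary tiling starting from $f(i)$ is valid --- equivalently, iff $\exists X\, \forall Y : C(X, Y) = 1$. The main obstacle I expect is the tile-level encoding of gate-by-gate circuit evaluation within $(H_2, V_2)$, combined with making the $X$-propagation robust so that \emph{every} starting row uniformly transmits the guessed $X$ into the secondary grid; both are standard tiling-reduction techniques, but they require careful bookkeeping to keep the tile set and condition sets polynomial in $k$ while coherently operating across an exponentially large workspace.
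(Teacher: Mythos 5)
Your high-level architecture is sound and correctly mirrors the $\exists\forall$ structure of the target problem: the main tiling ($H_1,V_1$) existentially guesses a witness that is replicated into every row, and the secondary tilings ($H_2,V_2$) launched from the rows implement the negated universal check, so that ``no row starts a valid secondary tiling'' captures the $\forall$ quantifier. This is the same division of labour as the paper's proof, which however reduces from acceptance of an exponentially time-bounded nondeterministic Turing machine with an \NP oracle rather than from a quantified satisfiability problem: there, tiles are pairs whose first component carries a configuration of the main machine (so $H_1,V_1$ encode its run) and whose second component carries a configuration of the oracle machine (so $H_2,V_2$ encode the oracle runs launched from query-state rows).

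The genuine gap is in your source problem. A polynomial-size Boolean circuit cannot have two input blocks of length $2^k$ each, so ``$\exists X\,\forall Y\,C(X,Y)=1$'' as you state it is not well formed. If you repair it by letting $X,Y$ have length polynomial in $k$, the problem is only $\Sigma_2^p$-complete and the reduction establishes nothing about $\NExpTime^\NP$-hardness. If instead you let $X,Y$ be exponentially long and $C$ be a succinct description of an exponentially large circuit---which is what is needed for $\NExpTime^\NP$-completeness, via the characterization of $\NExpTime^\NP$ by two exponentially bounded quantifiers over a predicate decidable in time polynomial in the exponentially long witnesses---then ``simulate the gate-by-gate evaluation of $C$'' means evaluating an exponential-size object; this is essentially an exponential-time Turing-machine simulation and is exactly the technical core that the paper's reduction carries out explicitly, so it cannot be waved through as a standard step. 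Your area estimate is also off: with $n=k+O(1)$ the secondary grid has $4^{k+O(1)}$ cells (singly, not doubly, exponential in $k$), which need not suffice to lay out a succinctly described circuit with $2^{p(k)}$ gates; you would need $n=p(k)$ for a suitable polynomial. Finally, note that in the paper's definition the secondary tiling must already satisfy $H_2$ inside its fixed first row $f(i)$, so you must design $H_2$ to accept $H_1$-valid rows as legal first rows (or designate one specific row to carry the check); otherwise the universal condition becomes vacuously true on those rows and the oracle check is silently skipped.
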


\begin{restatable}{lemma}{LemALCMinimal}
Size-restricted $\ALC$ abduction is $\NExpTime^\NP$-hard.
\end{restatable}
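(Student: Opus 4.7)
The plan is to reduce from the $\NExpTime^\NP$-tiling problem of Lemma~\ref{t:new-tiling} to size-restricted $\ALC$ abduction. Given an instance $\tup{T,T_I,t_e,H_1,V_1,H_2,V_2,n}$, the hypothesis $\Hmc$ will carry the $\NExpTime$-witness (an $H_1,V_1$-tiling $f$), while the entailment check~\ref{itm:entailment} will encode the $\coNP$-oracle call (``no row of $f$ begins a valid $H_2,V_2$-tiling'').

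For the existential layer, I would simply reuse the construction from Lemma~\ref{lem:elbot-minimal}: counters $X_i,Y_i$ and roles $x,y$ structure a $2^n{\times}2^n$ grid, tile concepts $A_t$ mark types, a hidden $B\not\in\Sigma$ forces each coordinate to be instantiated, and $\Sigma=\{\Start,x,y\}\cup\{A_t\mid t\in T\}$ (together with the appropriately chosen size bound $k$) forces any hypothesis of size at most $k$ to represent a grid rather than a tree, with the $H_1,V_1$-conditions enforced via $\exists x.A_t\sqcap A_{t'}\sqsubseteq\bot$ axioms for $(t,t')\notin H_1$ and analogously for $V_1$. The observation $\Phi$ will include $\End(a_0)$ as in that proof, forcing the grid to be complete.

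For the universal layer I add a fresh concept name $A$ and put $A(a_0)$ into $\Phi$. The TBox is augmented with axioms (using primed vocabulary $X_i',Y_i',A_t'$ and roles $x',y',z$, none in $\Sigma$) that, whenever $\neg A(a_0)$ holds in a model, existentially spawn a shadow tiling: $\textsf{Anchor}\sqcap\neg A\sqsubseteq\exists z.\textsf{ShadowStart}$, with $\textsf{ShadowStart}$ forced to coincide with some first-column cell of the first tiling (via coordinate axioms), a propagation axiom $\textsf{ShadowRow1}\sqsubseteq\forall x.\textsf{ShadowRow1}$ that copies the chosen row across via the existing first-tiling $x$-chain, and axioms $A_t\sqcap\textsf{ShadowRow1}\sqsubseteq A_t'$ copying tile labels. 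From there $\exists y'$- and $\exists x'$-axioms build the remaining $2^n-1$ shadow rows, coordinate counters are advanced as in Theorem~\ref{the:exp-lower-bound-flat}, tiling conditions are enforced via $A_t'\sqcap\exists x'.A_{t'}'\sqsubseteq\bot$ for $(t,t')\notin H_2$ (and analogously for $V_2$), and the last cell is forced to satisfy $A_{t_e}'$ on pain of $\bot$. The net effect is that a model of $\Kmc\cup\Hmc$ with $a_0\notin A^\Imc$ exists iff $\Imc$ contains a complete valid $H_2,V_2$-shadow tiling attached to some row of $f$; hence $\Kmc\cup\Hmc\models A(a_0)$ iff no such row exists.

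Putting both layers together, the size-restricted problem has a solution iff the $\NExpTime^\NP$-tiling problem has a positive answer, yielding $\NExpTime^\NP$-hardness. The main obstacle is the interaction between the two layers: one has to make sure that (i) the hypothesis, whose signature excludes all primed and shadow symbols, cannot short-circuit either check; (ii) $z$-successors of $a_0$ can only attach to genuine first-column cells of the first tiling and cannot ``invent'' fake ones (handled by the counter-agreement axioms together with the hidden $B$ that tags exactly the grid cells produced by $\Hmc$); and (iii) the shadow tiling is forced to extend all the way to position $(2^n,2^n)$ with tile $t_e$, so that a model avoiding $A(a_0)$ genuinely witnesses a valid $H_2,V_2$-tiling rather than some truncated structure. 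Once these bookkeeping details are in place, the correspondence between hypotheses and solutions to the $\NExpTime^\NP$-tiling instance is routine.
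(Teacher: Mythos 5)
Your existential layer (the $H_1,V_1$ grid forced by the hidden $B$, the counters, and the size bound $k$) is exactly the paper's base construction and is fine. The gap is in your universal layer: in $\ALC$ one cannot force anonymous elements spawned by existential restrictions into a grid. Your shadow tiling is generated by $\exists z$-, $\exists x'$- and $\exists y'$-axioms, so by the tree-model property the $x'$-successor of a $y'$-successor and the $y'$-successor of an $x'$-successor are \emph{distinct} anonymous elements even when their coordinate counters agree, and nothing forces them to carry the same primed tile. Already at shadow row $2$, the cells hang off different named parents and no $x'$-edge between them can be asserted or enforced. Hence a model with $a_0\notin A^\Imc$ may contain a tree-shaped pseudo-tiling satisfying every local $H_2$/$V_2$/counter/final-tile condition although no genuine $H_2,V_2$ grid tiling with that first row exists; then $\Kmc\cup\Hmc\not\models A(a_0)$ and the reduction is unsound in the critical direction. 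This is not a bookkeeping detail: if local conditions on anonymous successors sufficed to carve out an exponential grid, $\ALC$ concept satisfiability w.r.t.\ TBoxes would be $\NExpTime$-hard rather than $\ExpTime$-complete.

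The paper circumvents exactly this by keeping \emph{all} candidate $H_2,V_2$-tilings on named individuals: a third role $z$ with a third binary counter, together with the size bound, forces every hypothesis to be a $2^n{\times}2^n{\times}2^n$ cube of explicit role assertions, so the grid structure of each $x\times z$ square is guaranteed syntactically by the ABox itself. Only the \emph{tile choice} on the non-bottom cells is delegated to the models, via the abducible $T^*$ and the axiom $T^*\sqsubseteq\bigsqcup_{t\in T}A^*_t$; the observation $\End(a)$ is then entailed iff every such choice produces an error ($B_3$, or a missing final tile) in every $x\times z$ square. To repair your two-layer design you would have to move the shadow cells into the hypothesis in the same way, which collapses your construction into the paper's.
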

\begin{proof}[Proof sketch]
 We modify the construction for Lemma~\ref{lem:elbot-minimal} to encode the 
$\NExpTime^\NP$-tiling problem. We now use 3 roles $x$, $y$, $z$ and 
corresponding binary counters so that, together with the size restriction, each 
hypothesis will have the shape of a cube. The bottom side of this cube has to 
correspond to a tiling for $\tup{T,T_I,t_e,V_1,H_1,n}$, which can be achieved 
using similar axioms as for Lemma~\ref{lem:elbot-minimal}. For nodes outside of 
the bottom side of the cube, we require the use a different set of concept 
names for the tile types, which are of the form $A^*_t$, and for which we have 
the axiom $T^*\sqsubseteq\bigsqcup_{t\in T}A_t^*$. We use
\[
   \Sigma=\{\Start,x,y,z,T^*\}\cup\{A_t\mid t\in T\},
\] and again require every 
coordinate to be assigned some tile type. For the coordinates 
outside the bottom side, we have to use the concept name $T^*$ to assign tile 
types, which leaves the precise selection of the tile type to the different 
models of the hypothesis. We detect tiling errors in the different $x\times 
z$-squares with the following axioms %
\[
 \exists x.A_t^*\sqcap A_{t'}^*\sqsubseteq B_3
      \qquad\text{ for }\tup{t,t'}\in(T\times T)\setminus H_2 
\] and correspondingly for $V_2$.
This information is propagagated along the succeeding coordinates 
so that the observation $\End(a)$ 
is only entailed if every model of the hypothesis encodes a tiling error on 
each of the $x\times z$ squares.%
\end{proof}

\section{Outlook}

We believe that our results for complex abduction in $\ALC$ can be extended 
to $\ALCI$, and that  %
the bound for 
size restricted \ALCQI abduction in 
is tight. A question 
is whether we can improve the $\NTwoExpTime^\NP$-bound for the most general 
variant of our abduction problem. Apart from that, we want to 
investigate our setting for observations formulated as conjunctive queries, 
which would allow us to explain negative query answers~\cite{DL_LITE_ABDUCTION}.
Another interesting question is what happens if we allow fresh individual names 
for abduction with ontologies formulated using existential rules. For the 
$\ELbot$-variant, we are currently working on a practical method for computing 
size-minimal flat hypotheses.

\section*{Acknowledgements}

Patrick Koopmann is supported by DFG grant 389793660 of TRR 248
(see \url{https://www.perspicuous-computing.science/}). 

%

\newpage

\bibliographystyle{named}
\bibliography{bibliography-abduction}

\ifdefined\tr

\newpage


\appendix

\maketitle

\part*{Proof Details}

\section{Flat ABox Hypotheses}

\newif\ifselfcontained\selfcontainedfalse

\TheFlatSizeUpper*
\begin{proof}

\ifselfcontained
 Let $\Hmc_0$ be a hypothesis for $\tup{\Kmc,\Phi,\Sigma}$. Based on $\Hmc_0$, 
we construct a 
hypothesis within the required size bounds. We first consider the case where $\Lmc=\ALCI$, and then 
the case where $\Lmc=\ELbot$.

\medskip
 
 First assume $\Lmc=\ALCI$.
 Let $\Imc$ be a model of $\Kmc\cup\Hmc_0$. For every individual name 
 $a\in\ind{\Hmc_0}$, we define its \emph{type in $\Imc$} as 
 \[
   \type{a}=\{C\in\sub{\Kmc}\cup\sub{\Phi}\mid\Imc\models C(a)\}.   
 \]
For every such type, we introduce a fresh individual name $c_\type{a}$. 
The 
number of such individual names is exponentially bounded by the size of~$\Kmc$ 
and~$\Phi$. 
We define 
a function $h:\ind{\Hmc_0}\rightarrow\NI$ by setting $h(a)=a$ if 
$a\in\ind{\Kmc\cup\Phi}$, and 
otherwise $h(a)=c_\type{a}$.
Based on $h$ and $\Hmc_0$, we construct the size bounded hypothesis $\Hmc$ as follows:
\begin{align*}
       &\{A(h(a))\mid A(a)\in\Hmc_0\} \\
  \cup &\{r(h(a),h(b))\mid r(a,b)\in\Hmc_0\} 
\end{align*}
Since $\Hmc$ is flat and contains only exponentially many individual names, 
$\Hmc$ is exponential in the size of $\Kmc$ and $\Phi$. Note also that $h$ is a 
homomorphism from 
$\Hmc_0$ into $\Hmc$ that maps each individual name from $\ind{\Kmc\cup\Phi}$ 
to 
itself, and may map several individual names from $\Hmc_0$ onto the same individual name in $\Hmc$. 

We argue that $\Hmc$ is a hypothesis for $\tup{\Kmc,\Phi,\Sigma}$. By 
construction, $\sig{\Hmc}\subseteq\sig{\Hmc_0}$, and thus $\sig{\Hmc}\subseteq\Sigma$. 
\else %
Let $\Hmc_0$ be a hypothesis for $\tup{\Kmc,\Phi,\Sigma}$. 
We first complete the argument in the main text for $\Lmc=\ALCI$, and then for 
$\Lmc=\EL$. 

\medskip

For $\Lmc=\ALCI$, we construct $\Hmc$ and $h$ as in the main text based on an 
arbitrary hypothesis 
$\Hmc_0$ and an arbitrary model $\Imc$ of $\Kmc\cup\Hmc_0$. Note that there can be at most 
exponentially many distinct types occuring in $\Imc$ (one for each subset of 
\mbox{$\sub{\Kmc\cup\Phi}$}). Consequently, $\Hmc$ contains at most exponentially many individual 
names, and, because it is flat, at most exponentially many assertions.
It remains to show that $\Hmc$ 
satisfies~\ref{itm:consistency} and~\ref{itm:entailment}. 
\fi %

For \ref{itm:entailment}, we need to show that $\Kmc\cup\Hmc\not\models\bot$. For this, we 
construct the interpretation $\Jmc$ based on the types occurring in $\Imc$. 
The domain of $\Jmc$ contains an element for every individual in $\Kmc\cup\Phi$ 
and additionally 
types occurring in $\Imc$. We first define a superset of the domain:
\[
 \Delta=\ind{\Kmc\cup\Phi}\cup\{\typeI{d}{\Imc}\mid d\in\Delta^\Imc\}
\]
We define a function $g:\Delta^\Imc\rightarrow\Delta$ by setting:
\[
 g(d)=\begin{cases}
       a & \text{ if $d=a^\Imc$ and $a\in\ind{\Kmc\cup\Phi}$} \\
       \typeI{d}{\Imc} & \text{otherwise}
      \end{cases}
\]

The interpretation $\Jmc$ is now defined as follows for all $a\in\NI$, 
$A\in\NC$ and $r\in\NR$:
\begin{itemize}
 \item $\Delta^\Jmc=\{g(d)\mid d\in\Delta^\Imc\}$
 \item $a^\Jmc=\begin{cases}
                a & \text{if $a\in\ind{\Kmc\cup\Phi}$} \\
                t & \text{if $a\in\ind{\Hmc}\setminus\ind{\Kmc\cup\Phi}$} \\
                  & \text{\ \ and $a=b_t$ for $t\in T_{\Kmc\cup\Phi}$}\\
                \text{arbitrary} & \text{otherwise}
               \end{cases}$
 \item $A^\Jmc=\{g(d)\mid d\in A^\Imc\}$,
 \item $r^\Jmc=\{\tup{g(d),g(e)}\mid \tup{d,e}\in r^\Imc\}$
\end{itemize}

It can be shown by structural induction that for every concept 
$C\in\sub{\Kmc\cup\Phi}$ and 
$d\in\Delta^\Imc$, $d\in C^\Imc$ iff $g(d)\in C^\Jmc$, which implies that $\Jmc$ is a model of 
$\Kmc$. Note however that this would not work for $\Lmc=\ALCF$, since the transformation can 
change the number of role-successors. 

We can also show that $\Jmc\models\Hmc$.
First observe that for every 
$a\in\ind{\Kmc\cup\Hmc_0}$ and $b\in\ind{\Kmc\cup\Hmc}$, $h(a)=b$ iff $g(a^\Imc)=b^\Jmc$: if 
$a\in\ind{\Kmc\cup\Phi}$, we have by definition $h(a)=a$, $g(a^\Imc)=a$ and 
$a=a^\Jmc$. Otherwise, 
we have $h(a)=b_{\typeI{a^\Imc}{\Imc}}$ , $g(a^\Imc)=\typeI{a^\Imc}{\Imc}$ and 
$b_{\typeI{a^\Imc}{\Imc}}^\Jmc=\typeI{a^\Imc}{\Imc}$. In both 
cases, we have $h(a)=b$ iff $g(a^\Imc)=b^\Jmc$. 

Assume $A(b)\in\Hmc$. Then $b=h(a)$ for some 
$a\in\ind{\Kmc\cup\Hmc_0}$ and $A(a)\in\Hmc_0$. We have $g(a^\Imc)=b^\Jmc$ by the above 
observation, and we have already pointed out that this implies for every 
$C\in\sub{\Kmc\cup\Phi}$ 
that $a^\Imc\in C^\Imc$ iff $b^\Jmc\in C^\Jmc$. Since $\Imc\models\Hmc_0$, we have $a^\Imc\in 
A^\Imc$, and thus $b^\Jmc\in A^\Jmc$, so that $\Jmc\models A(b)$.

Assume $r(b_1,b_2)\in\Hmc$. Then, there exists $r(a_1,a_2)\in\Hmc_0$ s.t. $h(a_1)=b_1$ and 
$h(a_2)=b_2$. Again we can conclude that $g(a_1^\Imc)=b_1^\Jmc$, $g(a_2^\Imc)=b_2^\Jmc$ and 
$\tup{a_1^\Imc,a_2^\Imc}\in r^\Imc$, which 
by definition of $r^\Jmc$ implies that $\tup{b_1^\Jmc,b_2^\Jmc}\in r^\Jmc$, so that $\Jmc\models 
r(b_1,b_2)$. 

We have shown that $\Jmc\models\Kmc\cup\Hmc$, and thus that $\Kmc\cup\Hmc$ is 
satisfiable, that is, that $\Hmc$ satisfies~\ref{itm:consistency}. 

It remains to show \ref{itm:entailment}, that is,  
$\Kmc\cup\Hmc\models\Phi$. Assume $\Kmc\cup\Hmc\not\models\Phi$, that is, there 
exists some 
assertion $\alpha\in\Phi$ s.t. $\Kmc\cup\Hmc\not\models\alpha$. If $\alpha$ is 
of the form 
$r(a,b)$, then this axiom must explicitly occur in either $\Kmc$ or $\Hmc_0$, 
since $\ALCI$ is not 
expressive enough to enforce such entailments in a different way). If $r(a,b)\in\Kmc$, trivially 
$\Kmc\cup\Hmc\models r(a,b)$. If $r(a,b)\in\Hmc_0$, by construction also $r(a,b)\in\Hmc$. 

Assume $\alpha$ is of the form 
$C(a)$, and there exists a model $\Jmc$ of $\Kmc\cup\Hmc$ s.t. $\Jmc\not\models C(a)$. Using the 
homomorphism $h:\ind{\Hmc_0}\mapsto\ind{\Hmc}$, we can transform $\Jmc$ into a 
model 
$\Jmc'$ of $\Kmc\cup\Hmc_0$ s.t. $\Jmc'\not\models 
C(a)$. Specifically, we define $\Jmc'$ by
\begin{itemize}
 \item $\Delta^{\Jmc'}=\Delta^\Jmc\cup\{a^{\Jmc'}\mid a\in\ind{\Hmc_0}\}$,
 \item $A^{\Jmc'}=A^{\Jmc}\cup\{a^{\Jmc'}\mid a\in\ind{\Hmc_0}$, $h(a)^\Jmc\in 
A\}$ for all $A\in\NC$, and
 \item $r^{\Jmc'}=r^{\Jmc}\cup\{\tup{a^{\Jmc'}$, $b^{\Jmc'}}\mid 
a,b\in\ind{\Hmc_0}$, $\tup{h(a)^\Jmc,h(b)^\Jmc}\in r^{\Jmc'}\}$ for all 
$r\in\NR$.
\end{itemize}
$\Jmc'$ can be seen as the result of duplicating some elements from $\Jmc$ 
based on $h$, 
while keeping their concept names and role successors. Thus, it is easy to see that the 
transformation is type-preserving, that is, for every $a\in\ind{\Hmc}$ and $b\in\ind{\Hmc_0}$ s.t. 
$h(b)=a$, we have $\typeI{a^\Jmc}{\Jmc}=\typeI{b^{\Jmc'}}{\Jmc'}$. As a consequence, 
$\Jmc'\models\Kmc$ and 
$\Jmc'\not\models C(a)$. Furthermore, by construction, $\Jmc'\models\Hmc_0$. We obtain that 
$\Kmc\cup\Hmc_0\not\models\Phi$, and thus contradicting that $\Hmc_0$ was a 
hypothesis 
for $\Phi$ to begin with. Consequently, there cannot be such a model $\Jmc$ of 
$\Kmc\cup\Hmc$, and 
we must have $\Kmc\cup\Hmc\models\Phi$. 

We have shown that $\Hmc$ satisfies Conditions~\ref{itm:consistency}--\ref{itm:entailment}, and 
thus that it is an exponentially sized hypothesis for $\tup{\Kmc,\Phi,\Sigma}$. 

\medskip

Now assume $\Lmc=\EL$. We then construct a polynomially sized hypothesis by 
\begin{align*}
 \Hmc= &\{A(a)\mid A\in\Sigma\cap\NC,a\in\ind{\Kmc\cup\Phi}\} \\
       &\cup\{r(a,b)\mid r\in\Sigma\cap\NR,a,b\in\ind{\Kmc\cup\Phi}\}
\end{align*}
Again we can define a homomorphism $h$ from $\Hmc_0$ onto $\Hmc$: this time, we 
map every 
individual name not in 
$\ind{\Kmc\cup\Phi}$ to some randomly selected individual name. We have 
$\sig{\Hmc}\subseteq\Sigma$ by 
construction, and $\Kmc\cup\Hmc\not\models\bot$ since every $\EL$ KB is 
consistent. 
$\Kmc\cup\Hmc\models\Phi$ can be shown as for $\Lmc=\ALCI$ using the 
homomorphism $h$. It follows 
that $\Hmc$ is a polynomially sized hypothesis for $\tup{\Kmc,\Phi,\Sigma}$.

Finally, for $\Lmc=\ALCF$, the existence of any upper bound would allow us to 
decide existence of abductive solutions by a simple guess-and-check algorithm. 
In 
particular, this would 
allow to decide the IQ-query emptiness problem which asks, given a TBox $\Tmc$, 
a signature 
$\Sigma$ and a concept $C$, whether there exist any flat ABoxes in $\Sigma$ that 
entail $C(a)$ for 
some individual name $a\in\NI$. As 
shown in~\cite{QUERY_EMPTINESS}, this problem is undecidable for $\ALCF$ 
TBoxes, 
however, it can be easily reduced to flat $\ALCF$ abduction.
\end{proof}

\TheFlatCompute*
\begin{proof}
 \newcommand{\Inv}[1]{\textsf{Inv}(#1)}
We only need to consider the upper bounds for the cases $\Lmc=\ELbot$ and 
$\Lmc=\ALCI$. 

We first describe how to compute a maximal set $T_{\Kmc\cup\Phi}$ of types for 
$\Kmc\cup\Phi$ using type 
elimination. This construction is fairly standard and only included here for 
self-containment. 
For a role $R$, we set $\Inv{r}=r^-$ 
and $\Inv{r^-}=r$. 

We start with the 
set $T_0=\{t\mid t\subseteq\sub{\Kmc}\cup\sub{\Phi}\}$ from which we construct 
a 
sequence 
$T_0$, $T_1$, $\ldots$, $T_n$ of such sets. 
For a given such set $T_i$, a type $t\subseteq\sub{\Kmc\cup\Phi}$
and a role $R$, we denote by $\sucCand{T_i}{t}{R}$ the set of all types 
$t'\in T_i$ s.t. for every $\exists R.C\in (\sub{\Kmc\cup\Phi}\setminus t)$, 
$C\not\in t'$, and for every $\exists\Inv{R}.C\in(\sub{\Kmc\cup\Phi}\setminus 
t')$, $C\not\in t$. $\sucCand{T_i}{t}{R}$ contains the \emph{$R$-successor 
candidates} of $t$ within $T_i$, that is, types from $T_i$ that an 
$R$-successor of an instance of $t$ could have.
For each $i\in\iinv{1}{n}$, $T_i$ is obtained from $T_{i-1}$ 
by 
removing an element $t$ that fails to satisfy any of the following conditions: 
\begin{enumerate}
 \item $\bot\not\in t$,
 \item for every $C\sqcap D\in\sub{\Kmc\cup\Phi}$,  $C\sqcap D\in t$ iff 
$\{C,D\}\subseteq t$,
 \item for every $\neg C\in\sub{\Kmc\cup\Phi}$, $C\in t$ iff $\neg C\not\in t$,
 \item if $C\in t$ and $C\sqsubseteq D\in\Kmc$, then $D\in t$, and
 \item\label{itm:successor} if $\exists R.C\in t$, then there exists 
$t'\in\sucCand{T_{i-1}}{t}{R}$ 
s.t. \mbox{$C\in t'$}.
\end{enumerate}

 We set $T_{\Kmc\cup\Phi}=T_n$.
 Since $T_0$ contains exponentially many elements, 
$T_{\Kmc\cup\Phi<}$ is obtained after at most exponentially 
many steps, each taking polynomial time. 

We define a \emph{selector function} to be any function 
\mbox{$s:\ind{\Kmc\cup\Phi}\rightarrow T_{\Kmc\cup\Phi}$}. We note that there 
are at most 
exponentially many such 
selector functions, specifically, 
\begin{align*}
    \lvert T\rvert^{\lvert\ind{\Kmc\cup\Phi}\rvert} &\leq
\left(2^{\lvert\sub{\Kmc\cup\Phi}\rvert}\right)^{\lvert\ind{\Kmc\cup\Phi}\rvert}
\\
    &\leq 2^{\size{\Kmc\cup\Phi}^2}
\end{align*}
We call such a selector function \emph{compatible with $\Kmc$} if for every 
$A(a)\in\Kmc$, $A\in 
s(a)$, and for every $r(a,b)\in\Kmc$, 
$s(b)\in\sucCand{T_{\Kmc\cup\Phi}}{s(a)}{r}$.

For every selector function $s$ compatible with $\Kmc$, we build a hypothesis 
candidate $\Hmc_s$ as 
follows. To every type $t\in T_{\Kmc\cup\Phi}$, we assign an fresh individual 
name $b_t$. The set
\[
    I=\ind{\Kmc\cup\Phi}\cup\{b_t\mid t\in T_{\Kmc\cup\Phi}\}
\]
contains the individual names in 
our hypothesis, to 
which we extend $s$ by setting $s(b_t)=t$. $\Hmc_s$ is then defined as 
\begin{align*}
 \Hmc_s =& \{A(a)\mid a\in I, A\in s(a)\cap\Sigma\}\\
        &\cup \{r(a,b)\mid a,b\in I, r\in\Sigma, 
s(b)\in\sucCand{T_{\Kmc\cup\Phi}}{s(a)}{r}\}
\end{align*}

\textbf{Claim 1.} \emph{If $s$ is a selector function compatible with 
$\Kmc$, then $\Kmc\cup\Hmc_s\not\models\bot$.}

\emph{Proof of claim.} We construct a model $\Imc$ for $\Kmc\cup\Hmc_s$ based on 
$T_{\Kmc\cup\Phi}$ and $s$: 
\begin{itemize}
 \item $\Delta^\Imc=I$,
 \item for all $a\in I$: $a^\Imc=a$
 \item $A^\Imc=\{a\in I\mid A\in s(a)\cap\NC\}$,
 \item $r^\Imc=\{\tup{a,b}\in I\times I\mid 
s(b)\in\sucCand{T_{\Kmc\cup\Phi}}{s(a)}{r}\}$
\end{itemize}
It is now standard to show by induction on the structure of concepts that for 
every $a\in\NI$ and 
$C\in s(a)$, $a\in C^\Imc$. From the construction of the type set 
$T_{\Kmc\cup\Phi}$ it then follows that $\Imc$ 
satisfies all GCIs in $\Kmc$. Since $s$ is compatible with $\Kmc$, it follows 
that 
$\Imc$ also satisfies all assertions in $\Kmc$. Finally, by construction 
$\Imc\models\Hmc_s$.
\hfill$\blacksquare$

\textbf{Claim 2.} \emph{If there exists a flat hypothesis for 
$\tup{\Kmc,\Phi,\Sigma}$, then there 
exists a selector function $s$ s.t. $\Hmc_s$ is such a hypothesis.}

\emph{Proof of claim.} Assume there exists a flat hypothesis $\Hmc_0$ for 
$\tup{\Kmc,\Phi,\Sigma}$. 
Since for every selector function $s$ compatible to $\Kmc$, $\Hmc_s$ 
satisfies~\ref{itm:consistency} by Claim 1 and~\ref{itm:signature} by 
construction, we only need to show that there exist some such $s$ s.t. $\Hmc_s$ 
satisfies~\ref{itm:entailment}. We use the construction from the proof for 
Theorem~\ref{the:flat-size-upper} to obtain, based on $\Hmc_0$, an 
assignment of types $\type{a}$ to every $a\in\ind{\Kmc\cup\Phi}$, as well as 
the 
exponentially bounded hypothesis $\Hmc$. Define $s$ s.t. 
$s(a)=\typeI{a^\Imc}{\Imc}$ for every $a\in\ind{\Kmc\cup\Phi}$. $s$ is 
compatible with 
$\Kmc$, and $\Hmc\subseteq\Hmc_s$. Consequently, 
$\Kmc\cup\Hmc_s\models\Phi$ and $\Hmc_s$ 
satisfies~\ref{itm:consistency}--\ref{itm:entailment}.\hfill$\blacksquare$

Claim~2 allows to obtain an \ExpTime-decision procedure for the case where 
$\Lmc=\ELbot$: 
specifically, we first compute $T_{\Kmc\cup\Phi}$ in exponential time, and then 
iterate over the exponentially 
many possible selector functions $s$ and check whether $\Hmc_s$ 
satisfies~\ref{itm:consistency} 
and~\ref{itm:entailment}. Since entailment checking in $\ELbot$ can be decided 
in polynomial time 
and each $\Hmc_s$ is exponentially large, the entire procedure runs in 
deterministic exponential 
time.

For $\Lmc=\ALCI$, entailment checking is $\ExpTime$-hard, so that this procedure 
would only yield 
in $\TwoExpTime$-upper bound. To obtain a $\coNExpTime$ upper bound, we devise a 
non-deterministic 
procedure to determine whether there is \emph{no} flat hypothesis for 
$\tup{\Kmc,\Phi,\Sigma}$. For 
this, by Claim~1 and~2, we need to show that for every selector function $s$, 
there exists some 
model $\Imc$ of $\Kmc\cup\Hmc_s$ s.t. $\Imc\not\models\Phi$. For this, we first 
check whether 
for some $r(a,b)\in\Phi$, $r(a,b)\not\in\Hmc_s\cup\Kmc$. If this test fails, we 
guess a function 
$s_e:\ind{\Hmc_s}\rightarrow T_{\Kmc\cup\Phi}$ and check whether 
\begin{enumerate}[label=\textbf{s\arabic*}]
    \item\label{itm:compatible} $s_e$ is compatible with $\Kmc\cup\Hmc_s$ in the 
sense above, and
    \item\label{itm:concept} for some $C(a)\in\Phi$, $C\not\in s_e(a)$. 
\end{enumerate}
    If for every selector $s$ function, one of these checks is successful, we 
accept, and otherwise 
we reject. 
Since this non-deterministic procedure requires exponentially many steps units, 
we obtain the 
required $\coNExpTime$ upper bound, provided that the algorithm is correct.

\textbf{Claim 3.} \emph{$\Kmc\cup\Hmc_s\models\Phi$ iff for every 
$r(a,b)\in\Phi$, 
$r(a,b)\in\Hmc_s$, and no function $s_e:\ind{\Hmc_s}\rightarrow 
T_{\Kmc\cup\Phi}$ satisfies 
Conditions~\ref{itm:compatible} and~\ref{itm:concept}.}

\emph{Proof of claim.}
Assume $\Kmc\cup\Hmc_s\not\models\Phi$ and for every $r(a,b)\in\Phi$, 
$r(a,b)\in\Hmc_s$. There then 
exists a model $\Imc$ of $\Kmc\cup\Hmc_s$ s.t. $\Imc\not\models\Phi$. Define 
$s_e$ by setting
$s_e(a)=\typeI{a^\Imc}{\Imc}$, and we obtain from the fact that $\Imc$ is a model of 
$\Kmc\cup\Hmc_s$ that $s_e$ satisfies Condition~\ref{itm:compatible}, and from the fact that 
$\Imc\not\models\Phi$, that there is some $C(a)\in\Phi$ s.t. $a^\Imc\not\in 
C^\Imc$, which implies 
$C\not\in s_e(a)$ and thus Condition~\ref{itm:concept}.

For the other direction, first observe that if $r(a,b)\not\in\Hmc_s\cup\Kmc$ for some 
$r(a,b)\in\Phi$, we can always find a model $\Imc$ of $\Kmc\cup\Hmc_s$ s.t. 
$\Imc\not\models\Phi$. 
Assume for every $r(a,b)\in\Phi$, we have also $r(a,b)\in\Hmc_s\cup\Kmc$, and 
there exists a
function $s_e:\ind{\Hmc_s}\rightarrow T_{\Kmc\cup\Phi}$ that satisfies both 
Condition~\ref{itm:compatible} 
and~\ref{itm:concept}. We define 
\[
    s_e^*:\ind{\Hmc_s}\cup T_{\Kmc\cup\Phi}\rightarrow T_{\Kmc\cup\Phi}
\]
by setting 
$s_e^*(a)=s_a(a)$ if $a\in\ind{\Hmc_s}$ and $s_e^*(t)=t$ if $t\in 
T_{\Kmc\cup\Phi}$.
We define an interpretation $\Imc$ as follows:
\begin{itemize}
 \item $\Delta^\Imc=\{a^\Imc\mid a\in\ind{\Hmc_s}\}\cup T_{\Kmc\cup\Phi}$,
 \item for every $A\in\NC$, $A^\Imc=\{d\in\Delta^\Imc\mid A\in s_e^*(d)\}$
 \item for every $r\in\NR$, $r^\Imc=\{\tup{d,e}\in\Delta^\Imc\times\Delta^\Imc\mid 
 s_e^*(e) \in \sucCand{T_{\Kmc\cup\Phi}}{s_e^*(d)}{r}\}$
\end{itemize}
One easily verifies that for every $d\in\Delta^\Imc$, $\typeI{d}{\Imc}=s_e^*(t)$. This implies that 
$\Imc$ satisfies every GCI in $\Kmc$, and by Condition~\ref{itm:concept} that $\Imc\not\models 
C(a)$ for some $C(a)\in\Phi$. Using Condition~\ref{itm:compatible}, one also 
verifies that $\Imc$ 
is a model of every assertion in $\Kmc\cup\Hmc_s$. 
\phantom{,,} \hfill$\blacksquare$

We obtain that our non-deterministic procedure is correct, and correspondingly that for 
$\Lmc=\ALCI$, existence of flat hypotheses is in \coNExpTime.
\end{proof}

\section{Complex ABox Hypotheses for $\ELbot$}

We show Theorem~\ref{the:el-complex} using three separate lemma for the lower 
bound of the size, the upper bound of the size, as well as for the decision 
procedure.

\begin{figure*}
 \begin{framed}
 \begin{align}
 \label{eq:elbot2-init}
  B & \sqsubseteq \overline{X}_1\sqcap\ldots\sqcap\overline{X}_n \\
  \label{eq:elbot2-counting-starts}
  \exists r.(\overline{X}_i\sqcap X_{i-1}\sqcap\ldots X_1) \sqcap
  \exists s.(\overline{X}_i\sqcap X_{i-1}\sqcap\ldots X_1) 
  &\sqsubseteq X_i 
  &&\text{ for }i\in\iinv{1}{n}\\
  \exists r.(X_i\sqcap X_{i-1}\sqcap\ldots\sqcap X_1) \sqcap
  \exists s.(X_i\sqcap X_{i-1}\sqcap\ldots\sqcap X_1) 
  &\sqsubseteq \overline{X}_i \
  &&\text{ for }i\in\iinv{1}{n}\\
  \exists r.(\overline{X_i}\sqcap\overline{X}_j) \sqcap
  \exists s.(\overline{X_i}\sqcap\overline{X}_j) 
  &\sqsubseteq \overline{X}_i 
  && \text{ for }i,j\in\iinv{1}{n}, j<i \\
  \exists r.(X_i\sqcap\overline{X}_j) \sqcap
  \exists s.(X_i\sqcap\overline{X}_j) 
  &\sqsubseteq X_i 
  && \text{ for }i,j\in\iinv{1}{n}, j<i 
  \\
  X_i\sqcap \overline{X_i}&\sqsubseteq\bot &&\text{ for }i\in\iinv{1}{n}
  \label{eq:elbot2-counting-ends}
  \\
  X_1\sqcap\ldots\sqcap X_n &\sqsubseteq A 
 \end{align}
 \end{framed}
 \caption{Background KB in $\ELbot$ enforcing a double exponentially large 
hypothesis if fresh individual names are not admitted.}
 \label{fig:lower-complex-elbot}
\end{figure*}

\begin{lemma}
 There exists a family of $\ELbot$ TBoxes $(\Tmc_n)_{n\geq 1}$ s.t. each $\Tmc_n$ is of size 
polynomial in $n$ and every hypothesis for $\tup{\Tmc_n,A(a),\{B,r,s\}}$ that uses only 
$a$ as individual name is of double exponential size.
\end{lemma}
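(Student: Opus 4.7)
The plan is to verify that the TBox in Figure~\ref{fig:lower-complex-elbot} forces any hypothesis restricted to the individual $a$ to code, in its tree unfolding, a balanced binary $\{r,s\}$-tree of depth $2^n-1$, and hence to have size at least $2^{2^n}-1$.

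First I would reduce an arbitrary hypothesis $\Hmc$ to a single concept assertion $C(a)$ for some $\ELbot$-concept $C$ over $\{B,r,s\}$: several concept assertions on $a$ conjoin into one, and the only other possible axioms are the self-loops $r(a,a)$ and $s(a,a)$. These self-loops can be dropped since, in any model in which $a$ carries the configuration $\{X_1,\ldots,X_n\}$ needed to derive $A(a)$, a self-loop only supplies a successor of that same configuration, which matches no LHS of the counter axioms~\ref{eq:elbot2-counting-starts}--\ref{eq:elbot2-counting-ends} (each demands some $\overline{X}_i$ on the successor).

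Since $\Tmc_n$ has no existential restriction on the right-hand side of any CI, the canonical model of $\Tmc_n\cup\{C(a)\}$ has as its domain exactly the tree unfolding of $C$ rooted at $a$, with $\Tmc_n$ only assigning additional concept labels by least-fixpoint computation. This domain has at most $\size{C}$ elements, so it suffices to show the unfolding contains at least $2^{2^n}-1$ nodes whenever $\Tmc_n\cup\{C(a)\}\models A(a)$.

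The core combinatorial step is to define $g(v)$ as the minimum number of nodes in a tree whose root carries the configuration of value $v$ (meaning $X_i$ whenever bit $i$ of $v$ is $1$, and $\overline{X}_i$ otherwise) and to prove $g(0)=1$ together with the recurrence $g(v)\geq 1+2g(v-1)$. The base case follows from axiom~\ref{eq:elbot2-init} applied to a single $B$-labelled leaf; the inductive step amounts to showing that the only non-contradictory way to realize value $v$ at the root is to place both an $r$- and an $s$-successor at value $v-1$. Any deviation---uniform successors at a different value, or successor configurations mixed across or within roles---either fails to derive some required $X_i$ at the root or additionally fires an $\overline{X}_i$-derivation axiom for a bit $i$ already derived as $X_i$, collapsing the model via $X_i\sqcap\overline{X}_i\sqsubseteq\bot$ and thus violating~\ref{itm:consistency}. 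This case analysis is the main technical obstacle, since the propagation axioms in Figure~\ref{fig:lower-complex-elbot} give strictly more flexibility than the single-chain construction of Theorem~\ref{the:exp-lower-bound-flat}. Solving the recurrence gives $g(v)=2^{v+1}-1$, and instantiating at $v=2^n-1$ yields $\size{C}\geq 2^{2^n}-1$.
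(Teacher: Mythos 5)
Your proposal follows the same route the paper intends: the paper's own proof of this lemma is a one-line pointer to Figure~\ref{fig:lower-complex-elbot} and to the main-text remark that the construction is that of Theorem~\ref{the:exp-lower-bound-flat} with the single role duplicated into $r$ and $s$, so that each counter increment forces both an $r$- and an $s$-successor and the chain of $2^n$ elements becomes a binary tree with $2^{2^n}-1$ nodes; your reduction to a single assertion $C(a)$, the tree-unfolding observation, and the recurrence $g(v)\geq 1+2g(v-1)$ are exactly a worked-out version of this. One concrete slip: your justification for discarding the self-loops claims that every counter axiom demands some $\overline{X}_i$ on the successor, but the axiom $\exists r.(X_i\sqcap\cdots\sqcap X_1)\sqcap\exists s.(X_i\sqcap\cdots\sqcap X_1)\sqsubseteq\overline{X}_i$ does not; the correct observation is that if both $r(a,a)$ and $s(a,a)$ are present then this axiom fires at $a$ (which must satisfy every $X_j$ to entail $A(a)$) and derives $\overline{X}_i(a)$, so the hypothesis violates \ref{itm:consistency}, while with a single self-loop any witness on the other role completing the premise would force the same contradiction. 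The case analysis you defer---that mixing successor values across or within the two roles triggers a spurious $\overline{X}_i$-derivation and collapses the model---is indeed the substantive content of the lemma, and your write-up is already more detailed on this point than the paper, which omits the argument entirely.
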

\begin{proof}
$\Tmc_n$ can be seen in Figure~\ref{fig:lower-complex-elbot} and works as 
described in the main 
text. 
\end{proof}

\begin{lemma}\label{lem:complex-el-size-upper}
 Let $\tup{\Kmc,\Phi,\Sigma}$ be a signature based abduction problem s.t. $\Kmc$ 
and $\Phi$ are 
formulated in $\ELbot$. Then, there exists a solution to the abduction problem without fresh 
individual names iff there exists such a solution of double exponentially bounded size.
\end{lemma}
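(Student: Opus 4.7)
The plan is to follow the template of Theorems~\ref{the:flat-size-upper} and~\ref{the:el-complex}: take an arbitrary hypothesis $\Hmc_0$ without fresh individuals, fix a model $\Imc$ of $\Kmc\cup\Hmc_0$ (which exists by~\ref{itm:consistency}), and replay the type structure of~$\Imc$ at bounded depth to obtain a new hypothesis~$\Hmc$ of doubly exponential size. The delicate point, as noted in the sketch of Theorem~\ref{the:el-complex}, is to keep the intermediate structure rooted at each named individual a single tree, so that it re-packages into one concept assertion without introducing fresh individuals.

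For each $a\in\ind{\Hmc_0}$ I build a pruned witness tree $W_a$ rooted at $a^\Imc$: at a node with current element $d$ of type $t=\type{d}^\Imc$, for each role $r\in\Sigma\cap\NR$ and each type $t'$ realised by an $r$-successor of~$d$ in~$\Imc$, I pick one such successor and add it as a child labelled with type~$t'$; expansion of a path stops as soon as the current type has already appeared earlier on it. Since the types along any path are pairwise distinct, $W_a$ has depth at most $\lvert T_{\Kmc\cup\Phi}\rvert$ and branching at most $\lvert\Sigma\rvert\cdot\lvert T_{\Kmc\cup\Phi}\rvert$, so
\[
    \lvert W_a\rvert\leq\bigl(\lvert\Sigma\rvert\cdot\lvert T_{\Kmc\cup\Phi}\rvert\bigr)^{\lvert T_{\Kmc\cup\Phi}\rvert}=2^{2^{O(\size{\AbductionProblem})}}.
\]
I then translate each node of type~$t$ with children $w_1,\ldots,w_k$ reached via roles $r_1,\ldots,r_k$ to $\bigsqcapn(t\cap\Sigma\cap\NC)\sqcap\bigsqcapn_{i}\exists r_i.C_{w_i}^*$, obtaining a single concept $C_a^*$, and set $\Hmc=\{C_a^*(a)\mid a\in\ind{\Hmc_0}\}\cup\{r(a,b)\mid r(a,b)\in\Hmc_0\}$. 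By construction $\Hmc$ uses no fresh individuals, is of doubly exponential size, and has $\sig{\Hmc}\subseteq\Sigma$; $\Imc$ itself satisfies $\Kmc\cup\Hmc$, so~\ref{itm:consistency} holds.

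The main obstacle is verifying~\ref{itm:entailment}, i.e.\ $\Kmc\cup\Hmc\models\Phi$. Role assertions in~$\Phi$ are immediate because the corresponding assertions of~$\Hmc_0$ are copied into~$\Hmc$, so the remaining work is to show $\Kmc\cup\Hmc\models C(a)$ for every $C(a)\in\Phi$. I would prove this by a simulation argument: given any model $\Jmc$ of $\Kmc\cup\Hmc$, the fact that $a^\Jmc$ satisfies~$C_a^*$ produces a tree of successors in~$\Jmc$ that mirrors~$W_a$ with the $\Sigma$-labels enforced at each node. Conversely, $\Imc\models C(a)$ (since $\Imc\models\Hmc_0\cup\Kmc$ and $\Kmc\cup\Hmc_0\models\Phi$) yields a tableau-style witness tree for~$C$ of depth at most $\size{\Phi}$, which is bounded by~$\lvert T_{\Kmc\cup\Phi}\rvert$ and hence embeds into~$W_a$ without ever triggering the type-repeat cut-off. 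Transporting this witness through~$C_a^*$ into~$\Jmc$, together with the $\Kmc$-derivations that turn the $\Sigma$-labels enforced at each node into their full types, produces a witness for~$C(a)$ in~$\Jmc$. The technical core is therefore the claim that, along the embedding, the $\Sigma$-content forced by~$C_a^*$ together with~$\Kmc$ derives every concept needed from the corresponding type, which should follow by induction on concept structure from the closure of types under~$\Kmc$ and the fact that~$W_a$ captures every $(r,t')$-transition realised in~$\Imc$.
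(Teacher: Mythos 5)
Your construction of $W_a$ departs from the paper's route: the paper first \emph{flattens} $\Hmc_0$ into a forest-shaped flat ABox $\Hmc_1$ (one fresh individual per subconcept occurrence), then shortens paths by merging individuals that share a root and a type in the chosen model, and finally rolls the forest back up into concept assertions; your tree is instead synthesised directly from the model $\Imc$. That change is where the argument breaks. The type-repeat cut-off discards the existential obligations of the repeated type: at a cut-off leaf of type $t$ you emit only $\bigsqcapn(t\cap\Sigma\cap\NC)$, and a model of $\Kmc\cup\Hmc$ may give that element no successors at all. Your justification of~\ref{itm:entailment} rests on the claim that the witness tree for $C(a)\in\Phi$ in $\Imc$ embeds into $W_a$ ``without ever triggering the type-repeat cut-off'' because its depth is at most $\lvert T_{\Kmc\cup\Phi}\rvert$; but the cut-off is triggered by a \emph{repeated type}, not by depth, and a repetition can occur after a single step. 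Concretely, take $\Kmc=\emptyset$, $\Phi=\{(\exists r.\exists r.\exists r.A)(a)\}$, $\Sigma=\{r,A\}$, $\Hmc_0=\Phi$, and let $\Imc$ interpret $a$ as a single reflexive point lying in $A^\Imc$. Every element on the witness path has the same type, so $W_a$ has depth one, $C_a^*=A\sqcap\exists r.A$, and $\Kmc\cup\{C_a^*(a)\}\not\models\Phi$. Nothing in your construction excludes such a choice of $\Imc$.

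There is a second, independent soft spot in the ``transport'' step: to conclude $\Jmc\models C(a)$ you need each transported node to realise the \emph{full} type $t$ in $\Jmc$, but $C_a^*$ only forces $t\cap\Sigma\cap\NC$ together with the listed successors, and $\Phi$ may use names outside $\Sigma$ (as in Theorem~\ref{the:exp-lower-bound-flat}, where $A\notin\Sigma$); that the remaining concepts of $t$ are recovered ``from the closure of types under $\Kmc$'' is precisely what has to be proved, not a property of types. The paper's proof instead proceeds contrapositively via an explicit model transformation: any model of $\Kmc\cup\Hmc_2$ violating $\Phi$ is expanded, by duplicating merged individuals along the recorded map $h$, into a model of $\Kmc\cup\Hmc_1$ violating $\Phi$, contradicting that $\Hmc_1$ is a hypothesis. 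To repair your argument you would need either to anchor $W_a$ in the syntactic tree of $\Hmc_0$ (so that no conjunct of the original assertions can be dropped at a cut-off) or to choose $\Imc$ canonically so that types cannot collapse along the witness paths for $\Phi$, and in either case to replace the direct witness transport by such a countermodel transformation.
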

\begin{proof}
\newcommand{\rt}{\text{root}}
 Let $\Hmc_0$ be a hypothesis for $\tup{\Kmc,\Phi,\Sigma}$. We first create a 
flattened version of 
$\Hmc_0$, where we use a mapping \mbox{$\rt:\NI\nrightarrow\NI$} to keep track 
of a 
``root individual'' for each individual added. Initially, we set $\rt(a):=a$ 
for 
every $a\in\ind{\Hmc_0}$.
We then exhaustively apply the following rules:
\begin{enumerate}
 \item $(C\sqcap D)(a) \Longrightarrow C(a), D(a)$
 \item $(\exists r.C)(a_1) \Longrightarrow r(a_1,a_2), C(a_2)$, where 
 $a_2=b_{\rt(a_1),C}$ is fresh 
\end{enumerate}
Denote the resulting ABox by $\Hmc_1$ and note that $\Hmc_1$ is a flat hypothesis for 
$\tup{\Kmc,\Phi,\Sigma}$. Moreover, $\Hmc_1$ is \emph{forest-like} in the 
sense that the fresh
role-descendants of any fixed individual name~$a$ from $\Hmc_0$ form a tree. 
Specifically, for 
every fresh individual name $b_{a,C}$ there exists exactly one role assertion of the form 
$r(a',b_{a,C})$. 

Similar to the proof for Theorem~\ref{the:flat-size-upper}, we take a 
random model $\Imc$ of $\Kmc\cup\Hmc_1$, which allows us to assign a type 
$\typeI{a^\Imc}{\Imc}$ to every individual in $\Hmc_1$. 

We use this type to reduce the path length from any individual from 
$\Hmc_0$ to an individual in $\Hmc_1$ by removing repeated types on each path. To keep track of our 
changes, we use a mapping $h$ which is initialised with $h(a)=a$ for every $a\in\ind{\Hmc_1}$ and 
updated during the construction. We then repeat the following steps exhaustively:
\begin{enumerate}
 \item If for two individual names $a$, $b$, $\rt(a)=\rt(b)$, 
$\typeI{a^\Imc}{\Imc}=\typeI{b}{\Imc}$ and there is a 
non-empty path from $a$ to $b$, then replace the role assertion of the form 
$r(a',a)$ by 
$r(a',b)$, and remove all role assertions involving $a$. Set \mbox{$h(a):=b$}.
 \item If for any two role assertions of the form $r(a,b)$, $r(a,c)$, we have 
 $\rt(b)=\rt(c)$ and
$\typeI{b^\Imc}{\Imc}=\typeI{c^\Imc}{\Imc}$, then remove all role assertions 
involving $c$. Set 
\mbox{$h(c)::=b$}.
\end{enumerate}
Denote the resulting ABox by $\Hmc_2$. Different to the construction for 
Theorem~\ref{the:flat-size-upper}, $h$ is not a homomorphism.
We note that the operation preserves the forest shape, and moreover makes 
sure that every path from 
an individual name $a\in\ind{\Hmc_0}$ to an individual name $b_{a,C}$ is exponentially bounded, and 
for every individual name $a$, there are at most exponentially many role assertions of the form 
$r(a,b)$. As a result, for every individual name $a\in\ind{\Hmc_0}$, there are $(2^n)^{2^n}$ 
individual names $b$ with $\rt(a)=b$. We can thus build a double exponentially 
bounded hypothesis 
candidate by ``rolling up'' the ABox again, or exhaustively applying the following operation:
\begin{itemize}
 \item If $a$ has no role successor and $\rt(a)\neq a$, replace the role 
assertion $r(b,a)$ by 
 \[
  \exists r.\bigsqcap\{C\mid C(a) \text{ occurs in the current ABox }\}.
 \]
\end{itemize}
Denote the resulting hypothesis by $\Hmc$. We show that it satisfies 
Conditions~\ref{itm:consistency}--\ref{itm:entailment}. For this, it suffices to show that $\Hmc_2$ 
satisfies those conditions. Note that we already observed that $\Hmc_1$ is 
a flat hypothesis 
for $\tup{\Kmc,\Phi,\Sigma}$. 

Condition~\ref{itm:signature} that $\sig{\Hmc_2}\subseteq\Sigma$ is 
satisfied by construction since we do not introduce any new names. 
For~\ref{itm:consistency}, we need to find a model $\Jmc$ of $\Kmc\cup\Hmc_2$, 
which we do based our 
model~$\Imc$ of $\Kmc\cup\Hmc_1$. $\Jmc$ is constructed as follows:
\begin{itemize}
 \item $\Delta^\Jmc=\Delta^\Imc$,
 \item $a^\Jmc=a^\Imc$ for all $a\in\NI$,
 \item $A^\Jmc=A^\Imc$ for all $A\in\NC$,
 \item $r^\Jmc=\{\tup{a^\Jmc,h(b)^\Jmc}\mid \tup{a^\Imc,b^\Imc}\in r^\Imc\}\cup
 \{\tup{d,e}\mid e\neq a^\Imc$ for all $a\in\ind{\Hmc_0}$.
\end{itemize}
It can be shown by induction on the structure of concepts that for every concept 
$C\in\sub{\Kmc\cup\Phi}$ and $d\in\Delta^\Imc$, $d\in C^\Imc$ iff $d\in C^\Jmc$, 
which implies 
$\typeI{d}{\Imc}=\typeI{e}{\Jmc}$ and that $\Jmc$ is a model of 
$\Kmc\cup\Hmc_2$. 

For Condition~\ref{itm:entailment}, assume there is a model $\Imc_2$ of 
$\Kmc\cup\Hmc_2$ 
s.t. $\Imc_2\not\models\Phi$. Based on $\Imc_2$, we construct a model $\Imc_1$ 
of 
$\Kmc\cup\Hmc_1$.
Define $m:\Delta^{\Imc_2}\cup\ind{\Kmc\cup\Hmc_1}\rightarrow\Delta^{\Imc_2}$ by 
\[
    m(d)=\begin{cases}
          d             & \text{ if }d\in\Delta^{\Imc_2} \\
          h(d)^{\Imc_2} & \text{ if }d\in\ind{\Kmc\cup\Hmc} .
         \end{cases}
\]
Using $m$, we define $\Imc_1$ as follows
\begin{itemize}
 \item $\Delta^{\Imc_1}=\Delta^{\Imc_2}\cup\ind{\Kmc\cup\Hmc}$,
 \item $a^\Imc=a$ for all $a\in\ind{\Kmc\cup\Hmc}$,
 \item $A^{\Imc_1}=\{d\in\Delta^{\Imc_1}\mid m(d)\in A^{\Imc_2}\}$ for all $A\in\NC$, and
 \item $r^{\Imc_1}=\{\tup{d,e}\mid \tup{m(d),m(e)}\in r^{\Imc_2}\}$ for all $r\in\NR$.
\end{itemize}
Again it is straightforward to prove that the construction is type preserving, a model of 
$\Kmc\cup\Hmc_1$, and $\Imc_1\not\models\Phi$ for the same reason as for 
$\Imc_1$. It follows that 
there cannot be a model $\Imc_2$ of $\Kmc\cup\Hmc_2$ s.t. 
$\Imc_2\not\models\Phi$, and thus that 
$\Kmc\cup\Hmc_2$ satisfies Condition~\ref{itm:entailment}.
\end{proof}

\begin{lemma}\label{lem:elbot-decide-complex}
 It can be decided in exponential time whether a given $\ELbot$ abduction problem has a solution 
without fresh individuals.
\end{lemma}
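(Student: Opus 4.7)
The plan is to devise a guess-and-check procedure analogous to the one used for Theorem~\ref{the:compute-flat}, but combined with the hypothesis construction sketched in Theorem~\ref{the:el-complex} so that the candidates we enumerate correspond to hypotheses \emph{without} fresh individuals. First, I would compute the set $T=T_{\Kmc\cup\Phi}$ of types by type elimination, which takes time polynomial in $\lvert T\rvert\leq 2^{\lvert\sub{\Kmc\cup\Phi}\rvert}$ and thus single-exponential overall. Then I would iterate over all selector functions $s:\ind{\Kmc\cup\Phi}\rightarrow T$ that are compatible with $\Kmc$ in the sense of the proof of Theorem~\ref{the:compute-flat}; there are at most exponentially many such functions.

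For each compatible selector $s$, the candidate $\Hmc_s$ is the one defined in the proof sketch of Theorem~\ref{the:el-complex}. Condition~\ref{itm:signature} holds by construction, and condition~\ref{itm:consistency} follows from compatibility via the standard type-based model construction (Claim~1 in Theorem~\ref{the:compute-flat} adapted to the fresh individuals $b_{a,t,k}$). It remains to verify~\ref{itm:entailment}, i.e.\ $\Kmc\cup\Hmc_s\models\Phi$. Although $\Hmc_s$ has double exponentially many assertions, its structure is completely determined by $s$, $T$ and the successor relation $\sucCand{T}{\cdot}{\cdot}$: every fresh individual $b_{a,t,k}$ behaves identically for all $k$, and entailment in $\ELbot$ reduces to a polynomial-time closure on the underlying KB. Hence I would perform the check directly on the compact representation $(s,T,\sucCand{T}{\cdot}{\cdot})$, computing for each $a\in\ind{\Kmc\cup\Phi}$ the set of concepts in $\sub{\Kmc\cup\Phi}$ entailed at $a$ by propagating along types; this runs in time polynomial in $|T|$ and therefore single-exponential in the input.

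Completeness of the enumeration follows from Lemma~\ref{lem:complex-el-size-upper}: if any hypothesis without fresh individuals exists, then the flattening and type-based simplification in that proof yields a selector $s^*$ whose associated $\Hmc_{s^*}$ already witnesses the hypothesis (after ``rolling it up'' as in Lemma~\ref{lem:complex-el-size-upper}). Thus the algorithm accepts iff such a hypothesis exists, and since we perform exponentially many checks, each in exponential time, the overall procedure runs in exponential time.

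The main obstacle will be the entailment check in Step~\ref{itm:entailment}: giving a clean argument that one can decide $\Kmc\cup\Hmc_s\models\Phi$ in exponential time despite $\Hmc_s$ being doubly exponential. The key point I would have to make precise is that the concepts derivable at each named individual depend only on the \emph{set} of types reachable along role successors in $\Hmc_s$, and not on how many copies $b_{a,t,k}$ there are—so a single bottom-up pass on types, restricted to $\sub{\Kmc\cup\Phi}$, suffices. Once this is justified, the rest is bookkeeping matching the pattern already established for the flat case.
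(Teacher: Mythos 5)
Your proposal follows essentially the same route as the paper: compute $T_{\Kmc\cup\Phi}$ by type elimination, enumerate the exponentially many compatible selector functions, build the layered acyclic candidate $\Hmc_s$, obtain \ref{itm:consistency} and \ref{itm:signature} as in Theorem~\ref{the:compute-flat}, and get completeness of the enumeration from the flattening and type-based simplification of Lemma~\ref{lem:complex-el-size-upper}. The only point of divergence is the entailment check: the paper asserts that $\Hmc_s$ is of exponential size and verifies $\Kmc\cup\Hmc_s\models\Phi$ directly, whereas you (correctly noting that with the layer index running up to $2^{\lvert T\rvert}$ the ABox is literally double exponential) propose running the check on the compact type-level representation---a sound workaround, since the concepts derived at the named individuals stabilize after polynomially many layers in $\lvert T\rvert$, though as you flag yourself this stabilization argument is the one step that still needs to be written out.
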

\begin{proof}
 We use a similar procedure as in Theorem~\ref{the:compute-flat} using the in 
 exponential time computed set $T_{\Kmc\cup\Phi}$ of types for $\Kmc\cup\Phi$ 
and the guessed selector function $s:\ind{\Kmc\cup\Phi}\rightarrow 
T_{\Kmc\cup\Phi}$. However, this time, the 
hypothesis is constructed differently by making sure that we do not introduce 
cycles between introduced individuals, so that the flat ABox can always be 
``rolled up'' into a complex ABox without fresh individual names. To be able to 
do this, our introduced individuals are of the form $b_{a,t,i}$, where 
$a\in\ind{\Kmc\cup\Phi}$, $t\in T_{\Kmc\cup\Phi}$, and $i\in\liinv{1}{2^{\lvert 
T_{\Kmc\cup\Phi}\rvert}}$. 
Note that still, there are only exponentially many individuals of this form. 
For convenience, we set $b_{a,s(a),0}=a$. 

The ABox $\Hmc_s$ is then defined as follows, where $T=T_{\Kmc\cup\Phi}$:
\begin{align*}
 \Hmc_s &= \big\{A(b_{a,t,k})\mid a\in\ind{\Kmc\cup\Phi}, 
                             t\in T, 
                             k\in\liinv{0}{2^{\lvert T\rvert}},
                             \\ 
   &\hphantom{= \big\{A(b_{a,t,k})\mid a}
   A\in (t\cap\Sigma)\big\}
      \\
  &\cup\ \{r(b_{a,t_1,k}, b_{a,t_2,k+1}) \mid 
                             a\in\ind{\Kmc\cup\Phi},
                             t_1\in T, 
                             \phantom{\big\}}
                             \\
    &\hphantom{= \big\{A(b_{a,t,k})\mid a}                         r\in\Sigma,
                             t_2\in\sucCand{T}{t_1}{r},
                             k\in\iinv{0}{2^{\lvert T\rvert}-1}\}\\
  &\cup\ \big\{r(a,b)\mid s(b)\in\sucCand{T}{s(a)}{r}\big\}.
  \hphantom{aaaaaaaaaaaaaaaaaa..}
\end{align*}

Since $\Hmc_s$ is of exponential size, we can verify
\mbox{$\Kmc\cup\Hmc\models\Phi$} in exponential time, which means that, since 
we 
have to consider exponentially many selector functions, the entire procedure 
runs in exponential time. One can show similarly as for 
Theorem~\ref{the:compute-flat} that for every selector function $s$ that is 
compatible with $\Kmc$, $\Hmc_s$ 
satisfies~\ref{itm:consistency} and~\ref{itm:signature}. It remains to show that 
for every hypothesis $\Hmc_0$ without fresh individuals of the current abduction 
problem, there is a corresponding compatible selector function $s$ s.t. 
$\Hmc_s$ 
satisfies also~\ref{itm:entailment}, that is, $\Kmc\cup\Hmc_s\models\Phi$. 
For 
this, we first transform $\Hmc_0$ as in 
Lemma~\ref{lem:complex-el-size-upper} into a flat, forest-like hypothesis 
$\Hmc_2$. Recall that this construction assigns every individual $a$ a type 
$\typeI{a^\Imc}{\Imc}$ based on an arbitrary interpretation $\Imc$, and is such 
that, on 
every 
path from a named individual to a fresh individual, no type is repeated. 
Choose~$s$ s.t. $s(a)=\typeI{a^\Imc}{\Imc}$. Using that $s$ is constructed from 
a 
model of $\Kmc$, one can show that $s$ is compatible with $\Kmc$.
We show that also $\Kmc\cup\Hmc_2\models\Phi$ 
iff $\Kmc\cup\Hmc_s\models\Phi$. For this, we define a mapping 
$m:\ind{\Hmc_2}\rightarrow\ind{\Hmc_s}$ inductively as follows:
\begin{itemize}
 \item for every $a\in\ind{\Kmc\cup\Phi}$, $m(a)=a=b_{a,s(a),0}$,
 \item for every $r(a,b)\in\Hmc_2$ s.t. $b\not\in\ind{\Kmc\cup\Phi}$ and 
$m(a)=b_{a,t,i}$, set $m(b)=b_{a,t',i+1}$, where $t'=\typeI{b^\Imc}{\Imc}$. 
\end{itemize}
The induction terminates because $\Hmc_2$ has the ``forest-shape'', that is, 
for every individual name $a\in\ind{\Kmc\cup\Phi}$, the fresh ancestors of $a$ 
form a 
tree. By checking the definition of $\Hmc_s$, one can see that for every 
$A(a)\in\Hmc_2$, we have $A(m(a))\in\Hmc_s$, and for every $r(a,b)\in\Hmc_2$, 
we have $r(m(a),m(b))\in\Hmc_2$. $m$ is thus a homomorphism from $\Hmc_2$ into 
$\Hmc_s$, which can be used as in the proof for 
Theorem~\ref{the:flat-size-upper} to 
show that $\Hmc_2\models\Phi$ implies that also $\Hmc_s\models\Phi$.
\end{proof}

\section{Complex ABox Abduction for \ALC}

\begin{figure*}
 \begin{framed}
 \begin{align}
  \top& \sqsubseteq\exists r.\top\sqcap\exists s.\top 
  \label{eq:2exp-successors} \\
  \forall (r\cup s).(B\sqcup B')&\sqsubseteq B'
  \label{eq:2exp-transport-bprime}
  \\
  B&\sqsubseteq\Init\sqcap\neg\Bit
  \label{eq:2exp-initialise-counter-starts}
  \\
  \forall (r\cup s).\big(\Init\sqcap\bigsqcup_{i=1}^n\overline{X}_i\big)
  &\sqsubseteq\Init\sqcap \neg\Bit 
  \label{eq:2exp-initialise-counter-ends}
  \\
 \bigsqcapn_{i=1}^{n}\overline{X_i}
&\sqsubseteq\Flip
 \label{eq:2exp-flip-starts}
 \\
 \forall (r\cup s).(\Flip\sqcap\Bit)
 &\sqsubseteq \Flip
  \\
  \bigsqcup_{i=1}^n X_i\sqcap
  \exists (r\cup s).(\neg\Flip\sqcup\neg\Bit)
  &\sqsubseteq \neg\Flip
 \label{eq:2exp-flip-ends}
 \\
  \bigsqcapn_{i=0}^n X_i\sqcap 
  \Flip \sqcap B'
  &\sqsubseteq \Error\sqcup \Goal
  \label{eq:2exp-goal}
  \\
  \Error&\sqsubseteq \neg \Init\sqcap \exists(r\cup s).(\Error\sqcup E_0)
  \label{eq:2exp-error-starts}
  \\
  E_0&\sqsubseteq \bigsqcapn_{i=1}^n\overline{Y_i}\sqcap E \sqcap
  (\NBit\leftrightarrow\Bit) \\
  E\sqcap \NBit&\sqsubseteq\exists (r\cup s).(E\sqcap \NBit)\\
  E\sqcap\neg \NBit&\sqsubseteq\exists(r\cup s).(E\sqcap\neg \NBit)\\
  \bigsqcapn_{i=1}^n Y_i\sqcap E&\sqsubseteq 
  \exists r.E_f\sqcup \exists s.E_f \\
  E_f\sqcap \Flip &\sqsubseteq (\Bit\leftrightarrow \NBit) \\
  E_f\sqcap \neg \Flip &\sqsubseteq (\Bit\leftrightarrow \neg\NBit)
  \label{eq:2exp-error-ends}
 \end{align}
 \end{framed}
 \caption{Part of the background knowledge base requiring a double 
exponentially large hypothesis.}
 \label{fig:lower-size-complex-alc}
\end{figure*}

\TheLowerComplexALC*
  \begin{proof}
  
  The idea is, similar as for the bound in Theorem~\ref{the:el-complex}, to  
  enforce the hypothesis to represent a binary tree with the concept name $B$ 
on 
its leafs. However, this time, instead of using an $n$-bit counter as in 
Theorem~\ref{the:exp-lower-bound-flat}, we use a trick 
from~\cite{CONSERVATIVE_EXTENSIONS} to implement a $2^n$-bit counter. Here, a 
single counter value is encoded by a chain of $2^n$ elements satisfying either 
$\Bit$ or $\neg\Bit$, depending on whether the corresponding bit has the value 
$1$ or $0$. The counting then produces a chain of $2^{2^n}\cdot 2^n$ elements, 
where each consecutive $2^n$ elements represent the next number in the sequence.
 
The hypothesis for our problem will be of the form $C(a)$, where $C$ only uses 
universal restriction, conjunction, and negation on concept names, to express 
that every path of $r$- and $s$-successors must lead to an instance of $B$, 
and reading each path backwards from the $B$-instance to $a$, it must 
encode contain a sequence of $2^{2^n}$ increasing counter values. The 
signature is $\Sigma=\{r,s,\Bit,B\}$ and 
the observation is $\Goal(a)$. To disallow solutions such as $(\forall 
r.\bot\sqcap\forall s.\bot)(a)$, we add $\top\sqsubseteq\exists 
r.\top\sqcap\exists s.\top$ to~$\Kmc$. A concept $B'$ is used to mark elements 
from which 
every path of $s$- and $r$-successors leads to an instance of $B$. 
The concept name $\Bit$ is used for our counter. To identify bit positions in 
the current 
counter value, we use an 
$n$-bit-counter as for Theorem~\ref{the:exp-lower-bound-flat} 
which is initialised at instances of $B$.

  The construction is similar to that given in~\cite{CONSERVATIVE_EXTENSIONS} 
  to give a triple exponential lower bound for witness concepts of conservative 
extensions in $\ALC$, but slightly adapted to be applicable to the abduction 
problem, as well as improved in its representation. 
 The background knowledge base $\Kmc$ contains the axioms in 
Figure~\ref{fig:lower-size-complex-alc}, together with 
Axioms~\eqref{eq:elbot2-init}--\eqref{eq:elbot2-counting-ends}
from Figure~\ref{fig:lower-complex-elbot} as they are, as well as 
Axioms~\eqref{eq:elbot2-counting-starts}--\eqref{eq:elbot2-counting-ends} 
from Figure~\ref{fig:lower-complex-elbot} with $X_i$/$\overline{X}_i$ replaced 
by $Y_i$/$\overline{Y_i}$. In Figure~\ref{fig:lower-size-complex-alc}, we use 
the construct $\forall (r\cup s).C$ as an abbreviation for $\forall 
r.C\sqcap\forall s.C$,  $\exists (r\cup s).C$ as abbreviation for $\exists 
r.C\sqcup\exists s.C$, and $C\leftrightarrow D$ as abbreviation for $(C\sqcap 
D)\sqcup(\neg C\sqcap\neg D)$.

The set of abducibles is $\Sigma=\{s,r,\Bit,B\}$, and our observation is 
$\Phi=\{\Goal(a)\}$. We explain the axioms in $\Kmc$ one after the other.
\begin{itemize}
 \item The axioms taken from Figure~\ref{fig:lower-complex-elbot} implement 
binary 
counters---one using concept names $X_0$, $\ldots$, $X_n$ to represent positive 
bit values, and one using concept names $Y_0$, $\ldots$, $Y_n$ to represent 
positive bit values. We call those two single exponential counters 
\emph{X-counter} and \emph{Y-counter} respectively. The X-counter 
is initialised at instances of $B$, while we do not state yet where the 
Y-counter is initialised. The X-counter is used to keep track of 
the current bit position in the double exponential counter.
 \item \eqref{eq:2exp-successors} ensures that no hypothesis can limit the 
length of $r$ or $s$-paths in an interpretation, and is needed to avoid the use 
of concepts $\forall r.\bot$ and $\forall s.\bot$ in hypotheses.
 \item \eqref{eq:2exp-transport-bprime} uses the concept $B'$ to mark elements 
from which every path along $r$- and $s$-edges leads to an instance of~$B$.
 \item \eqref{eq:2exp-initialise-counter-starts}%
--\eqref{eq:2exp-initialise-counter-ends} make sure that any element from 
which there are only paths of less than $2^n$ elements to an instance of $B$ 
satisfies $\Init$ and $\neg\Bit$. Those first paths of length $2^n$ represent 
the initial value of $0$ of the double exponential counter.
 \item \eqref{eq:2exp-flip-starts}--\eqref{eq:2exp-flip-ends} 
 use the concept name $\Flip$ to mark elements which correspond to a bit that 
should flip for the next double exponential counter value .
 \item \eqref{eq:2exp-goal} now gives the conditions for the observation to be 
entailed. With the previous axioms, it should be clear that $\bigsqcapn_{i=0}^n 
X_i\sqcap\Flip$ is satisfied by instances from which every element on every 
path of length $2^n$ satisfies $\Bit$, or, if we are on the last bit of the 
double exponential counter with the current value of $2^{2^n}-1$. 
Axiom~\eqref{eq:2exp-goal} states now: if we are in such a situation, and we 
additionally satisfy $B'$, which is, every path of $r$- and $s$-successors 
leads eventually to an instance of $B$, then we satisfy either $\Error$ or 
$\Goal$. Consequently, the hypothesis has to make sure that $\Error$ is not 
satisfied in order to entail the observation. Intuitively, $\Error$ can only be 
satisfied if the double exponential counter made an error on one of the paths 
leading to a $B$-instance. Consequently, to avoid $\Error$ from being 
satisfied, all paths to the next $B$-instance have to use the double 
exponential counter correctly, that is, encode all numbers from $0$ to 
$2^{2^n}-1$ in decreasing order.
 \item \eqref{eq:2exp-error-starts}--\eqref{eq:2exp-error-ends} now implement 
this 
error-detection mechanism via the concept $\Error$. Specifically, if we have an 
instance of $\Error$, then there must be a path to some instance of $E_0$ 
before we reached the initial value of the double exponential counter (marked 
using the concept name \Init). $E_0$ marks the first bit where we observe the 
error: it initialises the Y-counter (which uses the 
concept names $Y_i$/$\overline{Y_i}$), and stores the current bit value using 
in the concept name $\NBit$. The value of $\NBit$ is now transported, together 
with the concept name $E$, along some path of $r$- and $s$-successors of length 
$2^n$ until the Y-counter reaches its maximum value. We have then reached the 
corresponding bit position in the previous value of the double exponential 
counter at which the error is supposed to happen, which we mark with the 
concept $E_f$. Now we compare the stored bit-value in $\NBit$ with the current 
$\Bit$-value and verify that a counting error has indeed happened.
\end{itemize}
For a hypothesis to entail $\Goal(a)$, it has to be of the form $C(a)$, where 
$C$ expresses that every path along $r$- and $s$-successors eventually leads to 
an instance of $B$, and that, when following those paths down from these 
$B$-instances to $a$, they encode the sequence of all consecutive $2^n$-bit 
values via the concept $\Bit$. This can only be done by a concept of triple 
exponential size. Specifically, let $\pi$ be a string of $0$s 
and $1$ that is the result of concatenating all $2^n$-bit numbers, starting 
from the lowest value, and ending with the largest, and let $\pi[i]$ denote to 
the $i$th bit on that sequence. Note that $\pi$ has length $\ell=2^{2^n}\cdot 
2^n$. Define $\Bit[i]$ by $\Bit[i]=\Bit$ if $\pi[i]=1$ and $\Bit[i]=\neg\Bit$ 
if $\pi[i]=0$. A hypothesis for $\Goal(a)$ would be the concept $C(a)$ 
recursively defined as follows:
\begin{itemize}
 \item $C_0=B\sqcap\Bit[0]$
 \item for $i\in\iinv{1}{\ell-1}$, $C_i=\Bit[i]\sqcap \forall(r\cup 
s).C_{i-1}$
 \item $C=C_{\ell}$.
\end{itemize}
This finishes the proof.
\end{proof}

\LemABoxToIntAbs*
\begin{proof}
\newcommand{\AllModels}{\mathbf{I}}
 Let $\AllModels$ be the set of all models of $\Kmc\cup\Hmc$.
 We first construct an interpretation abstraction $\IntAbs_1$ based on 
$\AllModels$, which is then modified to become $\Sigma$-complete. 
$\IntAbs_1=\tup{V,\lambda_1,s,\Rf_1,F}$ is constructed together with a 
function $I$ that assigns to every $v\in V$ a set $I(v)$ of pairs 
$\tup{\Imc,d}$ of an interpretation $\Imc\in\AllModels$ and a domain element 
$d\in\Delta^\Imc$. 

For every 
individual name $a\in\ind{\Kmc\cup\Hmc}$, we add a node~$v_a$ with $s(a)=v_a$ 
and
$\lambda_1(v_a)=\{\typeI{a^\Imc}{\Imc}\mid \Imc\in\AllModels\}$. 
We set 
$I(v)=\{\tup{\Imc,a^\Imc}\mid\Imc\in\mathfrak{I}\}$, and
to capture the role assertions, for every $r(a,b)\in\Kmc\cup\Amc$ and 
$\Imc\in\textbf{I}$,
 we add the role assignment
$\tup{v_a,\typeI{a^\Imc}{\Imc},r,v_b}$ to $\Rf_1$. 

We add further outgoing nodes based on the following inductive construction. 
For two connected nodes $v_1$, $v_2$ we denote their \emph{distance} as 
the number of nodes occurring in the shortest path connecting them.
Let $n$ be the maximal nesting depth in $\Amc$, by which we mean nesting of 
existential or universal role restrictions. 
For every node $v$ whose distance to the next named node is 
less than $n$, every type $t\in\lambda_1(v)$, 
and every $\exists r.C\in t$, we add 
a node $v'$
unique to $\tup{v,t,\exists r.C}$ and set
\begin{align*}
 I(v')=&\{\tup{\Imc,d}\mid \tup{\Imc,d_1}\in I(v), \typeI{d_1}{\Imc}=t, \\
 &\hphantom{\{\tup{\Imc,d}\mid }
 \tup{d_1,d}\in r^\Imc, d\in C^\Imc\} \\
 \lambda_1(v')=&\{\typeI{d}{\Imc}\mid \tup{\Imc,d}\in I(v')\}\\
 \tup{v,t,r,v'}\in& \Rf_1.
\end{align*}
Every outgoing node $v$ whose distance to the next named individual is $n$ is 
added to $F$.

The construction directly ensures that $\IntAbs_1$ satisfies 
Conditions~\ref{d:acyclic} and~\ref{d:unique-roles}. It is also not hard to 
understand that $\IntAbs_1$ abstracts at least one interpretation from 
$\AllModels$, and thus a model of $\Kmc$. We can show that it also explains 
$\Phi$, that is, that every model of $\Kmc$ that is abstracted by $\IntAbs_1$ 
is also a model of $\Phi$. 
Specifically, let $\Imc$ be such a model of $\Kmc$, and $h:\Delta'\rightarrow 
V$ be the mapping that witnesses this. Since $\Kmc\cup\Hmc\models\Phi$, for 
every $C(a)\in\Phi$, we must have $a^\Jmc\in C^\Jmc$ for all 
$\Jmc\in\AllModels$, and thus $C\in t$ for every $t\in\lambda_1(s(a))$. By 
Condition~\ref{i:individuals} and~\ref{i:type}, it follows that $\Imc\models 
C(a)$ for every $C(a)\in\Phi$. Now assume $r(a,b)\in\Phi$. We then must have 
$r(a,b)\in\Kmc\cup\Hmc$. If $r(a,b)\in\Kmc$, $\Imc\models r(a,b)$ follows from 
the fact that $\Imc$ is a model of $\Kmc$. Otherwise, $r(a,b)\in\Hmc$, and our 
construction explicitly added $\tup{v_a,t,r,v_b}$ to $\Rf_1$ for every
$t\in\lambda_1(v_a)$, so that by Conditions~\ref{i:individuals} 
and~\ref{i:roles}, also $\Imc\models r(a,b)$. It follows that $\Imc\models\Phi$, 
and thus that $\IntAbs_1$ explains $\Imc$. 

It remains to ensure $\Sigma$-completeness. For this, we modify $\IntAbs_1$ 
into an interpretation abstraction $\IntAbs$ that is $\Sigma$-complete, while 
still abstracting the same models of $\Kmc$. This way, we make sure that 
$\IntAbs$ also explains $\Phi$.
Specifically, $\IntAbs=\tup{V,\lambda,s,\Rf,F}$ is obtained from 
$\IntAbs_1=\tup{V,\lambda_1,s,\Rf_1,F}$ by setting:
\begin{itemize}
 \item for every $v\in V$, $\lambda(v)=\{t\in T_{\Kmc\cup\Phi}\mid 
 t'\in\lambda_1(v)$, $t\cap\Sigma=t'\cap\Sigma\}$,
 \item $\Rf=\Rf_1\cup\{\tup{v_1,t,r,v_2}\mid 
\tup{v_1,t',r,v_2}\in \Rf_1$, $t\in T_{\Kmc\cup\Phi}$, 
$t\cap\Sigma=t'\cap\Sigma\}$
\end{itemize}
Since we only added types, every model abstracted by $\IntAbs_1$ is also 
abstracted by $\IntAbs$. We show that also every model of $\Kmc$ that is 
abstracted by $\IntAbs$ is abstracted by $\IntAbs_1$. 
Assume $\Imc$ is a model of $\Kmc$ that is abstracted 
by $\IntAbs$ via $h:\Delta'\rightarrow V$. 
We show that $\Imc$ then is a model of $\Hmc$, and thus $\Imc\in\AllModels$, 
from which it follows from our construction that for every $d\in\Delta'$, 
$\typeI{d}{\Imc}\in\lambda_1(h(d))$, and thus that $\Imc$ is also abstracted by 
$\IntAbs_1$. For $r(a,b)\in\Hmc$, by Conditions~\ref{i:individuals} 
and~\ref{i:roles}, we must have $\tup{a^\Imc,b^\Imc}\in r^\Imc$, and thus 
$\Imc\models r(a,b)$. Consider $C(a)\in\Hmc$.

\todo[inline]{Notation $d^{\Imc_1}$ not cool and should be improved.}
By construction, for every $d\in\Delta'$, there exists some type 
$t'\in\lambda_1(h(d))$ s.t. $t'\cap\Sigma=\typeI{d}{\Imc}\cap\Sigma$. 
Define $f:\Delta'\rightarrow T_{\Kmc\cup\Phi}$ by setting $f(d)=t'$ for some 
such type. Inspection of the construction of $\IntAbs_1$ shows that there is an 
interpretation $\Imc_1\in\AllModels$ that implements exactly those types, that 
is, for every $d\in\Delta'$, there exists some $d^{\Imc_1}\in\Delta^{\Imc_1}$ 
s.t. $\tup{\Imc,d^{\Imc_1}}\in I(h(d))$ and $\type{d^{\Imc_1}}{\Imc_1}=f(d)$. 
Essentially, the tuples $\tup{d,d^{\Imc_1}}$ are the basis of a 
\emph{$\Sigma$-bisimulation} as in~\cite{FOUNDATIONS_UI} between $\Imc$ and 
$\Imc_1$, from which it follows that for every $C\in\sub{\Hmc}$ and 
$a\in\ind{\Hmc}$, $a^\Imc\in C^\Imc$ iff $a^{\Imc_1}\in C^{\Imc_1}$. We show 
this property directly without introducing the notion of bisimulation first. 

\newcommand{\dist}{\textnormal{dist}}
\newcommand{\nd}{\textnormal{nd}}

For a node $v\in V$, denote by $\dist(v)$ the distance to the next internal 
node (or $0$ if $v$ is internal). 
and for a concept $C$, by $\nd(C)$ the nesting depth of role restrictions of 
$C$. We extend $\dist$ to $\Delta'$ by setting $\dist(d)=\dist(h(d))$.
Recall that $n$ is the maximal nesting depth of any concept in $\Hmc$, and 
that our instruction insures that for any outgoing node, $\dist(v)\leq n$. 
Consequently, every path of of outgoing nodes has at most length $n-\dist(v)$. 
We show by structural induction that for every $C\in\sub{\Hmc}$ and 
$d\in\Delta'$, s.t. $(n-\dist(d))\leq\nd(C)$, 
$d\in C^\Imc$ iff $d^{\Imc_1}\in C^{\Imc_1}$.
We distinguish the 
cases based on the syntactical shape of $C$.
\begin{itemize}
 \item For $C=\top$, the claim follows trivially. 
 \item For $C=A\in\NC$, we note that by construction, 
$\typeI{d}{\Imc}\cap\Sigma=\typeI{d^{\Imc_1}}{\Imc_1}\cap\Sigma$, which implies 
that $d\in A^\Imc$ iff $d^{\Imc_1}\in A^{\Imc_1}$.
 \item For $C=\neg D$, by inductive hypothesis, $d\in D^\Imc$ iff 
$d^{\Imc_1}\in D^{\Imc_1}$, so that also $d\in(\neg D)^\Imc$ iff 
$d^{\Imc}\in(\neg D)^{\Imc_1}$.
 \item Let $C=C_1\sqcap C_2$. We have $d\in(C_1\sqcap C_2)^\Imc$ iff $d\in 
C_1^\Imc$ and $d\in C_2^\Imc$. By inductive hypothesis, this holds iff 
$d^{\Imc_1}\in C_1^{\Imc_1}$ and $d^{\Imc_1}\in C_2^{\Imc_2}$, which holds iff 
$d^{\Imc_1}\in (C_1\sqcap C_2)^{\Imc_1}$.
 \item Finally, let $C=\exists r.D$. 
If $d\in C^{\Imc}$, there must be $\tup{d,e}\in r^{\Imc}$, which 
by~\ref{i:roles} means that $\tup{h(d),\typeI{d}{\Imc},r,h(e)}\in \Rf$. By 
construction, we have $\tup{h(d),f(d),r,h(e)}\in \Rf_1$.
which means by construction of $\IntAbs$ that we have 
$\tup{d^{\Imc_1},e^{\Imc_1}}\in r^{\Imc_1}$. Note that $\dist(e)=\dist(d)+1$, 
and $\nd(D)=\nd(C)-1$, so that we can apply the inductive hypothesis. We obtain
$e^{\Imc_1}\in D^{\Imc_1}$, and thus $d^{\Imc_1}\in(\exists r.D^{\Imc_1})$. 

Now assume $d^{\Imc_1}\in C^{\Imc_1}$. By construction, we then must have 
$\tup{h(d),\typeI{d^{\Imc_1}}{\Imc_1},r,v'}\in\Rf_1$ s.t. $e\in C^{\Imc_1}$ for 
all 
$\tup{\Imc_1,e}\in I(v')$, and by construction, we have 
$\tup{h(d),\typeI{d}{\Imc},r,v'}\in\Rf$. By \ref{i:roles}, there must then be 
$e\in\Delta'$ s.t. $h(e)=v'$. Since thus $\tup{\Imc_1,e^{\Imc_1}}\in I(v')$, we 
obtain that $e^{\Imc_1}\in D^{\Imc_1}$, which by inductive hypothesis means that 
$e^\Imc\in D^\Imc$. It follows that $d\in C^\Imc$.
\end{itemize}
It follows now in particular that for any $a\in\ind{\Hmc\cup\Kmc}$ and 
$C\in\Hmc$, $a^{\Imc_1}\in C^{\Imc_1}$ iff $a^{\Imc}\in C^\Imc$. Since 
$\Imc_1\in\AllModels$, we have $\Imc_1\models\Hmc$, and thus for all 
$C(a)\in\Hmc$, $\Imc_1\models C(a)$. As a 
consequence, we obtain $\Imc\models C(a)$, and thus $\Imc\in\IntAbs$ and $\Imc$ 
is abstracted also by $\IntAbs_1$.
\comment{

\todo[inline]{Baustelle beginnt.}

For every $C\in\sub{\Hmc}$, define 
$H_C:\Delta'\rightarrow 2^{T_{\Kmc\cup\Phi}}$ by
\[
 H_C(d)=\{t\in\lambda_1(h(d))\mid 
  C\in t\text{ iff }C\in\typeI{d}{\Imc}\}  
\]
We show by structural induction on $C$ that for every 
$C\in\sub{\Hmc}$ and $d\in\Delta'$, 
$\typeI{d}{\Imc}\in H_C(d)$.
\begin{itemize}
 \item For $C=\top$, this trivially holds since $\top$ is included in every 
type.
 \item Consider $C=A$. Because $\sig{\Hmc}\subseteq\Sigma$ and 
$A\in\sig{\Hmc}$, $A\in\Sigma$. By construction of $\lambda$, for every 
$t'\in\lambda(h(d))$, $A\in t'$ iff there exists $t\in\lambda_1(h(d))$ with 
$A\in t$. 
 \item For $C=\neg D$, we note that for every type $t\in T$, $C\in t$ iff 
$D\not\in t$. The claim thus follows directly from the induction hypothesis.
\end{itemize}

Induction on concepts $C\in\sub{\Hmc}$. For every $C\in\sub{\Hmc}$, define 
$H_C:\Delta'\rightarrow 2^{T_{\Kmc\cup\Phi}}$ by
\[
 H_C(d)=\{t\in\lambda(h(d))\mid 
 \exists t'\in\lambda_1(h(d)), \text{ s.t. }
 \sub{C}\cap t=\sub{C} t'
\]
We show by structural induction on $C$ that for every 
$C\in\sub{\Hmc}$ and $d\in\Delta'$, 
$\typeI{d}{\Imc}\in H_C(d)$.
\begin{itemize}
 \item For $C=\top$, this trivially holds since $\top$ is included in every 
type.
 \item Consider $C=A$. Because $\sig{\Hmc}\subseteq\Sigma$ and 
$A\in\sig{\Hmc}$, $A\in\Sigma$. By construction of $\lambda$, for every 
$t'\in\lambda(h(d))$, $A\in t'$ iff there exists $t\in\lambda_1(h(d))$ with 
$A\in t$. 
 \item For $C=\neg D$, we note that for every type $t\in T$, $C\in t$ iff 
$D\not\in t$. It follows that for any two types $t_1,t_2\in T$, 
$t_1\cap\sub{C}=t_2\cap\sub{C}$ iff $t_1\cap\sub{D}=t_2\cap\sub{D}$, so that  
$H_C(d)=H_D(d)$. Thus, $\typeI{d}{\Imc}\in H_C(d)$ follows by inductive 
hypothesis.
 \item For $C=C_1\sqcap C_2$, we note that since $C_1$, $C_2\in\sub{C_1\sqcap 
C_2}$, $H_{C}(d)\subseteq H_{C_1}(d)\cup H_{C_2}(d)$. For every type $t$, $C\in 
t$ iff $C_1$, $C_2\in t$, 
 
 then $H_C(d)=H_{C_1}(d)\cap 
H_{C_2}(d)$, and thus by inductive assumption, $\typeI{d}{\Imc}\in H_C(d)$.
 \item if $C=\exists r.D$, 
\end{itemize}

We show that for every 
$d\in\Delta'$, $\typeI{d}{\Imc}\in\lambda_1(h(d))$. Assume by contrary that 
there exists $d\in \Delta'$ s.t. 
$\typeI{d}{\Imc}\in\lambda(h(d))\setminus\lambda_1(h(d))$, and assume that 
for every other $d'\in\Delta'$ with that, $h(d')$ has a distance at least as 
high as $h(d)$ from the next named node. 
\begin{itemize}
 \item Assume $h(d)$ is itself a named node. 
 \item Assume $h(d)$ is not a named node.
\end{itemize}
\todo[inline]{Baustelle endet.}
\todo[inline]{Alternative plan: show by induction that $\Imc$ must be a model 
of $\Hmc$, and thus $\Imc\in\AllModels$}
}
\comment{

Since 
$\tup{a^\Jmc,b^\Jmc}\in r^\Jmc$ for every $\Jmc\in\Imc$, we also have 

We first show that
\begin{enumerate}
 \item\label{enum1:alc-conform} $\IntAbs$ is \ALC-conform,
 \item\label{enum1:sig-complete} $\IntAbs$ is $\sig{\Amc}$-complete, and
 \item\label{enum1:explains} $\IntAbs$ explains $\Phi$. 
\end{enumerate}
For~\ref{enum1:alc-conform}, we note that~\ref{d:roles} holds by construction.
\todo[inline]{\ref{enum1:sig-complete}/\ref{d:sig-complete} requires more work 
in the construction of $\IntAbs$ (just enforce by construction?)}

Regarding~\ref{enum1:explains}, assume $\Imc$ is a model of $\Kmc$ 
that is abstracted by $\IntAbs$ via the function $h:V'\rightarrow\Delta^\Imc$. 
\todo[inline]{Figure out how to show that then, $\Imc\in\AllModels$.}

One can further show that every model of $\Kmc$ that 
is abstracted by $\IntAbs$ is also a model of $\Amc$.
\todo[inline]{This definitely needs a written out proof for both claims. For 
the first, 
we should directly use the mapping $I$. For the other direction, induction on 
role depth?}
}
\end{proof}

\LemIntAbsSize*
\begin{proof}
To show that $\IntAbs$ can be reduced into an 
interpretation abstraction of the required size, we use the facts that
\begin{enumerate}[label=\textbf{n\arabic*}]
 \item\label{en:unconnected} nodes that have no connection to a named 
node can be removed,
 \item\label{en:internal} we can identify internal nodes with the same label,
  \item\label{en:branch} if there are two role assignments $\tup{v_1,t,r,v_2}$, 
$\tup{v_1,t,r,v_2'}\in \Rf$ and $\lambda(v_2)=\lambda(v_2')$, then the latter 
can be removed, and
 \item\label{en:length} the length of any path from or to a named node 
that 
does not contain another named node can be restricted to 
$\ell$.%

\end{enumerate}

Point~\ref{en:unconnected} does not require further explanation, while the 
others need more of an argument. 

For~\ref{en:internal}, assume $v_1,v_2\in V$ are two internal nodes s.t. 
$\lambda(v_1)=\lambda(v_2)$. Define a mapping 
$m:V\rightarrow(V\setminus\{v_2\})$ by setting $m(v)=v$ for $v\neq v_2$ and 
$m_1(v_2)=v_1$. We construct a new interpretation abstraction 
$\IntAbs_1=\tup{V_1,\lambda_1,s,\Rf_1,F_1}$ by setting
\begin{itemize}
 \item $V_1=V\setminus\{v_2\}$,
 \item $\lambda_1(v)=\lambda(v)$ for all $v\in V_1$,
 \item $\Rf_1=\{\tup{m(v),t,r,m(v')}\mid \tup{v,t,r,v_1}\in \Rf\}$, and
 \item $F_1=F\cap V_1$.
\end{itemize}
One easily verifies that $\IntAbs_1$ satisfies 
Conditions~\ref{d:acyclic}--\ref{d:signature-edges}, and thus is is 
$\ALC$-conform and $\Sigma$-complete. To show that it explains $\Phi$, we 
need to show that $\IntAbs_1$ abstracts at least one model of $\Kmc$, and every 
model of $\Kmc$ abstracted by $\IntAbs_1$ entails $\Hmc$. Let $\Imc$ be a model 
of $\Kmc$ abstracted by $\IntAbs$ via $h:\Delta'\rightarrow V$. Let $a_{v_1}$ be 
such that $s(a_{v_1})=v_1$, and $a_{v_2}$ 
be such that $s(a_{v_2})=v_2$. We construct a new interpretation $\Imc_1$ which 
has as domain $\Delta^{\Imc_1}=\{d\in\Delta^\Imc\mid h(d)\neq v_2\}$.
Define a mapping 
$m:\Delta^\Imc\rightarrow\Delta^{\Imc_1}$ by setting $m(d)=d$ if 
$d\not\in\Delta'$ or $h(d)\neq v_2$, and $m(d)=a_{v_1}^\Imc$ otherwise.
Construct $\Imc_1$ as 
\begin{itemize}
 \item $A^{\Imc_1}=A^\Imc\cap\Delta^{\Imc_1}$ for all $A\in\NC$, and
 \item $r^{\Imc_1}=\{\tup{m(d),m(e)}\mid \tup{d,e}\in r^\Imc\}$ for all 
$r\in\NR$.
\end{itemize}
The transformation is type-preserving, and since $\Imc\models\Kmc$, so does 
$\Imc_1$. We can obtain a function 
$h':(\Delta'\cap\Delta^{\Imc_1})\rightarrow V'$ witnessing abstraction by 
$\IntAbs_1$ by setting $h'(d)=h(m(d))$, for which 
Conditions~\ref{i:individuals}--\ref{i:roles} are easily verified. Consequently, 
$\IntAbs_1$ abstracts a model of $\Kmc$. Now let $\Imc$ be a model 
of $\Kmc$ abstracted by $\IntAbs_1$. By duplicating the individual $d$ s.t. 
$h(d)=v_1$ to an individual $d'$ with $h(d')=v_2$, we can in a similar fashion 
construct a $\Imc_1$ that is abstracted by $\IntAbs$, and that entails 
$\Kmc\cup\Phi$ iff so does $\Imc$. 

It follows that every model of $\Kmc$ abstracted by $\IntAbs_1$ 
entails $\Hmc$, so that $\IntAbs_1$ explains $\Phi$.~\ref{en:branch} can be 
shown in the same way.

For Point~\ref{en:length}, we only 
consider the case of a path from an named node. The other case is 
similar. Assume there is such a path
\[
 v_1, t_1, r_1, v_2, t_2, r_2, \ldots, r_{n-1}, t_{n-1} v_n
\]
where $n>\ell$. We show that $\IntAbs$ can then be modified in a way that 
preserves all properties required by the Lemma so that this path becomes 
shorter. We ignore the types for now, and consider $\pi=v_1,r_1$, 
$\ldots$, $r_{n-1}$, $v_n$. 
We 
say that an interpretation 
$\Imc$ abstracted by $\IntAbs$ via $h$ \emph{implements $\pi$} iff 
there exists $d_i\in\Delta'$ with $h(d_i)=v_i$ for all $i\in\iinv{1}{n}$ s.t.
$\tup{d_i,d_{i+1}}\in r_i$ for all $i\in\iinv{1}{n-1}$. If there is no 
interpretation $\Imc$ that implements $\pi$, let $i$ be the largest element in 
$\iinv{1}{n}$ s.t. the path $v_1$, $r_1$, $\ldots$, $r_{i-1}$, $v_i$ is 
implemented. Clearly, we can then remove all role assignments 
$\tup{v_i,t,r,v_{i+1}}$, since they do not affect the interpretations 
abstracted by $\IntAbs$. Otherwise, assume the interpretation $\Imc$ implements 
$\pi$, and $h$ is the function by which $\Imc$ is abstracted by $\IntAbs$.
For $i\in\iinv{1}{n}$, let $d_i\in\Delta'$ be s.t. $h(d_i)=v_i$. Because 
of~\ref{i:roles} in Definition~\ref{def:interpretation-abstraction}, for every 
$i\in\iinv{1}{n-1}$ there exists $\tup{v_i,\typeI{d_i}{\Imc}, r_i, v_{i+1}}\in 
\Rf$, so that we can assume that $t_i=\typeI{d_i}{\Imc}$.
Because there are at most $\ell$ pairs $\tup{t,T}$ of a type a set of 
types, by the pigeon hole principle, there must be $i,j\in\iinv{1}{n}$ s.t. 
$i<j$, $\lambda(v_i)=\lambda(v_j)$ and $t_i=t_j$. We modify $\Rf$ so that it 
``skips'' the elements between $v_i$ and $v_j$, by replacing 
$\tup{v_{i-1},t_{i-1},r_{i-1},v_i}$ with $\tup{v_{i-1},t_{i-1},r_{i-1},v_{j}}$.

Denote by $\IntAbs_1=\tup{V_1,\lambda_1,s,\mathfrak{R}_1}$ the result of 
applying 
this operation. %
$\IntAbs_1$ is $\ALC$-conform 
and $\Sigma$-complete if 
so is $\IntAbs$. We show that furthermore, $\IntAbs_1$ still explains $\Phi$, 
that is, that for every 
model $\Imc_1$ of $\Kmc$ is abstracted by $\IntAbs$, we have 
$\Imc_1\models\Phi$. 
\todo[inline]{Not quite: also has to have a model with $\Kmc$!}
Let $\Imc_1$ be a model of $\Kmc$ that 
is abstracted by $\IntAbs_1$ via the mapping 
$h_1:\Delta'\rightarrow V_1$. If $\Imc_1$ does not use the new role 
assigment $\tup{v_{i-1},t_{i-1},r_{i-1},v_{j}}$, then it is also abstracted by 
$\IntAbs$. Otherwise, there are $d_{i-1}',d_{j}'\in\Delta'$ s.t. 
$h_1(d_{i-1}')=v_{i-1}$, $h_1(d_{j}')=v_j$ and $\tup{d_{i-1}',d_j'}\in r_{i-1}$
Based on $\Imc_1$ and the model $\Imc$ used before, we construct a model 
$\Imc_2$ that is abstracted by 
$\IntAbs$ s.t. $\Imc_2\models\Kmc$ and $\Imc_2\not\models\Phi$.  

Intuitively, we adapt $\Imc_1$ %
by putting the path connecting $d_i$ and $d_j$ in $\Imc$ between $d_{i-1}$ and 
$d_i$.
Specifically, we define $\Imc_2$ as follows:
\begin{align*}
 \Delta^{\Imc_2}=&\Delta^{\Imc_1}\cup\Delta^{\Imc}\setminus\{d_j'\}, \\
        A^{\Imc_2}=&A^{\Imc_1}\cup A^{\Imc}
        \text{ for every }A\in\NC,\\
        r^{\Imc_2}=&\{\tup{d_1,d_2}\in r^{\Imc_1}\mid d_1\neq 
            d_j'\text{ or }d_2\neq d_j'\}\\
 &\cup r^{\Imc}\\
 &\cup\{\tup{d_1,d_i}\mid \tup{d_1,d_j'}\in\Imc_1\}\\
 &\cup\{\tup{d_j,d_2}\mid \tup{d_j',d_2}\in\Imc_2\}
  \text{ for every }r\in\NR,\\
        a^{\Imc_2}&=a^{\Imc_1} \text{ for every }a\in\NI .
\end{align*}
Because $\typeI{d_j}{\Imc}=\typeI{d_j'}{\Imc_1}$,
the construction is type-preserving, which implies $\Imc_2\models\Kmc$, but 
also $\Imc_2\not\models\Phi$. Furthermore, $\Imc_2$ is abstracted by $\IntAbs$. 
\end{proof}

\LemIntAbsToABox*
\begin{proof}
For every internal node $v\in V$, let $a_v$ be the individual s.t. $s(a_v)=v$.
We assign to every node $v\in V$ and $i\geq 0$ a 
depth-bounded concept $C_v^{(i)}$, and to every $v\in V$ a concept $C_v$. 
$C_v^{(0)}$ is defined by
\[
    C_v^{(0)}=\bigsqcup\limits_{t\in\lambda(v)}\left(\bigsqcap_{A\in 
t\cap\Sigma\cap\NC}A
    \sqcap\bigsqcap_{A\in (\Sigma\cap\NC)\setminus t}\neg A\right)
\]
For $i>0$, if $v\in F$, $C_v^{(i)}=C_v^{(0)}$, and otherwise
\begin{align}
 C_v^{(i)}&=\bigsqcup\limits_{t\in\lambda(v)}\Big(
 \bigsqcap_{A\in t\cap\Sigma\cap\NC}A
    \sqcap\bigsqcap_{A\in (\Sigma\cap\NC)\setminus t}\neg A
 \\
 &\sqcap\bigsqcap\{\exists r.C_{v'}\mid \tup{v,t,r,v'}\in\mathfrak{R}, v'\text{ 
is outgoing}\}\\
 &\sqcap\bigsqcap_{r\in\NR\cap\Sigma}\forall r.\left(\bigsqcup
 \{C_{v'}^{(i-1)}\mid\tup{v,t,r,v'}\in\mathfrak{R}\}\right) 
\end{align}
We then set $C_v=C_v^{(n)}$, where $n$ is the maximal path length of any path 
of outgoing edges 
As the outgoing nodes cannot form a cycle due~\ref{d:acyclic}, the induction on 
the existential role restrictions always terminates. The depth-bound is used to 
ensure that the induction also terminates for the universal restrictions. These 
universal restrictions are used to restrict the possible successors in 
accordance with~\ref{i:roles}, without conflicting with the internal nodes which 
may form cycles.
 
$\Hmc$ contains the concept assertion $C_v(a_v)$ for every 
internale node $v$, and the role assertion $r(a_v,a_{v'})$ for 
every $\tup{v,t,r,v'}\in\mathfrak{R}$ for which $v$ and $v'$ are internal.
One can verify that $\Hmc$ satisfies the size restrictions given in the lemma.

It remains to show that every model of $\Kmc$ abstracted by $\IntAbs$ is a 
model of $\Hmc$, and that every model of $\Kmc\cup\Hmc$ is abstracted by 
$\IntAbs$.

Assume $\Imc$ is abstracted by $\IntAbs$ via 
$h:\Delta'\rightarrow V$. We 
show by induction on the nesting depth that for every node $v$, 
$h(v)\in C_v^\Imc$: For 
$v\in F$, by~\ref{i:type}, we have $\typeI{h(v)}{\Imc}\in\lambda(v)$, and thus 
\[h(v)\in 
\left(\bigsqcap_{A\in 
t\cap\Sigma\cap\NC}A\sqcap
\bigsqcap_{A\in(\Sigma\cap\NC)\setminus t}
\neg A\right)^\Imc.
\]
For the induction step, assume that for every role successor $v'$ of $v$ we 
have
$h(v')\in C_{v'}^\Imc$. It then follows directly from the definition that then 
$h(v)\in C_{v}^\Imc$. Consequently, $\Imc\models C(a_v)$ for every 
$C(a_v)\in\Hmc$. Let $r(a_v,a_{v'})\in\Hmc$. By 
Condition~\ref{d:unique-roles}, because $v$ and $v'$ are internal, we have 
$\tup{v,t,r,v'}\in\mathfrak{R}$ for every type $t\in\lambda(v)$. It follows 
by~\ref{i:roles} that $\tup{h(v),h(v')}\in r^\Imc$, and thus $\Imc\models 
r(a_v,a_{v'})$. We have shown that $\Imc\models\Hmc$.

Now assume $\Imc\models\Hmc$. We recursively build a function 
$h:\Delta'\rightarrow V$ that satisfies 
conditions~\ref{i:individuals}--\ref{i:roles}, and thus witnesses that $\Imc$ 
is abstracted by $\IntAbs$. For each internal node $v$, we have a named 
individual $a_v$ and we set $h(v)=a_v^\Imc$. Outgoing nodes are assigned 
inductively based on the definition of $C_v$. If $h(d)=v$ and $d\in 
C_v^\Imc$, there is a disjunct $C_{v,t}$ of $C_v$ s.t. $d\in C_{v,t}^\Imc$. If 
$C_{v,t}^\Imc$ has as conjunct
$\exists r.C_{v'}$, then there must be $\tup{d,d'}\in r^\Imc$ s.t. $d'\in 
C_{v'}^\Imc$, and we set $h(d')=v'$. 
As a result, we obtain a mapping $h:\Delta'\rightarrow V$ for some 
$\Delta'\subseteq\Delta^\Imc$ s.t. $h(d)\in C_{h(d)}^{\Imc}$. We show that this 
implies 
$\typeI{d}{\Imc}\in\lambda(h(d))$. Because $d\in C_{h(d)}^{\Imc}$, there exists 
$t\in\lambda(h(d))$ s.t. 
\[d\in%
\left(\bigsqcap_{A\in 
  t\cap\Sigma\cap\NC}A
  \sqcap\bigsqcap_{A\in (\Sigma\cap\NC)\setminus t}\neg A\right)^\Imc, 
\]
which 
means that $\typeI{d}{\Imc}\cap\Sigma=t\cap\Sigma$. By~\ref{d:signature-nodes}, 
we then have $\typeI{d}{\Imc}\in\lambda(h(d))$. We obtain that $h$ 
satisfies~\ref{i:type}. \ref{i:individuals} is satisfied by construction
and~\ref{i:roles} is a consequence of the fact 
that $\typeI{d}{\Imc}\in\lambda(h(d))$ for every $d\in \Delta'$. We have shown 
that $\Imc$ is abstracted by $\IntAbs$.
\end{proof}

\comment{
\TheDecideComplexALCAbduction*
\begin{proof}
We use a similar approach as for Theorems~\ref{the:compute-flat} 
and~\ref{lem:elbot-decide-complex} to compute interrpretation abstractions that 
correspond to supersets of the hypotheses generated by the proof in 
Lemma~\ref{lem:int-abs-size}. 1) we compute the set $\textbf{T}$ of all subsets 
$T\subseteq T_{\Kmc\cup\Phi}$ that satisfy Condition~\ref{d:signature}, 2) we 
non-deterministically select one such set $T_a$ for each individual name 
$a\in\ind{\Kmc\cup\Phi}$, for which we choose the node $v_a$ with $s(a)=v_a$ 
and set $\lambda(v_a)= T_a$.
3) we generate the set of remaining nodes to choose from: specifically, one 
node $v_{T}$ for each type set $T\in\textbf{T}$ to be used as 
\emph{internal node}, and one node $v_{E,i,T}$ for 
every $E\in\ind{\Kmc\cup\Phi}\cup\textbf{T}$, $i\in\iinv{1}{\ell}$ and 
$T\in\textbf{T}$, where $\ell$ is the constant from~\ref{lem:int-abs-size}. We 
respectively set $\lambda(v_T)=T$ and $\lambda(v_{E,i,T})=T$ for those nodes.

\begin{itemize}
 \item Guess subset of structure
 \item Guess assignment of types and check local consistency of types -> 
successful: consistent!
 \item Guess assignment of types and check whether it gives a model of $\Kmc$ 
and not of $\Phi$
successful: not an explanation
\end{itemize}

\comment{
4) we add all possible role assignments $\tup{v_1,t,r,v_2}$ to $R$ that satisfy 
the following
\begin{itemize}
 \item $t\in v_1$,
 \item if $v_1$ is of the form $v_{E,i,T}$, then $v_2$ is of 
        the form $v_{E,i+1,T'}$,
 \item if $v_1$ is of the form $v_{E_1}$ and $v_2$ is of the form 
$v_{E_2,i,T'}$, then $E_1=E_2$ and $i=1$
 \item they satisfy~\ref{d:unique-roles}, where all nodes of the form $v_T$ and 
$V_a$ are treated as internal nodes, and the remaining nodes as outgoing nodes
 \item there exists $t'\in \lambda(v_2)\cap\sucCand{T_{\Kmc\cup\Phi}}{t}{r}$
\end{itemize}

Denote the resulting structure by $\IntAbs_O$.
It can be shown similarly as in Lemma~\ref{lem:elbot-decide-complex} that any 
hypothesis processed by the procedures in the proofs for 
Lemma~\ref{lem:abox-to-int-abs} and~\ref{lem:int-abs-size} results in an 
interpretation abstraction $\IntAbs$ that can be embedded in some result 
computed by the above algorithm, depending on the non-deterministic choice 
performed in Step~2. This means, if there exists a hypothesis for the abduction 
problem, then we produce an interpretation abstraction that explains $\Phi$. 
The construction furthermore ensures Conditions~\ref{d:acyclic} 
and~\ref{d:unique-roles} and~\ref{d:signature} are satisfied, so that, using 
Lemma~\ref{lem:int-abs-to-abox}, we can generate an ABox that satisfies 
Conditions~\ref{itm:signature},~\ref{itm:consistency} and~\ref{itm:entailment}. 
\todo[inline]{Might require additional argument for~\ref{itm:consistency}}.
To determine whether $\IntAbs_O$ is such a candidate, we have to check whether 
$\IntAbs_O$ abstracts any models of $\Kmc$, and whether $\IntAbs_O$ explains 
the hypothesis. For this, we iterate over the (double exponentially bounded) 
different ways of assigning to each named node $v_a$ a type 
$t_a\in\lambda(v_a)$. For each assignment, we perform the following operation 
exhaustively, each time starting from the original $\IntAbs_O$ and creating a 
new interpretation abstraction $\IntAbs$, in order to  determine 
whether 1) it corresponds to a model of $\Kmc$, and 2) it entails $\Phi$, 
\begin{itemize}
 \item $\lambda(v_a):=\{t_a\}$
 \item for every $v\in V$, remove from $\lambda(v)$ any type $t$ s.t. $\exists 
r.C\in t$ and there is no $\tup{v_1,t,r,v_2}\in R$ st. 
$C\in\lambda(v_2)\cap\sucCand{T_{\Kmc\cup\Phi}}{t}{r}$,
 \item for every $\tup{v_1,t,r,v_2}\in R$, where $v_1$ and $v_2$ are internal,
 remove from $\lambda(v_2)$ any type $t_2$ s.t. for no type 
$t_1\in\lambda(v_1)$, $t_2\in\sucCand{T_{\Kmc,\cup\Phi}}{t}{r}$. 
\end{itemize}
Denote the result by $\IntAbs$.

\textbf{Claim 1.}\emph{$\IntAbs_O$ abstracts a model of $\Kmc$ if 
$\IntAbs$ is such that 1) for no internal node, $\lambda(v)=\emptyset$, 2) for 
every $C(a)\in\Kmc$, $C\in t_a$, and 3) for every $r(a,b)\in\Kmc$, 
$t_b\in\sucCand{T}{t_a}{r}$.}
\emph{Proof of claim.}
We can continue the process to step-wise

\textbf{Claum 2.}\emph{If $\IntAbs_O$ abstracts some model of $\Kmc$, and the 
result of the above transformation is such that 1) for every $C(a)\in\Phi$, and 
$t\in\laambda(v_a)$, $C\in t$ and 2) for every $r(a,b)\in\Phi$, there is 
$\tup{v_a,t,r,v_b}\in R$, then $\IntAbs_O$ explains $\Phi$.}
\emph{Proof of claim.}

To determine whether $\IntAbs_O$ explains the hypothesis, 

Every model of $\Kmc$ is a model of $\Phi$
Every model abstracted: either no model of $\Kmc$ or a model of $\Phi$ 
Some model of $\Kmc$ exists

 Given an interpretation abstraction $\IntAbs$ for $\AbductionProblem$, we can 
determine whether it corresponds to a hypothesis as follows:
\begin{enumerate}
 \item it has to be $\ALC$-conform, 
 \item it has to be $\Sigma$-complete,
 \item\label{itm:abs-entailment} for every $\Kmc$ model that is abstracted by 
$\IntAbs$, $\Kmc\models\Phi$.
\end{enumerate}
We can ``read off'' Point~\ref{itm:abs-entailment} from $\IntAbs$ as follows: 
To make sure that every model abstracted by $\IntAbs$ is a model of $\Kmc$, we 
need to require 1) for every $A(a)\in\Kmc$ and every $t\in\lambda(s(a))$, $A\in 
t$, and 2) for every $r(a,b)\in\Kmc$ and every $t\in\lambda(s(a))$, 
$\lambda(s(b))\subseteq\sucCand{T_{\Kmc\cup\Phi}}{t}{r}$ (note that 
Condition~\ref{d:only-good-roles} is not sufficient here, since $R$ only 
considers roles in $\Sigma$). Similarly, it can be tested whether 
$\Imc\models\Phi$ for every interpretation $\Imc$ abstracted by $\IntAbs$. 
Consequently, in order to verify whether a given interpretation abstraction 
$\IntAbs$ explains $\Phi$, we remove all types for every $a\in\ind{\Kmc}$ 
all types from $\lambda(s(a))$ that invalidate 1) or 2), and check whether for 
the result, 1) for every $A(a)\in\Phi$ and every $t\in\lambda(s(a))$, $A\in t$, 
and 2) for every $r(a,b)\in\Phi$, either $r(a,b)\in\Kmc$ or 
$\tup{s(a),t,r,s(b)}\in R$ for every $t\in s(b)$. Since this operation can be 
performed in deterministic polynomial time, we obtain that for a given 
interpretation abstraction $\IntAbs$, we can decide in time polynomial in 
$\IntAbs$ whether $\IntAbs$ explains $\Phi$. 
\todo{Think procedure is more involved: for the internal nodes, we have to 
guess a type each. For the external ones though, it should work as described}
\todo{Better would be a formal proof.}

\begin{itemize}
 \item Compute set of all $\Sigma$-complete type sets,
 \item guess a set $T$ of types for every individual $a\in\ind{\Kmc\cup\Phi}$, 
and add a node $v_a$ with $\lambda(v_a)=T$ and $s(a)=v_a$,
 \item for the internal nodes (which may include cycles) we assign one node 
$v_T$ for every type set $T\subseteq T_{\Kmc\cup\Phi}$,
 \item for the external nodes (which cannot contain cycles), we add a node 
$v_{E,i,T}$ for every $E\in\ind{\Kmc\cup\Phi}\cup 2^{2^T_{\Kmc\cup\Phi}}$, 
$i\in\iinv{1}{2^{2^n}}$ and $T\subseteq T_{\Kmc\cup\Phi}$ (put in the correct 
bound on the nesting depth)
 \item add all possible role assignments $\tup{v_1,t,r,v_2}$ satisfying the 
following conditions 
 \begin{itemize}
    \item $t\in v_1$,
    \item they satisfy Conditions~\ref{d:unique-roles},\ref{d:some-good-roles} 
and~\ref{d:only-good-roles}, where all nodes of the form $v_T$ and $s(a)$ are 
treated as internal nodes, and the remaining nodes as outgoing nodes)
    \item if $v_1$ is of the form $v_{E,i,T}$, then $v_2$ is of the form 
$v_{E,i+1,T'}$,
    \item if $v_1$ is of the form $v_{E_1}$ and $v_2$ is of the form 
$v_{E_2,i,T'}$, then $E_1=E_2$ and $i=1$
 \end{itemize}
 \item Assign to $F$ all nodes of the form $v_{E,n,T}$ where $n=2^{2^n}$ (add 
correct bound)
\end{itemize}

Alternative characterisation of interpretation abstractions:
\begin{itemize}
 \item Problem:~\ref{d:some-good-roles} and~\ref{d:signature} conflict
 \item Ideas:
 \begin{itemize}
    \item relax~\ref{d:some-good-roles} to: for every $v\in V$, 
$t\in\lambda(v)$ and $\exists r.C\in t$, there exists $\tup{v,t,r,v'}$ s.t. 
for \textbf{some} $t'\in\lambda{v'}$, $C\in t'$. 
    \begin{itemize}
        \item changes characterisation: we can now not just pick any successor 
when instatiating abstraction, and might even have to pick several
        \item Lemma~11 won't work anymore
        \item Lemma 2 requires Lemma 11 - but maybe we can adapt?
        \item also breaks I3: we may now need several domain elements for each 
node
    \end{itemize}
    \item change node labeling: add ``must concepts'' that must be included in 
the types
    \begin{itemize}
        \item if must concepts empty, then type must be representable using 
sigma-concepts (former characterisation) 
        \item otherwise: must concepts determine successors
    \end{itemize}
 \end{itemize}
\end{itemize}

Alternative characterisation of Sigma completeness:
\begin{itemize}
 \item For every outgoing node $v$,
\end{itemize}

 Essentially we can now extend the algorithm used for Lemma~\ref{lem:}
 }
\end{proof}
}

\TheDecideComplexALCAbduction*
\begin{proof}
\todo[inline]{Add details?}
\comment{
We first guess a set $V_I$ of internal nodes, together with an assignment 
$\lambda_I:V_I\rightarrow 2^{T_{\Kmc\cup\Phi}}$ and a set of role assignments 
between those nodes. If we are not introducing fresh individuals, then this 
guessing step is replaced by a deterministic double exponential number of 
iterations corresponding to the different possible assignments 
$\ind{\Kmc\cup\Psi}\rightarrow 2^{T_{\Kmc\cup\Phi}}$. Otherwise, we use 
double exponential time on a non-deterministic Turing machine. 

As next step, we generate the outgoing nodes in the following way: for every 
node $v$ that has a distance of less than $\ell$, where $\ell$ is as in 
Lemma~\ref{lem:int-abs-size}, to the next internal node  (including the 
internal nodes), every $t\in\lambda(v)$ and every $\exists r.C\in t$, we add a 
new node $v'$ for which $\lambda(v)$ contains all types $t'\in 
T_{\Kmc\cup\Phi}$ for which there 
exists $t\in\sucCand{T_{\Kmc\cup\Phi}{t}{r}}$ s.t. $C\in t$ and $\Sigma\cup$

to every 
node $v$ and $t\in\lambda(v)$ s.t. $\exists r.C\in\lambda$
}
We guess an $\ALC$-conform, $\Sigma$-complete 
interpretation abstraction $\IntAbs$ within the size bounds 
of Lemma~\ref{lem:int-abs-size}. Because we can reuse outgoing nodes with the 
same label, it suffices to guess one such 
interpretation abstraction that is of at most double exponential size, similar 
as we did for Lemma~\ref{lem:elbot-decide-complex}.
We then verify it 
abstracts some model of $\Kmc$ by guessing for each node $v$ a type 
$t\in\lambda(v)$ and checking whether those types can be implemented by a model 
of $\Kmc$. This check is possible in polynomial time by just considering a 
node, its assigned type, and its successors. 
Finally, we verify that it explains the observation by making 
another such guess to verify whether there exists a model of $\Kmc$ s.t. 
$\Kmc\not\models\Phi$.
    \todo[inline]{Now: case without fresh individual names!}
\end{proof}

\section{Size-Restricted Abduction}

Regarding the upper bounds, only the argument for flat solutions in $\ALCI$ 
requires some additional argument.
\begin{lemma}
 Existence of hypotheses for size restricted $\ALCI$ abduction problems can be 
decided in $\NExpTime^\NP$.
\end{lemma}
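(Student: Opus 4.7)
The plan is to guess a flat hypothesis $\Hmc$ of size at most $n$ in non-deterministic exponential time and then discharge the three conditions via one deterministic pre-computation and two NP-oracle calls. Since $n$ is binary-encoded, $n\leq 2^{\size{\text{input}}}$, so $\Hmc$ contains at most exponentially many assertions over at most exponentially many individuals (from $\ind{\Kmc\cup\Phi}$ together with fresh ones).

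First, I would deterministically compute the type set $T_{\Kmc\cup\Phi}$ in exponential time via type elimination, exactly as in the proof of Theorem~\ref{the:compute-flat}. Then non-deterministically guess the flat ABox $\Hmc$ with $\size{\Hmc}\leq n$, and verify~\ref{itm:signature} syntactically. The key observation that makes the type set reusable is that, because $\Hmc$ is flat and $\sig{\Hmc}\subseteq\Sigma$, every assertion in $\Hmc$ only involves atomic concept/role names, so $\sub{\Kmc\cup\Hmc\cup\Phi}=\sub{\Kmc\cup\Phi}$ and the same set $T_{\Kmc\cup\Phi}$ suffices for reasoning about $\Kmc\cup\Hmc$.

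For \ref{itm:consistency}, I would use the standard type-based characterization: $\Kmc\cup\Hmc$ is $\ALCI$-consistent iff there exists a selector function $s:\ind{\Kmc\cup\Hmc}\to T_{\Kmc\cup\Phi}$ compatible with $\Kmc\cup\Hmc$, meaning $A\in s(a)$ for each $A(a)\in\Kmc\cup\Hmc$ and $s(b)\in\sucCand{T_{\Kmc\cup\Phi}}{s(a)}{r}$ for each $r(a,b)\in\Kmc\cup\Hmc$. Correctness is by the model construction from Claim~1 in the proof of Theorem~\ref{the:compute-flat}, with $\Hmc_s$ replaced by the arbitrary flat $\Hmc$. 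Existence of such an $s$ is a non-deterministic polynomial-time check in the query length (guess $s$, verify compatibility assertion-by-assertion), so a single NP-oracle call on the exponentially-sized input decides~\ref{itm:consistency}.

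For \ref{itm:entailment}, the analogous extension of Claim~3 in the proof of Theorem~\ref{the:compute-flat} shows $\Kmc\cup\Hmc\models\Phi$ iff (i)~$r(a,b)\in\Kmc\cup\Hmc$ for every $r(a,b)\in\Phi$, and (ii)~no selector function $s_e:\ind{\Kmc\cup\Hmc}\to T_{\Kmc\cup\Phi}$ compatible with $\Kmc\cup\Hmc$ satisfies $C\notin s_e(a)$ for some $C(a)\in\Phi$. Part~(i) is a syntactic scan, while part~(ii) is again an NP check whose negation we query and invert before acceptance. The whole algorithm is non-deterministic exponential-time with two NP-oracle queries, placing the problem in $\NExpTime^\NP$. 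The main obstacle is adapting Claims~1 and~3 of Theorem~\ref{the:compute-flat} from the canonical $\Hmc_s$ to an arbitrary flat $\Hmc$; the arguments transfer almost verbatim, using that flatness together with $\sig{\Hmc}\subseteq\Sigma$ prevents $\Hmc$ from introducing any new subconcepts and hence from enlarging the relevant type set.
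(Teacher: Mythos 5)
Your proposal is correct and follows essentially the same route as the paper: deterministically compute $T_{\Kmc\cup\Phi}$ by type elimination, guess an exponential-size flat hypothesis (exponential because the bound is binary-encoded), and discharge the remaining conditions via NP-oracle queries over selector functions, reusing Claims~1 and~3 from the proof of Theorem~\ref{the:compute-flat}. You are in fact slightly more explicit than the paper in spelling out the separate oracle call for Condition~\ref{itm:consistency} and in justifying why the type set $T_{\Kmc\cup\Phi}$ remains adequate for an arbitrary flat $\Hmc$.
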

\begin{proof}
 Let $\AbductionProblem=\tup{\Kmc,\Phi,\Sigma,k}$ be the size-restricted 
abduction problem. We modify the $\coNExpTime$-procedure in the proof for 
Theorem~\ref{the:compute-flat} as follows: after we computed in 
deterministic exponential time the set $T_{\Kmc\cup\Phi}$ of types, we guess a 
flat ABox 
$\Hmc$ of size at most $k$ s.t. for every $r(a,b)\in\Phi$, $r(a,b)\in\Hmc$. 
Since $k$ is encoded in binary, $\Hmc$ may be up to exponential in size. To 
decide whether $\Hmc$ is a hypothesis, we use the $\NP$-oracle to decide whether 
for every selector function $s_e:\ind{\Hmc}\rightarrow T_{\Kmc\cup\Phi}$, 
either $s_e$ is not 
compatible with $\Kmc\cup\Hmc$, or for some $C(a)\in\Phi$, $C\not\in s_e(a)$. 
This can be easily verifed by a non-deterministic, polynomially time bounded 
oracle which takes as input the pairs $\tup{T_{\Kmc\cup\Phi},\Hmc}$, guesses 
the function~$s_e$, and accepts if $s_e$ is compatible with $\Kmc\cup\Hmc$ and 
for some 
$C(a)\in\Phi$, $C\not\in s_e(a)$.
\end{proof}

\LemELMinimal*
\begin{proof}
 Let $\phi$, $\Kmc$, $\Phi$ and $\Sigma$ be as in the proof sketch of the main 
text. We need to show that $\tup{\Kmc,\Phi,\Sigma}$ has a hypothesis of size at 
most $2m$ iff $\phi$ is satisfiable.  If $\phi$ is satisfiable by a truth 
assignment $as:\{p_1,\ldots,p_m\}\rightarrow\{0,1\}$, we can build a 
hypothesis of size $2m$ by 
setting
\[
\Hmc=\{\textsf{True}(p_i)\mid a(p_i)=1\}\cup\{\textsf{False}(p_i)\mid 
as(p_i)=0\}.
\]
Now assume $\tup{\Kmc,\Phi,\Sigma}$ has a hypothesis $\Hmc$ of size at most 
$2m$. For every 
variable $p_i$, the only way to entail $P(p_i)$ using only names from $\Sigma$ 
is to add either 
$\textsf{True}(p_i)$ or $\textsf{False}(p_i)$ to the hypothesis. Each such 
assertion adds 2 to the size of $\Hmc$, and 
as $\Hmc$ is not allowed to have a size greater than $2m$, it must thus contain 
exactly one such 
axiom for each variable. The satisfying truth assignment 
$as:\{p_1,\ldots,p_m\}\rightarrow\{0,1\}$ 
is then obtained by setting $as(p_i)=0$ if $\textsf{False}(p_i)\in\Hmc$ and 
$as(p_i)=1$ if 
$\textsf{True}(p_i)\in\Hmc$. Since $\Kmc\cup\Hmc\models C(c_i)$ for every clause 
$c_i$, we can see 
that $as$ makes every clause in $\phi$ true, and is thus a satisfying 
assignment.
\end{proof}

\LemELBotMinimal*
\begin{proof}
We first give a more formal definition of the exponential tiling problem.
  The problem is given by a tuple $\tup{T,T_I,t_e, 
V,H,n}$ of a set $T$ of tile types, a sequence $T_I=t_1\ldots t_m\in T^*$ of 
initial tiles, a 
final tile $t_e$, vertical and horizontal tiling conditions $V,H\subseteq 
T\times T$, and a number $n$ in binary encoding. We call a mapping 
\[
    f:\iinv{1}{2^n}\times\iinv{1}{2^n}\rightarrow T
\]
a \emph{tiling}, which is 
\emph{valid} iff $f(2^n,2^n)=t_e$, for all $i\in\iinv{1}{m}$, $f(i,1)=t_i$, and
for all 
$i\in\iinv{1}{2^n-1}$ and 
$j\in\iinv{1}{2^n}$, 
\[\tup{f(i,j),f(i+1,j)}\in H\]
and 
\[\tup{f(j,i),f(j,i+1)}\in 
V.\]

Based on $T$, $H$, $V$ and $n$, we construct an $\ELbot$ KB $\Kmc$, an ABox 
$\Phi$, and a 
signature $\Sigma$ s.t. $\tup{\Kmc,\Phi,\Sigma}$ has an ABox hypothesis of size 
at most
\[
k=2\cdot (2^{2n}-m)+3\cdot 2\cdot(2^{2n}-2^n) +2
\]
iff the tiling problem has a solution.

We define the signature as 
  \[\Sigma=\{\Start,x,y\}\cup\{A_t\mid t\in 
    T\},\]
where the 
concept name \Start will mark the first tile, the role names $x$ and $y$ 
will denote the 
horizontal and vertical neighbourhood in the tiling, and each tile type $t\in 
T$ 
has a 
corresponding concept name $A_t$. The observation is 
$\Phi=\{\End(a)\}$. 

$\Kmc$ uses 
additional concept names $C_1$, $\overline{C}_1$, $\ldots$, $C_n$, 
$\overline{C}_n$ for 
$C\in\{X,Y\}$ to encode a binary counter for both coordinates, and a special 
concept name $B$ that 
will ensure that every tile is represented in the hypothesis. Together with a 
tile, the concept name 
\Start initialises the counter:
  \[
  \Start\sqcap A_t \sqsubseteq \overline{X}_1\sqcap 
\ldots\sqcap\overline{X}_n
  \sqcap \overline{Y_1}\sqcap\ldots\sqcap\overline{Y_n}
  \quad \text{ for every }t\in T.
  \]
  To entail the observation, both counters must reach $2^n$ on an individual 
name marked with $B$ and some tile type:
  \[
  X_1\sqcap\ldots\sqcap X_n\sqcap
  Y_1\sqcap\ldots\sqcap Y_n\sqcap
  B\sqcap
  A_e
  \sqsubseteq\End %
  \]
  Every domain element represents at most one counter value:
  \[
   C_i\sqcap\overline{C}_i\sqsubseteq\bot \quad\text{ for every 
}i\in\iinv{1}{n}, C\in\{X,Y\}.
   \]
   The counter values are incremented along the $x$ and $y$-predecessors using 
the following 
axioms for $\tup{C,c}\in\{\tup{X,x},\tup{Y,y}\}$:
   \begin{align*}
       \exists c.(\overline{C}_i\sqcap C_{i-1}\sqcap\ldots C_1) &\sqsubseteq 
C_i 
       &&\text{ for }i\in\iinv{1}{n}\\
       \exists c.(C_i\sqcap C_{i-1}\sqcap\ldots\sqcap C_1) &\sqsubseteq 
\overline{C}_i 
       &&\text{ for }i\in\iinv{1}{n}\\
       \exists c.(\overline{C_i}\sqcap\overline{C}_j) &\sqsubseteq 
\overline{C}_i 
       &&\text{ for }i,j\in\iinv{1}{n},j<i\\
       \exists c.(C_i\sqcap\overline{C}_j) &\sqsubseteq Z_i 
       &&\text{ for }i,j\in\iinv{1}{n},j<i.
   \end{align*}
   The vertical and horizontal tiling conditions are ensured using the 
following 
axioms:
   \begin{align*}
     \exists x.A_t\sqcap A_{t'}\sqsubseteq \bot &\text{ for every 
}\tup{t,t'}\in(T\times T)\setminus H \\
     \exists y.A_t\sqcap A_{t'}\sqsubseteq \bot &\text{ for every 
}\tup{t,t'}\in(T\times T)\setminus V
   \end{align*}
The initial tilings from $T_I$ are specified as follows
\begin{align*}
 \Start&\sqsubseteq I_1 \\
 I_i&\sqsubseteq \forall x.I_{i+1} \qquad &&\text{ for }i\in\iinv{1}{m-1}\\
 I_i&\sqsubseteq A_{t_i} \qquad &&\text{ for }i\in\iinv{1}{m}
\end{align*}

   Finally, the special concept name $B$ is used to ensure that every 
hypothesis 
contains an individual name for every tile. This name is initialised by the 
individual satisfying \Start, and then propagated in $x$ and $y$ 
direction, provided that both the $x$ and the $y$ successor satisfy it, or one 
of the counter values is $0$ (in which case we do not require a successor in 
the 
corresponding direction) 
   \begin{align*}
     \Start&\sqsubseteq B \\
     \overline{X}_1\sqcap\ldots\sqcap\overline{X}_n\sqcap \exists y.(A_t\sqcap 
B)&\sqsubseteq B
     \quad &&\text{ for every }t\in T\\
     \overline{Y_1}\sqcap\ldots\sqcap\overline{Y_n}\sqcap \exists x.(A_t\sqcap 
B)&\sqsubseteq B
     \quad &&\text{ for every }t\in T\\
     \exists x.(A_t\sqcap B)\sqcap\exists y.(A_{t'}\sqcap B)&\sqsubseteq B
     \quad &&\text{ for every }t,t'\in T
   \end{align*}
   This completes the construction.

   First, assume the tiling problem has a solution $f$. Based on $f$, we 
construct a hypothesis for $\tup{\Kmc,\Phi,\Sigma}$ of size $k$. We use 
$2^{2n}$ individual names $a_{i,j}$ assigned with coordinates 
$i,j\in\iinv{1}{2^n}$, where for convenience, we let $a_{2^n,2^n}=a$. The 
hypothesis is then:
   \begin{align*}
     \Hmc=&\big(\{A_t(a_{i,j})\mid f(i,j)=t\}
     \setminus\{A_{t_i}(a_{i,1})\mid 1\in\iinv{1}{m}\}\big)
     \\
     & \cup\{x(a_{i+1,j},a_{i,j})\mid i\in\iinv{1}{2^n-1},j\in\iinv{1}{2^n}\}\\
     & \cup\{y(a_{i,j+1},a_{i,j})\mid 
i\in\iinv{1}{2^n},j\in\iinv{1}{2^n-1}\}\\
     & \cup\{\Start(a_{1,1})\}
   \end{align*}
   It is standard to verify that $\size{\Hmc}=k$, $\Kmc\cup\Hmc\models 
B(a_{i,j})$ for every $i,j\in\iinv{1}{2^n}$, as well as $\Kmc\cup\Hmc\models 
\Phi$.

   Now assume $\tup{\Kmc,\Phi,\Sigma}$ has a hypothesis $\Hmc$ of size $k$. 
Since $\Kmc\cup\Hmc\models\Phi$, $\Hmc$ has to ensure the following:
   \begin{enumerate}
    \item $\Kmc\cup\Hmc\models (C_1\sqcap\ldots\sqcap C_n)(a)$ for both 
$C\in\{X,Y\}$. Since 
     $\Start$ is the only name in $\Sigma$ that can trigger the 
assignment of any counter 
     value, there must be some assertion $\Start(a_0)\in\Hmc$, as well 
as a path of length 
     $2^{n+1}$ along $x$ and $y$-successors connecting $a$ to $a_0$.
    \item $\Kmc\cup\Hmc\models B(a)$. While the assertions mentioned so far 
ensure 
     $\Kmc\cup\Hmc\models B(a_0)$, any element on the path connecting $a$ to 
$a_0$ must entail $B$ 
     as well. This means that either 1) their $X$-counter corresponds to $0$ 
and 
they have a 
     $y$-successor satisfying $B$ and $A_t$ for some $t\in T$, 2) their 
$Y$-counter corresponds 
     to $0$ and they have an $x$-successor satisfying $B$ and $A_t$ for some 
$t\in T$, or 3) 
     they have both an $x$-successor and a $y$-successor satisfying $B$ and 
some 
tile.
   \end{enumerate}
   These conditions, together with the fact that every individual can have only 
one counter value 
assigned, require that for every coordinate $\tup{x,y}\in 
\iinv{1}{2^n}\times\iinv{1}{2^n}$ there 
must exist at least one individual satisfying the corresponding counter values, 
for which $\Hmc$ 
explicitly states the tile type and its $x$ and $y$-successor. 

   Except for the $m$ initial tiles, we thus need one concept assertion per 
tile type, plus the ssertion $\Start(a_{1,1})$. Therefore, we need at least 
$2^{2n}-m+1$ concept assertions (for the 
tile types) and 
$2\cdot(2^{2n}-2^n)$ role assertions (for the neighbourhood). Each concept 
assertion has at least size $2$, and every role assertion has size 3, which 
means that 
\[\size{\Hmc}\geq 2\cdot (2^{2n}-m)+3\cdot 2\cdot(2^{2n}-2^n)-2=k.\] 
Since 
also $\size{\Hmc}\leq k$, we obtain that $\size{\Hmc}=k$, and that $\Hmc$ is 
the 
smallest ABox that satisfies all these conditions. We obtain that every 
coordinate $\tup{i,j}\in\iinv{1}{2^n}\times\iinv{1}{2^n}$ is represented by 
exactly one individual name which has a tile type assigned in $\Hmc$. As $\Kmc$ 
furthermore enforces that the tiling conditions are met, we obtain that $\Hmc$ 
can be used to construct a solution to the tiling problem.
\end{proof}
\todo[inline]{Can there be any problem if complex concepts are used?}

\newcommand{\Turing}{\mathfrak{T}}

\LemNewTiling*
\begin{proof}
 We show that the $\NExpTime^\NP$ tiling problem is $\NExpTime^\NP$ hard by a 
reduction from the word acceptance problem for non-deterministic exponentially 
time bounded Turing machines with access to an $\NP$-oracle.

For convenience, we assume the Turing machine $\Turing$ to have a sligthly 
different 
behaviour than usual: while there is a special state $q_?$ to trigger oracle 
queries, we do not use dedicate states for the query outcome. Instead, when in 
query state $q_?$, the machine immediately fails if the oracle $\Turing_o$ 
would accept the 
word, and otherwise goes into another state as specified by the transition 
relation. This is 
without loss of generality, as we can simulate regular oracle calls as follows: 
1) guess the outcome of the oracle call, 2) if the guess is yes, verify this by 
performing the computation in $\Turing$, 3) if the guess is no, verify this 
using the oracle $\Turing_o$. 

Let 
\[
 \Turing_o=\tup{Q_o,\Sigma_o,\Gamma_o,\blank,q_{o0},F_o,\Delta_o}
\]
be the non-deterministic Turing machine to be used by the oracle, where 
\begin{itemize}
 \item $Q_o$ are the states,
 \item $\Sigma_o$ is the input alphabet
 \item $\Gamma_o$ with $\Sigma_o\subseteq\Gamma_o$ is the tape alphabet
 \item $\blank\in(\Gamma_o\setminus\Sigma_o)$ is the blank symbol,
 \item $q_{o0}\in Q_o$ is the initial state,
 \item $F_o$ contain the accepting states, and
 \item $\Delta_o\subseteq Q_o\times\Gamma_o\times 
Q_o\times\Gamma_o\times\{-1,+1\}$ is the transition function, where a tuple 
$\tup{q_1,a_1,q_2,a_2,D}$ states that, when the machine is in state $q_1$ and 
reads an $a_1$ at the current tape position, then it moves to state $q_2$, 
writes $a_2$ at the current tape position, and moves the tape head according to 
the value of $D$.
\end{itemize}
The non-deterministic oracle Turing machine using this oracle is defined as
\[
    \Turing=\tup{Q,\Sigma,\Gamma,\blank,\Gamma_o,q_0,F,\Delta,q_?}
\] 
where
\begin{itemize}
\item $Q$ are the states, 
\item $\Sigma$ is the input alphabet,
\item $\Gamma$ with $\Sigma\subseteq\Gamma$ is the tape alphabet, 
\item $\blank\in(\Gamma\setminus\Sigma)$ is again the blank symbol
\item $\Gamma_o$ is the oracle input tape alphabet,
\item $q_0$ is the initial state, 
\item $F\subseteq Q$ is the set of accepting states, 
\item $\Delta\subseteq ((Q\times\Gamma\times\Gamma_o)\times
(Q\times \Gamma\times\{-1,0,+1\}\times\Gamma_o\times\{-1,0,+1\}\big)
$ is the transition relation, (which works on two tapes, the standard tape and 
the oracle tape), and
\item $q_?\in (Q\setminus F)$ is
the query state to query the oracle.
\end{itemize}

We further assume $\Turing_o$ to be polynomially bounded, that is, there exists 
a polynomial $p_1$ s.t. for input words $w$, each run on $\Turing_o$ uses at 
most $p_1(\lvert w\rvert)$ steps, and $\Turing$ to be 
exponentially bounded, that is, there exists a polynomial $p_2$ s.t. for 
input words $w$, every run of $\Turing$ requires at most $2^{p_2(\lvert 
w\rvert)}$ steps. Note that as a result, we can bound both the tape and the 
length of runs on both Turing machines by $2^{p(\lvert w\rvert)}$ for some 
polynomial~$p$.

Given $\Turing$, $\Turing_o$ and the input word $w=a_1,\ldots,a_n$, we 
construct a $\NExpTime^\NP$-tiling problem. Without loss of generality, we 
assume $n>2$. Our set $T$ of tile types contains the special tiling type $t_e$ 
that also serves as final tile, and all other tile types are pairs 
$t=\tup{t_1,t_2}$, where
\[t_1\in Q\times\Gamma\times\{l,h,r,-,0\},\] 
and 
\[t_2\in Q_o\times\Gamma_o\times\{l,h,r,-,0\}.\] 
The two components of the 
tile 
type are used to define the different tape contents, the first component being 
the tape of $\Turing$, and the second the tape of $\Turing_o$. For a tile type 
$t=\tup{t_1,t_2}$, $t_1=\tup{q,a,D}$ denotes that $\Turing$ is in state $q$, 
the current tape cell stores $a$, and $D$ refers to the relative position to 
the tape head: \emph{l}eft, \emph{h}ere, \emph{r}ight, or elsewhere (-). The 
value of $D=0$ is furthermore used to mark the first configuration. 

The initial tiles are then
\[
 T_I=\tup{t_1^{(1)},t_2^{(1)}}, \ldots \tup{t_1^{(n+1)}, t_2^{(n+1)}}
\]
where the first components are defined by
\begin{itemize}
 \item $t_1^{(1)}=\tup{q_0, a_1, h}$,
 \item $t_1^{(2)}=\tup{q_0, a_2, l}$,
 \item $t_1^{(i)}=\tup{q_0, a_i, 0}$ for $i\in\iinv{3}{n}$, and
 \item $t_1^{(n+1)}=\tup{q_0,\blank,0}$,
\end{itemize}
and the second components are defined by 
\begin{itemize}
 \item $t_2^{(1)}=\tup{q_{o0},\blank,h}$,
 \item $t_2^{(2)}=\tup{q_{o0},\blank,l}$, and
 \item $t_2^{(i)}=\tup{q_{o0},\blank,0}$ for $i\in\iinv{3}{n+1}$.
\end{itemize}
The horizontal tiling condition $H_1$ contains all tuples 
$\tup{\tup{t_1,t_2},\tup{t_1',t_2'}}$, s.t. for $i\in\{1,2\}$, 
$t_i=\tup{q,a,D}$ and $t_i'=\tup{q',a',D'}$:
\begin{itemize}
    \item $q=q'$,
    \item if $D=0$ and $a=\blank$, then $D'=D$ and $a'=\blank$,
    \item if $D=l$, then $D'=h$,
    \item if $D=h$, then $D'=r$,
    \item if $D=r$, then $D'\in\{0,-\}$, and
    \item if $D=-$, then $D'\in\{0,-,l\}$.
\end{itemize}

This makes sure that in the first row of the tiling, all tiles following the 
initial tiles represent the situation where both tapes have a blank symbol, and 
that in every row, the state associated to a tile must be the same.

The vertical tiling condition $V_1$ encodes the transitions from one 
configuration to another. $V_1$ contains all tuples 
$\tup{\tup{t_1,t_2},\tup{t_1',t_2'}}\in V_1$, where for $i\in\{1,2\}$, 
$t_i=\tup{q,a,D}$ and $t_i=\tup{q',a',D}$, we have
\begin{itemize}
 \item $D'\neq 0$,
 \item if $D\neq h$, then $a=a'$,
 \item for $i=2$, $q=q'=q_{o0}$,
 \item for $i=1$, $\Turing$ has a transition from $q$ to $q'$
 \item if $D=h$, then the transition from $q$ to $q'$ reads $a$ and writes $a'$ 
(on the normal tape if $i=1$, and the oracle tape if $i=2$),
 \item if $D=l$, then $D'=h$ if the transition moves the head on the 
corresponding tape to the left, and otherwise \mbox{$D'=-$},
 \item if $D=r$, then $D'=h$ if the transition moves to the right, and 
otherwise $D'=-$, and
 \item if $D=h$, then $D'=r$ if the transition moves to the left, and $D'=l$ if 
the transition moves to the right.
\end{itemize}
In addition, $V_1$ contains $\tup{\tup{t_1,t_2},t_e}$ for every $t_1$ 
with associated state $q_f\in F$, to 
make sure the final tile is reached once the Turing machine enters a final 
state. 

Intuitively, the tiling problem $\tup{H_1,V_1,T_I,t_e,p(n)}$ encodes the 
behavior of $\Turing$ without the oracle. More formally: if we consider the 
variant of $\Turing$ where $q_?$ acts just like a normal state, this 
Turing machine accepts $w$ iff the tiling problem has a solution.

To now take into account the oracle, it remains to specify the tiling 
conditions 
for $H_2$ and $V_2$, which are similar to those for $H_1$ and $V_1$. Note that 
we want the rows of the first tiling \emph{not} to be initial rows of a tiling 
for $\tup{H_2,V_2,f(i),t_e, p(n)}$, unless the encoded state is $q_?$ and 
they encode the inital configuration of an accepting run for $\Turing_o$. Since 
$\Turing_o$ has only one tape, $H_2$ and $V_2$ now only consider the state of 
the first component of the tile type, which has to be $q_?$, and otherwise only 
consider the second component. Regarding the second component, they are defined 
just like $H_1$ and $V_1$ accept that now, they consider transitions in 
$\Turing_o$, and modify the state in the second component and not in the first.
\end{proof}

\begin{figure*}
 \begin{framed}
  \begin{align}
     \textsf{Start}\sqcap A_t \sqsubseteq
        \bigsqcapn_{i=1}^{n}\big(
            \overline{X}_i\sqcap&\overline{Y_i}\sqcap\overline{Z}_i\big)
  && \text{ for every }t\in T
  \label{eq:tiling2-start}
  \\
  \bigsqcapn_{i=1}^{n}\left(X_i\sqcap Y_i\sqcap Z_i\right)\sqcap
  B_2\sqcap B_4
  &\sqsubseteq\textsf{End} && \text{ for every }t\in T
  \label{eq:tiling2-end}
  \\
  \label{eq:initial-tiles-start} 
  \Start&\sqsubseteq I_1 \\
   I_i&\sqsubseteq \forall x.I_{i+1} \qquad &&\text{ for }i\in\iinv{1}{m-1}\\
   I_i&\sqsubseteq A_{t_i} \qquad &&\text{ for }i\in\iinv{1}{m}
   \label{eq:initial-tiles-end}\\
   C_i\sqcap\overline{C}_i&\sqsubseteq\bot &&\text{ for every }i\in\iinv{1}{n}, 
C\in\{X,Y,Z\}
  \label{eq:tiling2-counting-start}
   \\
    \exists c.(\overline{C}_i\sqcap C_{i-1}\sqcap\ldots\sqcap C_1) 
&\sqsubseteq 
C_i 
    &&\text{ for }i\in\iinv{1}{n}, 
\tup{C,c}\in\{\tup{X,x},\tup{Y,y},\tup{Z,z}\}\\
    \exists c.(C_i\sqcap C_{i-1}\sqcap\ldots\sqcap C_1) &\sqsubseteq 
\overline{C}_i 
    &&\text{ for }i\in\iinv{1}{n}, 
\tup{C,c}\in\{\tup{X,x},\tup{Y,y},\tup{Z,z}\}\\
       \exists c.(\overline{C_i}\sqcap\overline{C}_j) &\sqsubseteq 
\overline{C}_i 
    &&\text{ for }i,j\in\iinv{1}{n},j<i, 
\tup{C,c}\in\{\tup{X,x},\tup{Y,y},\tup{Z,z}\}\\
       \exists c.(C_i\sqcap\overline{C}_j) &\sqsubseteq C_i 
    &&\text{ for }i,j\in\iinv{1}{n},j<i, 
\tup{C,c}\in\{\tup{X,x},\tup{Y,y},\tup{Z,z}\}
  \label{eq:tiling2-counting-end}
  \\
  \label{eq:tiling2-hidden-tiles-start}
     T^*&\sqsubseteq\bigsqcup_{t\in T} A_t^*\\
     A_t^*\sqcap A_{t'}^{*}&\sqsubseteq\bot 
     &&\text{ for every }t,t'\in T\text{ s.t. } t\neq t'\\
     \bigsqcapn_{i=1}^{n}\overline{Z}_i 
     \sqcap \bigsqcup_{t\in T} A_t^*
     &\sqsubseteq \bot \\
     \bigsqcup_{i=1}^{n}Z_i
     \sqcap\bigsqcup_{t\in T} A_t
     &\sqsubseteq \bot 
     \label{eq:tiling2-hidden-tiles-end}
    \\
    \textsf{Start}&\sqsubseteq B_1 
    \label{eq:tiling2-tile-everything-starts}
    \\
     B_1\sqcap (A_t\sqcup A_t^*) &\sqsubseteq B_2&&\text{ for every }t\in T\\ 
     (\bigsqcapn_{i=1}^n\overline{C}_i)\sqcup\exists c.B_2
     &\sqsubseteq  B_c 
     &&\text{ for every }\tup{C,c}\in\{\tup{X,x},\tup{Y,y},\tup{Z,z}\}
     \\
     B_x\sqcap B_y\sqcap B_z &\sqsubseteq B_1 
     \label{eq:tiling2-tile-everything-ends}
     \\
     \exists x.A_t\sqcap A_{t'}&\sqsubseteq \bot 
     &&\text{ for every }\tup{t,t'}\in(T\times T)\setminus H_1 
     \label{eq:tiling2-conditions-h1}
     \\
     \exists y.A_t\sqcap A_{t'}&\sqsubseteq \bot 
     &&\text{ for every }\tup{t,t'}\in(T\times T)\setminus V_1
     \label{eq:tiling2-conditions-v1}
     \\
     (\exists x.A_t\sqcap A_{t'})\sqcup
     (\exists x.A_t^*\sqcap A_{t'}^*)&\sqsubseteq B_3
     &&\text{ for every }\tup{t,t'}\in(T\times T)\setminus H_2 
     \label{eq:tiling2-conditions-h2}
     \\
     \exists z.A_t^*\sqcap A_{t'}^*&\sqsubseteq B_3
     &&\text{ for every }\tup{t,t'}\in(T\times T)\setminus V_1
     \label{eq:tiling2-conditions-v2}\\
     \exists x.B_3\sqcup\exists z.B_3&\sqsubseteq B_3
     \label{eq:tiling2-propagate-error}
     \\
     \bigsqcapn_{i=1}^n\left(X_i\sqcap Z_i\right)\sqcap (B_3\sqcup\neg 
A_{t_{e}})
     &\sqsubseteq B_4
     \label{eq:tiling2-all-tilings-fail}
  \end{align}
 \end{framed}
 \caption{Background knowledge for the reduction from the $\NExpTime^\NP$-tiling 
problem.}
 \label{fig:lower-complex-min-alc}
\end{figure*}

\LemALCMinimal*
\begin{proof}
 The background KB $\Kmc$ can be seen in 
Figure~\ref{fig:lower-complex-min-alc}, and works like it is described in the 
main text.
 The signature of abducibles is \[
\Sigma=\{\Start, x,y,T^*\}\cup\{A_t\mid t\in T\}
                                \]
 and the observation to be explained is $\End(a)$. We describe the different 
 parts of the knowledge base. In the following, we denote by ``base tiling'' 
the tiling on the bottom of the generated cube, which is going to be the 
solution of the tiling problem $\tup{T,T_I,t_e,V_1,H_1,n}$, while we call the 
other tilings $x\times z$-tilings.
 \begin{itemize}
    \item \eqref{eq:tiling2-start} puts the first tile and initialises all 3 
          counters.
    \item \eqref{eq:tiling2-end} gives the conditions for the observation to be 
          entailed: all counters must reach their maximum value, and 
additionally the concepts $B_2$ and $B_4$ must be satisfied. Intuitively, $B_2$ 
signals that every coordinate always has a tile associated, and $B_4$ that for 
none of the rows in the base tiling, a tiling compatible to $H_2$ and 
$V_2$ can be found.
    \item \eqref{eq:initial-tiles-start}--\eqref{eq:initial-tiles-end} make 
sure the initial tiles in $T_I=\{t_1,\ldots, t_m\}$ are placed.
    \item \eqref{eq:tiling2-counting-start}--\eqref{eq:tiling2-counting-end} 
implement the binary counters for the three coordinates.
    \item 
\eqref{eq:tiling2-hidden-tiles-end}--\eqref{eq:tiling2-hidden-tiles-end} 
specify the behaviour of the ``hidden tiles'' that should not be explicit in 
the hypothesis, to allow for different tilings to be tested in the models of 
the hypothesis: the abducible $T^*$ states the existence of a hidden tile, 
only one hidden tile can be used at the same time, and hidden tiles can only be 
used outside of the ground plane of the cube. 
    \item \eqref{eq:tiling2-tile-everything-starts}--%
\eqref{eq:tiling2-tile-everything-ends} 
ensure that every coordinate is assigned a (hidden or explicit) tile type. 
Specifically, $B_1$ is entailed at coordinates for which every preceeding 
coordinate has a tile type associated, $B_2$ is entailed if additionally, the 
current coordinate has a tile type associated, and for $c\in\{x,y,z\}$, $B_c$ 
states the $c$-coordinate has either value 0, or the next lower 
$c$-neighbour satisfies $B_2$. Consequently, we require either $\Start$ or 
$B_x\sqcap B_y\sqcap B_z$ to entail~$B_1$.
    \item \eqref{eq:tiling2-conditions-h1} and~\eqref{eq:tiling2-conditions-v1} 
are 
as in the proof for Lemma~\ref{lem:elbot-minimal}, and make sure that the 
base 
tiling does not break the tiling conditions $H_1$ and $V_1$.
    \item \eqref{eq:tiling2-conditions-h2} and~\eqref{eq:tiling2-conditions-v2} 
test for the tiling conditions on the hidden tilings, and mark errors using the 
concept name $B_3$, which is then propagated towards all $x$- and 
$y$-predecessors through~\eqref{eq:tiling2-propagate-error}.
    \item Finally,~\eqref{eq:tiling2-all-tilings-fail} makes sure that the 
concept name $B_4$---required for the observation---is propagated along the 
$y$-axis to check whether all tiling attempts on the different 
$H_2$-$V_2$-tilings fail.
 \end{itemize}
Similar as for Lemma~\ref{lem:elbot-minimal}, we can find a bound $k$ that 
ensures that for every coordinate at most one individual is used, so that 
hypotheses consists of assertions forming a 
cube, where the bottom side corresponds to a valid tiling, while the 
remaining coordinates have the concept name $T_*$ assigned. The observation 
$\End(a)$ can only be entailed by such a cube if the $\NExpTime^\NP$-tiling 
problem has a solution.
\end{proof}

\fi

\end{document}